%% file: main.tex
\PassOptionsToPackage{nosubfloats}{jmlrutils}
\documentclass[final,12pt]{colt2026}
\input{macro.tex}
\title[Sign-Based Optimizers Are Effective Under Heavy-Tailed Noise]{Sign-Based Optimizers Are Effective Under Heavy-Tailed Noise}
\usepackage{times}

\coltauthor{
    \Name{Dingzhi Yu}\textsuperscript{1,2} \Email{yudz@lamda.nju.edu.cn}\\
    \Name{Hongyi Tao}\textsuperscript{1} \Email{221220032@smail.nju.edu.cn}\\
    \Name{Yuanyu Wan}\textsuperscript{3} \Email{wanyy@zju.edu.cn}\\
    \Name{Luo Luo}\textsuperscript{4} \Email{luoluo@fudan.edu.cn}\\
    \Name{Lijun Zhang}\textsuperscript{1,2} \Email{zhanglj@lamda.nju.edu.cn}\\
    \addr \textsuperscript{1}State Key Laboratory of Novel Software Technology, Nanjing University\\
    \addr \textsuperscript{2}School of Artificial Intelligence, Nanjing University\\
    \addr \textsuperscript{3}School of Software Technology, Zhejiang University\\
    \addr \textsuperscript{4}School of Data Science, Fudan University
}

\begin{document}

\maketitle

\begin{abstract}%
  While adaptive gradient methods are the workhorse of modern machine learning, sign-based optimization algorithms such as Lion and Muon have recently demonstrated superior empirical performance over AdamW in training large language models (LLM). However, a theoretical understanding of why sign-based updates outperform variance-adapted methods remains elusive. In this paper, we aim to bridge the gap between theory and practice through the lens of heavy-tailed gradient noise, a phenomenon frequently observed in language modeling tasks. Theoretically, we introduce a novel generalized heavy-tailed noise condition that captures the behavior of LLMs more accurately than standard finite variance assumptions. Under this noise model, we establish sharp convergence rates of SignSGD and Lion for generalized smooth function classes, matching or surpassing previous best-known bounds. Furthermore, we extend our analysis to Muon and Muonlight, providing what is, to our knowledge, the first rigorous analysis of matrix optimization under heavy-tailed stochasticity. These results offer a strong theoretical justification for the empirical superiority of sign-based optimizers, showcasing that they are naturally suited to handle the noisy gradients associated with heavy tails. Empirically, LLM pretraining experiments validate our theoretical insights and confirm that our proposed noise models are well-aligned with practice.
\end{abstract}

\begin{keywords}%
  Heavy-tailed noise, sign-based methods, Lion and Muon, matrix optimizers, concentration inequalities, LLM pretraining %
\end{keywords}
\section{Introduction}

The success of large foundation models~\citep{devlin-etal-2019-bert,brown2020gpt3,achiam2023gpt4,touvron2023llama,touvron2023llama2,team2023gemini,guo2025deepseek} requires costly pretraining procedures that are supported by stochastic optimization algorithms~\citep{robbins1951stochastic}. Among them, adaptive gradient methods~\citep{duchi2011adaptive,kingma15adam} prevail, in which AdamW~\citep{loshchilov2019adamw} is the \emph{de facto} optimizer for training large language models (LLM). Recently, two representative sign-based optimizers, namely Lion~\citep{chen2023symbolic} and Muon~\citep{jordan2024muon} have received significant attention due to their consistent speedup over AdamW~\citep{zhaoICLR2025deconstructing,shah2025practical,wen2025fantastic,semenov2025benchmarking}, especially in scaling up both LLM pretraining~\citep{liu2025muon,team2025kimi,zeng2025glm,zeng2026glm,cheng2026ngram} and post-training~\citep{kimi2026kimi2.5}. Despite their huge empirical success, we still lack a complete theoretical understanding of why Lion and Muon outperform AdamW for training LLMs.

In this work, we systematically investigate this phenomenon with theoretical analysis and empirical study. The significant empirical evidence that initiates our work comes from the widely observed heavy-tailed distribution of the stochastic gradients
~\citep{simsekli19tailindex,zhang2020whyheavytail,gurbuzbalaban2021heavy,battash2024revisiting}, especially in language modeling tasks~\citep{kunstner2023noise,kunstner2024heavytail,ahn2024linearattention,kunstner2025scaling,yadav2025provable} due to Zipf's Law~\citep{piantadosi2014zipf}. In sharp contrast to the finite variance condition commonly employed in theoretical studies~\citep{ghadimi2013stochastic,lan2020first}, such a regime only assumes the stochastic noise has a finite $p$th moment where $p\in(1,2]$. Under this more realistic noise model, the challenges underlying both theoretical analysis and algorithmic design arise. For instance, SGD is known to diverge~\citep{zhang2020whyheavytail}, and the convergence theory collapses when $p<2$.\footnote{This does not contradict~\citet{fatkhullin2025can,liu2026old}, as they impose heavy-tailedness on the stochastic gradient itself. This is a strictly stronger condition than the heavy-tailed noise model considered in this work; see~\cref{sec:related-work} for details.} As for the more popular adaptive methods like Adam~\citep{kingma15adam} and AdaGrad~\citep{duchi2011adaptive}, \textit{no rigorous convergence theory has yet been established under heavy-tailed noise.} To make things worse,\textbf{\textit{~\citet{pmlr-v267-chezhegov25a} show that Adam and AdaGrad could suffer from worse convergence when the noise is heavy-tailed.}} Intriguingly, the empirical advantage of sign-based methods like Lion and Muon is most pronounced in LLM training regimes~\citep{liu2025muon,wen2025fantastic}, where heavy-tailed noise is prevalent. In contrast, this advantage diminishes when it comes to computer vision tasks~\citep{yuan25mars,liu2025mars}, which typically exhibit more concentrated gradient noise~\citep{zhang2020whyheavytail,zhou2020towards,gurbuzbalaban2021heavy,ahn2024linearattention}. Regarding this disparity, it is natural to ask the following question: 
\vspace{-0.5em}
\begin{center}
\setlength{\fboxsep}{3pt}
\shadowbox{%
\begin{minipage}[t]{0.9\columnwidth}
Can we obtain theoretical guarantees of sign-based optimizers to explain their practical effectiveness and justify their advantage over AdamW under heavy-tailed gradient noise?
\end{minipage}}
\end{center}
\vspace{-0.8em}
We answer the above question affirmatively by establishing new convergence theories for a series of representative vector- and matrix-based sign descent methods, including vector sign optimizer SignSGD~\citep{bernstein2018signsgd} and Lion~\citep{chen2023symbolic}, matrix sign optimizer Muon~\citep{jordan2024muon} and Muonlight~\citep{liu2025muon}. For a heavy-tailed index $p$ and an iteration number $T$, we show that these optimizers converge at $O(T^{-\frac{p-1}{3p-2}})$, matching or surpassing the previous state-of-the-art. Our convergence analysis is derived under a newly proposed generalized heavy-tailed noise condition, allowing the noise level to grow w.r.t.~the gradient norm, which is strictly weaker than existing assumptions and is further supported by empirical evidence from LLM pretraining. Furthermore, our theory holds for generalized smooth function classes~\citep{Zhang2020Why}, which have become a standard deployment in theory~\citep{zhang2020improved,chen2023generalized,pmlr-v195-faw23a,pmlr-v195-wang23a,Li23adam} and proven to align well with practice~\citep{riabinin2025gluon}. The key technical tools that facilitate our analysis are \textbf{new vector and matrix martingale concentration inequalities in non-Euclidean norms} (cf.~\cref{sec:concentration-inequality}), which may be of independent interest. Our results suggest that the sign operation, whether applied element-wise or via orthogonalization in the matrix case, acts as a natural robustifier against heavy-tailed stochasticity, thus making sign-based optimizers like Lion and Muon a natural fit for training LLMs. 

Our main contributions are summarized as follows.
\begin{itemize}[leftmargin=2.25em,itemsep=0.05em, topsep=0.01em]
    \item We propose a generalized heavy-tailed noise framework for both vector and matrix variables and derive sharp convergence rates of $O(T^{-\frac{p-1}{3p-2}})$ for SignSGD, Lion, Muon, and Muonlight.
    \item We characterize the mechanism by which the sign operator inherently handles heavy-tailed noise, providing a theoretical justification for their superiority over SGD or AdamW in LLMs.
    \item We conduct extensive LLM pretraining experiments on GPT2 to validate our theoretical findings and provide empirical evidence that our proposed noise model accurately reflects the heavy-tailed stochasticity encountered in practice.
\end{itemize}

\section{Related Work\label{sec:related-work}}

\paragraph{Heavy-tailed noise}
The seminal work of~\citet{zhang2020whyheavytail} theoretically explains why adaptive gradient methods like Clipped SGD~\citep{pmlr-v28-pascanu13} have a benefit for Attention models~\citep{NIPS2017transformer}. Inspired by~\citet{simsekli19tailindex}, they identify the heavy-tailed noise distributions in language modeling tasks, and establish an $O(T^{-\frac{p-1}{3p-2}})$ convergence for clipped SGD with a matching lower bound under the standard heavy-tailed gradient noise condition, where the noise is only assumed to have a finite $p$th moment where $1\le p\le 2$. After that, a series of works try to study the behavior of this heavy-tailed stochasticity~\citep{gurbuzbalaban2021heavy,kunstner2024heavytail,kunstner2025scaling}, or leverage this phenomenon to explain the empirical success of popular algorithms~\citep{zhou2020towards,yadav2025provable} or model architectures~\citep{ahn2024linearattention}. As for the theory community, heavy-tailedness has been widely adopted and investigated in multiple realms, such as learning theory~\citep{hsu2014heavy,NeurIPS:2018:Zhang:A}, online learning~\citep{zhang2022parameterfree,liu2026old}, bandits~\citep{bubeck2013bandits,ICML:2019:Lu,IJCAI:2020:Xue,UAI:2023:Gou,NeurIPS:2023:Xue,pmlr-v267-ye25e} and optimization~\citep{hodgkinson2021multiplicative,cutkosky2021high,vural2022mirror,liu2023breaking,liu2023stochastic,nguyen2023improved,pmlr-v202-sadiev23a,konilov2023accelerated,pmlr-v238-puchkin24a,pmlr-v235-gorbunov24a,ICML:2024:Liu,liu2025nonconvex,pmlr-v258-hubler25a,fatkhullin2025can,he2025complexity,liu2025stochastic,wang2026near,chen2026stability,wu2026optimal,liu2026clipped}.

\paragraph{Sign-based methods: vector optimizers}
The simplest signed gradient descent algorithm, namely SignSGD, is proposed by~\citet{bernstein2018signsgd}, where the authors establish convergence rates based on $\ell_1$-norm and discuss its possible complexity improvement over SGD.~\citet{bernstein2019signsgd} later shows that SignSGD with majority vote is communication efficient and robust to noise error. Further attempts have been made to improve the convergence of SignSGD for smooth functions~\citep{safaryan2021stochastic,sun2023momentum,jiang2025improved,kornilov2025signheavytail,JMLR:v26:24-0523}. While for non-smooth functions,~\citet{karimireddy2019error} provide a simple convex counterexample where SignSGD fails to converge, for which they utilize the error feedback techniques~\citep{seide20141} to alleviate the divergence.~\citet{yu2026stosignsgd} propose a new optimizer named StoSignSGD, provably fixing this non-convergence issue by injecting structural noise into the sign operator.~\citet{liu2019signsgd,petrov2025leveraging} combine signed gradient with zeroth-order optimization to reduce memory cost in LLM training. SignSGD has also received great attention due to its close relationship with Adam~\citep{balles2018dissecting}. Numerous efforts have been made to theoretically justify the success of Adam through the lens of SignSGD~\citep{crawshaw2022robustness,kunstner2023noise,peng2025simple}, and of course, study the effectiveness of SignSGD itself~\citep{balles2020geometry,liICLR2025on,ICLR2025adaptive,kunstner2025scaling}. On a parallel path,~\citet{tao2026when} leverage a complexity-theoretical framework to rigorously justify when and why SignSGD may have a provable complexity improvement over vanilla SGD by comparing the $\ell_1$-norm lower bound of SGD to the upper bound of SignSGD. Another important breakthrough is the Lion optimizer, proposed by~\citet{chen2023symbolic} via symbolic search. By incorporating two momentum buffers, it achieves significant empirical success~\citep{zhaoICLR2025deconstructing}. Likewise, there exists a rich body of literature that tries to explain its effectiveness~\citep{chenICLR2024lion,elistratov2024lion,dong2024convergence,jiang2025lion,sfyraki2025lions}.

\paragraph{Sign-based methods: matrix optimizers}
The Muon algorithm is introduced by~\citet{jordan2024muon}, which can be regarded as matrix sign descent. The matrix sign operation $\msign{\cdot}$ is implemented via Newton--Schulz iteration~\citep{kovarik1970some,bjorck1971iterative}. Built upon~\citet{jordan2024muon},~\citet{liu2025muon} incorporate Nesterov momentum and learning rate alignment techniques into the original version of Muon. The modified algorithm Muonlight, has been applied to train massive-scale LLMs~\citep{liu2025muon,zeng2025glm,team2025kimi,kimi2026kimi2.5} and exhibits clear performance gains over AdamW~\citep{wen2025fantastic,semenov2025benchmarking,shah2025practical,ahn2025dion,ahn2025dion2}. Many works have tried to theoretically analyze Muon (or its modified versions), or empirically explain its practical efficiency~\citep{li2025note,shen2025convergence,chang2025convergence,si2025adamuon,sfyraki2025lions,chen2025muon,huang2025limuon,li2025normuon,qian2025muon,tveit2025muon,page2025muonall,mehta2025muon,vasudeva2025muon,vasudeva2025the,frans2025really,pan2025unbiased,wang2025muon,zhang2025provable,zhang2025adagrad,su2025isotropic,crawshaw2025exploration,ma2026preconditioning,du2026newton}.

\paragraph{How to handle heavy-tailed stochasticity?}
It is well-known that incorporating clipping or normalization into the vanilla SGD provably helps to tackle heavy-tailed noise~\citep{zhang2020whyheavytail,cutkosky2021high,liu2025nonconvex,pmlr-v258-hubler25a,JMLR:v26:24-1991,he2025complexity}.~\citet{pmlr-v267-chezhegov25a} recently prove that Adam and AdaGrad might converge poorly in high probability with the presence of heavy-tailed gradient noise, while their clipped versions effectively fix this issue. Another important line of work focuses on vanilla versions of classic algorithms.~\citet{liu2026old} studies SGD~\citep{zinkevich2003oco}, Dual Averaging~\citep{nesterov2009primal}, and AdaGrad~\citep{duchi2011adaptive} from the perspective of online convex optimization. They show that these algorithms can converge optimally under heavy-tailedness.~\citet{fatkhullin2025can} independently derive the same conclusion for SGD. However, a distinct difference between~\citet{fatkhullin2025can,liu2026old} and our work is that they assume the stochastic gradient itself has a $p$th moment, while we make the assumption on the gradient noise. The former is strictly stronger than the latter\footnote{It is also well-known that if the assumption is exerted on the gradient itself, then it would be more natural to consider non-smooth optimization.}, and thus, should be generally regarded as an orthogonal line of work.

\section{Vector Sign Optimizer: SignSGD and Lion\label{sec:vector-sign}}

In this section, we first introduce some notations and assumptions, and then present the convergence guarantee. Proofs of the main theorems are deferred to~\cref{sec:vector-sign-analysis}.

\subsection{Notations and Assumptions\label{sec:vector-assumption}} 

We write $[T]$ for $\{1,2,\dots,T\}$, $\abs{\x}$ for element-wise absolute value of $\x\in\R^d$, $\odot$ for element-wise product between two vectors. The $\lb$-weighted vector norm for $\lb\in\R_{+}^{d}$ is defined as $\norm{\x}_{\lb}^{2}:=\x^{\top}\diag{\lb}\x$. For vector descent methods, we study the optimization problem $\min_{\x\in\R^d}f(\x)$, where $f:\R^d\mapsto\R$ is differentiable. Given a point $\x\in\R^d$, we can only access the gradient $\nabla f(\x)=[\nabla_1f(\x),\cdots,\nabla_df(\x)]\in\R^d$ in a noisy manner, as later specified in~\cref{ass:unbiased}. Below, we list some necessary assumptions.

\renewcommand{\theass}{\arabic{ass}a}
\renewcommand{\theHass}{vec.\arabic{ass}a}
\setcounter{ass}{0} 

\begin{ass}[Lower bounded objective]\label{ass:non-convexity}
    The objective function $f$ is possibly non-convex, almost everywhere twice-differentiable, and bounded from below: $f^*:=\inf_{\x\in\R^d} f(\x)>-\infty$.
\end{ass}

\begin{ass}[$(\lb_0,\lb_1)$-vector smoothness]\label{ass:generalized-smooth}
    There exists non-negative vectors $\lb_0=[\lb_{0,1},\cdots,\lb_{0,d}]\allowdisplaybreaks\\\in\R^d_+$ and $\lb_1=[\lb_{1,1},\cdots,\lb_{1,d}]\in\R^d_+$ such that for all $\norm{\x^\prime-\x}_{\infty}\le1/\norm{\lb_1}_\infty$, it holds that 
    \begin{align}\label{eq:vector-smooth}
    \abs{f(\x^\prime)-\brac{f(\x)+\inner{\nabla f(\x)}{\x^\prime-\x}}}\le\frac{1}{2}\norm{\x^\prime-\x}_{\lb_0+\lb_1\odot\abs{\nabla f(\x)}}^2.
    \end{align} 
\end{ass}
\cref{ass:non-convexity} is standard and necessary for stochastic non-convex optimization~\citep{arjevani2023lower}. We further introduce twice-differentiability as a regularity condition coupled with our smoothness model (see \cref{lem:smooth-grad-sign} for details). \cref{ass:generalized-smooth} is new, which characterizes a class of generalized smooth functions originally proposed in~\citet{Zhang2020Why}. Note that when $\lb_1=\0$,~\cref{ass:generalized-smooth} degenerates to the \emph{coordinate-wise smoothness} condition widely employed in the pioneering works of SignSGD~\citep{bernstein2018signsgd,bernstein2019signsgd,safaryan2021stochastic}, which naturally aligns with sign descent methods due to their coordinate-wise nature~\citep{balles2018dissecting,balles2020geometry}. When $\lb_1\succ\0$,~\cref{ass:generalized-smooth} can be viewed as an extension of the \emph{coordinate-wise smoothness} assumption, which aligns closely with practice~\citep{crawshaw2022robustness,liu2025adagrad}. In general,~\cref{ass:generalized-smooth} is not only weaker than previous assumptions, but also a better fit for sign gradient descent. Detailed discussions are postponed to~\cref{sec:discussion-gs}.

\begin{ass}[Unbiasedness]\label{ass:unbiased}
At step $t$ we observe a mini-batch of mutually independent gradients $G_{t}=\{\g_{t}^{1},\cdots,\g_{t}^{B}\}$ satisfying $\E\sqbrac{\g_t^b|\F_{t-1}}=\nabla f(\x_t),\forall b\in[B]$ where $\mathcal{F}_{t}=\sigma(G_{1},\dots,G_{t})$ denotes the natural filtration.
\end{ass}

\begin{ass}[$(\bsigma_0,\bsigma_1)$-vector heavy-tailed noise]\label{ass:heavy-tailed-noise}
    There exists $p\in(1,2]$, $\bsigma_0=[\bsigma_{0,1},\cdots,\allowdisplaybreaks\\\bsigma_{0,d}]\in\R^d_+$ and $\bsigma_1=[\bsigma_{1,1},\cdots,\bsigma_{1,d}]\in\R^d_+$ such that 
    \begin{align*}
        \E\sqbrac{\left.\abs{\g_{t,i}^b-\nabla_if(\x_t)}^p\right|\F_{t-1}}\le\bsigma_{0,i}^p+\bsigma_{1,i}^p\abs{\nabla_if(\x_t)}^p,\quad\forall i\in[d],\forall b\in[B].
    \end{align*}
\end{ass}
\cref{ass:heavy-tailed-noise} is new, which generalizes the classic \emph{coordinate-wise} heavy-tailed noise assumption~\citep[Assumption~3]{kornilov2025signheavytail} where $\bsigma_1=\0$.~\cref{ass:heavy-tailed-noise} is meaningful in two aspects. Theoretically, we demonstrate that the class heavy-tailed noise model ($\bsigma_1=\0$) might fail even on linear regression tasks (see~\cref{sec:discussion-ht}). Empirically, we draw significant evidence from LLM pretraining to illustrate that~\cref{ass:heavy-tailed-noise} aligns well with practice (see~\cref{sec:experiments}).


\subsection{Convergence Theory for SignSGD}

We formally introduce the SignSGD optimizer~\citep{bernstein2018signsgd} in~\cref{alg:signsgd}.\footnote{\cref{alg:signsgd} is referred to as Signum, signSGD-SIM, or SMM in previous work~\citep{bernstein2018signsgd,sun2023momentum,jiang2025improved}. For simplicity, we keep the name SignSGD hereafter.} At step $t$,~\cref{alg:signsgd} maintains the momentum buffer $\m_t$ using the batched stochastic gradient $\g_t$. Then, the sign of the momentum $\m_t$ is utilized to update the current model $\x_t$. Below, we present the convergence guarantee of~\cref{alg:signsgd}, showcasing that SignSGD remains effective under heavy-tailed gradient noise. 

\begin{thm}\label{thm:signsgd}
Under~\cref{ass:non-convexity,ass:generalized-smooth,ass:unbiased,ass:heavy-tailed-noise}, define $\Delta_f:=f(\x_1)-f^*$. By setting
    \begin{equation}\label{eq:signsgd-params}
        \begin{aligned}
        &B=\max\cbrac{1,\ceil{\brac{64\sqrt{2}\norm{\bsigma_1}_\infty}^{\frac{p}{p-1}}}},\quad \beta=\max\cbrac{0,1-B^{\frac{2p-2}{3p-2}}\brac{\frac{\Delta_f\norm{\lb_0}_1}{\norm{\bsigma_0}_1^2T}}^{\frac{p}{3p-2}}},\\&\eta=\min\cbrac{\sqrt{\frac{2\Delta_f(1-\beta)}{9\norm{\lb_0}_1T}},\frac{1-\beta}{32\norm{\lb_1}_\infty}},
        \end{aligned}        
    \end{equation}
    \cref{alg:signsgd} ensures
    \begin{align*}
        \frac{1}{T}\sum_{t=1}^T\E\sqbrac{\norm{\nabla f(\x_t)}_1}\le
        O\brac{(\Delta_f\norm{\lb_0}_1)^{\frac{p-1}{3p-2}}\norm{\bsigma_0}_1^{\frac{p}{3p-2}}(BT)^{-\frac{p-1}{3p-2}}}.
    \end{align*}
\end{thm}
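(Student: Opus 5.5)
The plan is to follow the standard momentum-SignSGD template: a one-step descent inequality, a reduction to momentum error, and a non-Euclidean martingale concentration bound for that error that only uses $p$th moments. Write $\epsilon_t:=\m_t-\nabla f(\x_t)$ with $\m_t=\beta\m_{t-1}+(1-\beta)\g_t$, where $\g_t=\frac1B\sum_b\g_t^b$.

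\textbf{Descent step.} Since $\norm{\x_{t+1}-\x_t}_\infty=\eta\le(1-\beta)/(32\norm{\lb_1}_\infty)\le1/\norm{\lb_1}_\infty$, \cref{ass:generalized-smooth} applies at every step. Using $\norm{\mathrm{sign}(\m_t)}_{\lb}^2=\norm{\lb}_1$, it gives
\[
f(\x_{t+1})\le f(\x_t)-\eta\inner{\nabla f(\x_t)}{\mathrm{sign}(\m_t)}+\tfrac{\eta^2}{2}\brac{\norm{\lb_0}_1+\inner{\lb_1}{\abs{\nabla f(\x_t)}}}.
\]
The standard sign lemma, $-\inner{\nabla f}{\mathrm{sign}(\m)}\le-\norm{\nabla f}_1+2\norm{\m-\nabla f}_1$, then turns this into
\[
f(\x_{t+1})\le f(\x_t)-\eta\norm{\nabla f(\x_t)}_1+2\eta\norm{\epsilon_t}_1+\tfrac{\eta^2}{2}\norm{\lb_0}_1+\tfrac{\eta^2}{2}\norm{\lb_1}_\infty\norm{\nabla f(\x_t)}_1 .
\]
Because $\eta\norm{\lb_1}_\infty\le1/32$, the last term is absorbed into $-\eta\norm{\nabla f}_1$. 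Telescoping yields $\frac1T\sum\E\norm{\nabla f(\x_t)}_1\lesssim \frac{\Delta_f}{\eta T}+\eta\norm{\lb_0}_1+\frac1T\sum\E\norm{\epsilon_t}_1$.

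\textbf{Error decomposition.} Unroll $\epsilon_t=\beta^{t-1}\epsilon_1+\sum_{s\le t}\beta^{t-s}(1-\beta)\xi_s+\sum_{s<t}\beta^{t-s}(\nabla f(\x_s)-\nabla f(\x_{s+1}))$, where $\xi_s=\g_s-\nabla f(\x_s)$.
\begin{itemize}
\item \emph{Drift term.} Use \cref{lem:smooth-grad-sign}, the coordinatewise gradient-Lipschitz bound implied by \cref{ass:generalized-smooth} with twice-differentiability. This gives $\abs{\nabla_i f(\x_s)-\nabla_i f(\x_{s+1})}\lesssim\eta(\lb_{0,i}+\lb_{1,i}\abs{\nabla_if(\x_s)})$. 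Summed, it contributes $\frac{\eta}{1-\beta}\norm{\lb_0}_1$ plus $\frac{\eta\norm{\lb_1}_\infty}{1-\beta}\sum\beta^{t-s}\norm{\nabla f(\x_s)}_1$. The second part carries a factor $\le1/32$ and is absorbed after averaging over $t$.
\item \emph{Initial-error term.} $\beta^{t-1}\epsilon_1$ contributes $O(1/((1-\beta)T))$ times the initial error and is handled the same way.
\end{itemize}

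\textbf{Noise term: the main obstacle.} I would bound it coordinatewise by a heavy-tailed martingale inequality of von Bahr--Esseen type. For a martingale difference sequence with finite $p$th moments, $\E\abs{\sum a_s\xi_{s,i}}^p\le2\sum a_s^p\E\abs{\xi_{s,i}}^p$; this is what I expect the paper's non-Euclidean concentration results in \cref{sec:concentration-inequality} to provide.
\begin{itemize}
\item Applied over the $B$ independent samples, mini-batching gives $\E\abs{\xi_{s,i}}^p\le 2B^{1-p}(\bsigma_{0,i}^p+\bsigma_{1,i}^p\abs{\nabla_if(\x_s)}^p)$.
\item With weights $a_s=(1-\beta)\beta^{t-s}$, Jensen gives
\[
\E\abs{\cdot}\le 4^{1/p}B^{\frac{1-p}{p}}(1-\beta)^{\frac{p-1}{p}}\brac{\bsigma_{0,i}+\bsigma_{1,i}\brac{\textstyle\sum_s\beta^{t-s}(1-\beta)\abs{\nabla_if(\x_s)}^p}^{1/p}}.
\]
\item The $\bsigma_1$ part is the delicate one. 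Write $\brac{\sum w_s\abs{\nabla_if}^p}^{1/p}\le\sum w_s^{1/p}\abs{\nabla_if}$. Then $(1-\beta)^{(p-1)/p}\brac{(1-\beta)\beta^{t-s}}^{1/p}$ summed over $t$ is $O(1)$. So the total coefficient on $\frac1T\sum\norm{\nabla f}_1$ is $\lesssim\norm{\bsigma_1}_\infty B^{-(p-1)/p}\le1/(64\sqrt2)$ by the choice of $B$, and it is absorbed.
\item Get this interchange of sums and the constants right is where I expect the work. The $\bsigma_0$ part contributes $\norm{\bsigma_0}_1\brac{(1-\beta)/B}^{(p-1)/p}$.
\end{itemize}

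\textbf{Tuning.} Collecting terms,
\[
\frac1T\sum\E\norm{\nabla f}_1\lesssim\frac{\Delta_f}{\eta T}+\frac{\eta\norm{\lb_0}_1}{1-\beta}+\norm{\bsigma_0}_1\brac{\tfrac{1-\beta}{B}}^{\frac{p-1}{p}}.
\]
Taking $\eta=\sqrt{\Delta_f(1-\beta)/(\norm{\lb_0}_1T)}$ balances the first two terms to $\sqrt{\Delta_f\norm{\lb_0}_1/((1-\beta)T)}$. Equating this with the third term gives the stated $1-\beta$ and the rate $(\Delta_f\norm{\lb_0}_1)^{\frac{p-1}{3p-2}}\norm{\bsigma_0}_1^{\frac{p}{3p-2}}(BT)^{-\frac{p-1}{3p-2}}$.

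The remaining cases are checked separately: $\beta=0$, and the $\eta=(1-\beta)/(32\norm{\lb_1}_\infty)$ branch. The latter gives lower-order terms of the form $\Delta_f\norm{\lb_1}_\infty/((1-\beta)T)$, which are hidden in the $O(\cdot)$ for large $T$.
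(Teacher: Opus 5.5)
Your descent step, drift and initial-error bounds, and final tuning follow the paper. The $\bsigma_1$ part of the noise term, however, does not close as proposed. Von Bahr--Esseen followed by $\E\abs{X}\le\brac{\E\abs{X}^p}^{1/p}$ puts the expectation \emph{inside} the $p$th root. With $a_s=(1-\beta)\beta^{t-s}$, what you actually obtain is
\[
\E\abs{\sum_s a_s\n_{s,i}}\le\brac{4B^{1-p}\sum_s a_s^p\brac{\bsigma_{0,i}^p+\bsigma_{1,i}^p\,\E\abs{\nabla_i f(\x_s)}^p}}^{1/p}.
\]
After subadditivity, the $\bsigma_1$ contribution is $(4B^{1-p})^{1/p}\sum_s a_s\bsigma_{1,i}\brac{\E\abs{\nabla_i f(\x_s)}^p}^{1/p}$, which is an $L^p$ norm of the random gradient. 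Your display drops this expectation (its right-hand side is random while its left-hand side is a number), and that is what makes the absorption look possible. But $\brac{\E\abs{\nabla_i f(\x_s)}^p}^{1/p}\ge\E\abs{\nabla_i f(\x_s)}$, so the term cannot be absorbed into $\frac1T\sum_t\E\norm{\nabla f(\x_t)}_1$, and nothing in the descent argument controls higher moments of the gradient. For $\bsigma_1=\0$ your route does work. In that case coordinatewise scalar von Bahr--Esseen is a legitimate, more elementary replacement for the paper's $\ell_1$ concentration lemma, since $\norm{\cdot}_1$ splits over coordinates. It does not, however, prove the theorem as stated.

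The missing idea is to keep the expectation \emph{outside} the root and peel off one time step at a time. Applied to $\v_k=\beta^{t-k}\n_k$, \cref{lem:MDS-L1-concentration} gives $\E\norm{\sum_k\v_k}_1\le2\sqrt2\sum_i\E\sqbrac{\brac{\sum_k\abs{\v_{k,i}}^p}^{1/p}}$. This is a Davis/BDG-type square-function bound, proved there via the regret of diagonal AdaGrad; it is not the von Bahr--Esseen form you expected, and that difference is the whole point.

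The peeling then goes as follows. Condition on $\F_{k-1}$ and apply conditional Jensen only to the newest summand $\abs{\v_{k,i}}^p$. Insert the $p$th-moment bound and split with $(x+y)^{1/p}\le x^{1/p}+y^{1/p}$. The piece $2^{1/p}\beta^{t-k}B^{\frac{1-p}{p}}\bsigma_{1,i}\abs{\nabla_i f(\x_k)}$ is $\F_{k-1}$-measurable, so it leaves the root linearly before the outer expectation is taken, as in \eqref{eq:recursion}. Iterating from $k=t$ down to $k=2$ yields
\[
\frac{2^{1/p}B^{\frac{1-p}{p}}\bsigma_{0,i}}{(1-\beta^p)^{1/p}}+2^{1/p}B^{\frac{1-p}{p}}\bsigma_{1,i}\sum_k\beta^{t-k}\E\abs{\nabla_i f(\x_k)},
\]
which is linear in first moments and is absorbed by the choice of $B$. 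The outer expectation must stay outside the root at every stage. The step in the paper's displayed chain that uses \cref{lem:lp-mean} to move the expectation inside would, read literally, reintroduce your $L^p$ issue for $k<t$.

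A minor point: \cref{lem:smooth-grad-sign} is only an $\ell_1$ statement. The coordinatewise bound $\abs{\nabla_i f(\x_s)-\nabla_i f(\x_{s+1})}\le c\,\eta(\lb_{0,i}+\lb_{1,i}\abs{\nabla_i f(\x_s)})$ does not follow from \cref{ass:generalized-smooth}, because a diagonal sandwich $-\DB\preceq\nabla^2 f\preceq\DB$ permits large off-diagonal Hessian entries. You only use the summed form, so this is harmless.
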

The convergence rate in~\cref{thm:signsgd} implies a complexity of $O(\epsilon^{-\frac{3p-2}{p-1}})$ to reach an $\epsilon$-stationary point, i.e., $\E\sqbrac{\norm{\nabla f(\x)}_1}\le\epsilon$. Since $\norm{\nabla f(\x)}_1\ge\norm{\nabla f(\x)}_2$, this complexity matches the $\ell_2$-norm lower bound in~\citet[Theorem~3.3]{liu2025nonconvex} and also recovers the well-known $O(\epsilon^{-4})$ lower bound~\citep{arjevani2023lower} when $p=2$. Unlike previous works~\citep{jiang2025improved,kornilov2025signheavytail}, our result does not depend on dimension $d$ explicitly and holds under any $\lb_1\ge\0,\bsigma_1\ge\0$ and $p\in(1,2]$, suggesting that our theory may capture a wider range of objectives. Compared to the well-known gradient normalization or gradient clipping techniques which effectively cope with heavy-tailed noise~\citep{liu2025nonconvex,pmlr-v258-hubler25a}, our results reveal that the sign operator not only remains robust to heavy-tailed stochasticity but also has an advantage over gradient normalization. Due to space limitations, the comprehensive illustrations can be found in~\cref{sec:comparisons}.

\begin{remark}
    When $\bsigma_1=\0$,~\citet{kornilov2025signheavytail} analyze~\cref{alg:signsgd} and derive a complexity of $O(\Delta_fl_0d^{\frac{3p-2}{2(p-1)}}\norm{\bsigma_0}_p^{\frac{p}{p-1}}\epsilon^{-\frac{3p-2}{p-1}})$ under the classic $l_0$-smoothness. Compare their result to ours:
    \begin{align*}
        \text{\citet{kornilov2025signheavytail}}:O\brac{\frac{\Delta_f\norm{\lb_0}_1\norm{\bsigma_0}_1^{\frac{p}{p-1}}\textcolor{red}{d^{\frac{3p-2}{2(p-1)}}}}{\epsilon^{\frac{3p-2}{p-1}}\textcolor{red}{\phi_l\phi_\sigma^{\frac{p}{p-1}}}}}\texttt{vs}\text{ Ours}:O\brac{\frac{\Delta_f\norm{\lb_0}_1\norm{\bsigma_0}_1^{\frac{p}{p-1}}}{\epsilon^{\frac{3p-2}{p-1}}}},
    \end{align*}
    where $\phi_l:=\norm{\lb_0}_1/l_0\in[1,d]$ and $\phi_\sigma:=\norm{\bsigma_0}_1/\norm{\bsigma_0}_p\in[1,d^{1-\frac{1}{p}}]$ are two quantities reflecting the problem geometry. When the curvature vector $\lb_0$ is sparse and exhibits axis-alignment phenomenon~\citep{balles2020geometry}, $\phi_l$ becomes close to $1$. If $\bsigma_0$ is also sparse~\citep{yuan2017ell,pmlr-v119-wu20c}, then it holds $\phi_\sigma=O(1)$. In this case,~\cref{thm:signsgd} achieve a remarkable complexity improvement of $d^{\frac{3p-2}{2(p-1)}}$. On the other hand, taking the maximum over $\phi_l,\phi_\sigma$ gives the \emph{best case} complexity of~\citet{kornilov2025signheavytail}, which still suffers from an extra $d^{\frac{2-p}{2(p-1)}}$ factor compared to ours. 
\end{remark}


\begin{figure}[t]
    \begin{minipage}[t]{0.49\textwidth}
    \linespread{1.2}
        \begin{algorithm}[H] 
            \caption{SignSGD~\citep{bernstein2018signsgd}}
            \label{alg:signsgd}
            \begin{algorithmic}[1]
                \STATE {\bfseries Input:} $T\in\N$, $\x_1\in\R^d$, $\eta\in\R_+$
                \FOR{$t = 1$ {\bfseries to} $T$}
                    \STATE $\g_t=\frac{1}{B}\sum_{b=1}^B\g_t^b$
                    \STATE $\m_t = \beta \m_{t-1 } + (1-\beta) \g_t$ \hfill\COMMENT{$\m_0:=\g_1$}
                    \STATE $\x_{t+1} = \x_t - \eta \sign{\m_t}$
                \ENDFOR
            \end{algorithmic}
        \end{algorithm}
    \end{minipage}
    \hfill
    \begin{minipage}[t]{0.49\textwidth}
        \begin{algorithm}[H] 
            \caption{Lion~\citep{chen2023symbolic}}
            \label{alg:lion}
            \begin{algorithmic}[1]
                \STATE {\bfseries Input:} $T\in\N$, $\x_1\in\R^d$, $\eta\in\R_+$, $\lambda\in\R_+$
                \FOR{$t = 1$ {\bfseries to} $T$}
                    \STATE $\g_t=\frac{1}{B}\sum_{b=1}^B\g_t^b$
                    \STATE $\v_t = \beta_1 \m_{t-1} + (1-\beta_1) \g_t$ \hfill\COMMENT{$\m_0:=\g_1$}
                    \STATE $\m_t = \beta_2 \m_{t-1} + (1-\beta_2) \g_t$
                    \STATE $\x_{t+1} = \x_t - \eta \sign{\v_t}-\eta\lambda\x_t$                 
                \ENDFOR
            \end{algorithmic}
        \end{algorithm}
    \end{minipage}
\end{figure}

\subsection{Convergence Theory for Lion}

The Lion optimizer is discovered by~\citet{chen2023symbolic} through program search. On top of SignSGD, it incorporates (i) different $\beta$ values to track the exponential moving average of past gradients $\cbrac{\g_t}_{t\in[T]}$ thus decoupling the state between stored momentum $\m_t$ and the update momentum $\v_t$, and (ii) decoupled weight decay into the final update. The whole procedure is summarized in~\cref{alg:lion}. The following theorem prescribes the convergence rate of Lion.

\begin{thm}\label{thm:lion}
Under~\cref{ass:non-convexity,ass:generalized-smooth,ass:unbiased,ass:heavy-tailed-noise}, define $\Delta_f:=f(\x_1)-f^*$. Set
    \begin{equation}\label{eq:lion-params}
        \begin{aligned}
        &B=\max\cbrac{1,\ceil{\brac{144\sqrt{2}\norm{\bsigma_1}_\infty}^{\frac{p}{p-1}}}},\quad \beta_2=\max\cbrac{0,1-B^{\frac{2p-2}{3p-2}}\brac{\frac{\Delta_f\norm{\lb_0}_1}{\norm{\bsigma_0}_1^2T}}^{\frac{p}{3p-2}}},\\&\eta=\min\cbrac{\sqrt{\frac{8\Delta_f(1-\beta_2)}{33\norm{\lb_0}_1T}},\frac{1-\beta_2}{120\norm{\lb_1}_\infty}}, \beta_1\in\sqbrac{1-\brac{1-\beta_2}^{\frac{p-1}{p}},1}, \lambda\in\sqbrac{0,\frac{1-2^{-\frac{1}{T}}}{\eta}}.
        \end{aligned}        
    \end{equation}
    If $\norm{\x_1}_\infty\le1/(3\lambda)$, then~\cref{alg:lion} ensures $\norm{\x_t}_\infty\le2/(3\lambda),\forall t\in[T]$, and
    \begin{align*}
        \frac{1}{T}\sum_{t=1}^T\E\sqbrac{\norm{\nabla f(\x_t)}_1}\le
        O\brac{(\Delta_f\norm{\lb_0}_1)^{\frac{p-1}{3p-2}}\norm{\bsigma_0}_1^{\frac{p}{3p-2}}(BT)^{-\frac{p-1}{3p-2}}}.
    \end{align*}
\end{thm}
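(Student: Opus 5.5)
We follow the SignSGD analysis of \cref{thm:signsgd} and add two ingredients: control of the iterates under decoupled weight decay, and comparison of the update momentum $\v_t$ with the stored momentum $\m_t$.

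\textbf{Step 1: bounded iterates.} The Lion update is $\x_{t+1}=(1-\eta\lambda)\x_t-\eta\sign{\v_t}$. By induction,
\begin{align*}
\norm{\x_{t}}_\infty\le(1-\eta\lambda)^{t-1}\norm{\x_1}_\infty+\eta\sum_{s<t}(1-\eta\lambda)^{s}.
\end{align*}
The second term is at most $\eta t\le \eta T$, and we want it to be at most $1/(3\lambda)$. The constraint $\lambda\le(1-2^{-1/T})/\eta$ gives $\eta\lambda T\le T(1-2^{-1/T})\le\ln 2<1$, hence $\eta T\le 1/\lambda$. This is slightly weaker than needed, so we sum the geometric series exactly: the second term equals $(1-(1-\eta\lambda)^{t-1})/\lambda$, and the constraint gives $(1-\eta\lambda)^T\ge 1/2$. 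To reach $2/(3\lambda)$ the constants have to be arranged carefully, and we will track them precisely in the proof.

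\textbf{Step 2: descent lemma.} Since $\norm{\x_{t+1}-\x_t}_\infty\le\eta(1+\lambda\norm{\x_t}_\infty)\le 5\eta/3\le 1/\norm{\lb_1}_\infty$, we may apply \cref{ass:generalized-smooth}. Using $\inner{\nabla f}{\sign{\v}}\ge\norm{\nabla f}_1-2\norm{\nabla f-\v}_1$, this yields
\begin{align*}
f(\x_{t+1})\le f(\x_t)-\eta\norm{\nabla f(\x_t)}_1+2\eta\norm{\v_t-\nabla f(\x_t)}_1+\eta\lambda\norm{\nabla f(\x_t)}_1\norm{\x_t}_\infty+O\big(\eta^2(\norm{\lb_0}_1+\norm{\lb_1}_\infty\norm{\nabla f(\x_t)}_1)\big).
\end{align*}
The weight-decay term is at most $\tfrac{2}{3}\eta\norm{\nabla f(\x_t)}_1$. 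This is too large to absorb directly, so we instead treat the weight-decay displacement as part of the step and bound $\eta\lambda\inner{\nabla f}{\x_t}$ via the iterate bound together with $\eta\lambda T\le\ln 2$. The telescoped sum of these terms is then at most $O(\ln 2)\cdot\max_t\norm{\nabla f}_1\cdot(\ldots)$. The preferred route is to absorb it using the fact that $\lambda\eta$ is $O(1/T)$, giving a total contribution of order $\frac{1}{T}\sum_t\eta\norm{\nabla f(\x_t)}_1\cdot\frac{2}{3}$. This constant governs the final $O(\cdot)$ constant, and the larger absolute constants in the choice of $\eta$ (120, $33/8$) are meant to absorb it.

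\textbf{Step 3: momentum error.} We write
\begin{align*}
\v_t-\nabla f(\x_t)=\beta_1(\m_{t-1}-\nabla f(\x_{t-1}))+\beta_1(\nabla f(\x_{t-1})-\nabla f(\x_t))+(1-\beta_1)(\g_t-\nabla f(\x_t)).
\end{align*}
Here $\m_{t-1}$ is exactly SignSGD's momentum with parameter $\beta_2$, so we reuse its bound. That bound unrolls the recursion into a drift term, controlled by the $(\lb_0,\lb_1)$-smoothness, which yields $\eta\norm{\lb_0}_1/(1-\beta_2)$ plus a piece proportional to $\norm{\lb_1}_\infty\norm{\nabla f}_1$, and into a martingale term controlled by the paper's non-Euclidean martingale concentration inequalities. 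The martingale term has order $(1-\beta_2)^{\frac{p-1}{p}}B^{-\frac{p-1}{p}}(\norm{\bsigma_0}_1+\norm{\bsigma_1\odot\nabla f}_1)$. The extra fresh-noise term $(1-\beta_1)\E\norm{\g_t-\nabla f(\x_t)}_1$ is bounded by $(1-\beta_1)B^{-\frac{p-1}{p}}(\norm{\bsigma_0}_1+\norm{\bsigma_1}_\infty\norm{\nabla f}_1)$ via a von Bahr--Esseen-type bound. The condition $1-\beta_1\le(1-\beta_2)^{\frac{p-1}{p}}$ makes this term of the same order as the martingale term. The $\bsigma_1$ contributions are absorbed into the left side using the choice of $B$.

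\textbf{Step 4: conclusion.} We sum, telescope $f$, absorb all $\norm{\nabla f}_1$ terms using the constraints on $\eta$ and $B$, and substitute $\beta_2$ exactly as in \cref{thm:signsgd} to obtain the stated rate. The main obstacle is Step 2/Step 1 bookkeeping: ensuring the weight-decay perturbation is absorbed and the $2/(3\lambda)$ bound closes with the given constants.
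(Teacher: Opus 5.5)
Your overall architecture matches the paper's. It has the iterate bound under weight decay and the descent inequality with $\ell_\infty$-steps of size at most $\frac53\eta$. It uses the split $\v_t-\nabla f(\x_t)=\beta_1\eps_{t-1}+\beta_1\s_t+(1-\beta_1)\n_t$ and reuses the SignSGD bound on $\eps_t$ with $\beta_2$. It uses $1-\beta_1\le(1-\beta_2)^{\frac{p-1}{p}}$ to put the fresh noise at the same order.

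\textbf{The gap is in Step 2.} You assert that the weight-decay term $\eta\lambda\norm{\x_t}_\infty\norm{\nabla f(\x_t)}_1\le\frac23\eta\norm{\nabla f(\x_t)}_1$ is too large to absorb, and then propose mechanisms that cannot work. The missing observation is that it is absorbed \emph{directly}: $-\eta+\frac23\eta=-\frac{\eta}{3}$. The curvature piece satisfies $\frac{25}{9}\eta^2\norm{\lb_1}_\infty\norm{\nabla f(\x_t)}_1\le\frac{\eta}{6}\norm{\nabla f(\x_t)}_1$, since $\eta\le\frac{3}{50\norm{\lb_1}_\infty}$, so one keeps $-\frac{\eta}{6}\norm{\nabla f(\x_t)}_1$. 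This is exactly why the theorem assumes $\norm{\x_1}_\infty\le\frac{1}{3\lambda}$: the resulting trajectory bound $\frac{2}{3\lambda}$ makes the ratio $\frac23<1$. An iterate bound of $\frac1\lambda$ would leave no descent at all.

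Your two alternatives fail for the following reasons.
\begin{itemize}
\item The constants $120$ and $\frac{33}{8}$ in $\eta$ cannot absorb a term of the same order $\eta\norm{\nabla f(\x_t)}_1$ as the descent term, because rescaling $\eta$ does not change the ratio.
\item The bound $\eta\lambda\le\frac{\ln 2}{T}$ does not make the term lower order. The only control on the iterates is $\norm{\x_t}_\infty\le\frac{2}{3\lambda}$, and the two factors cancel to give $\frac{2\ln 2}{3\lambda\eta T}\,\eta\norm{\nabla f(\x_t)}_1$. Since $\frac{2\ln 2}{3\lambda\eta T}\ge\frac23$, this is never better than the direct bound, and it blows up as $\lambda\to0$.
\item A $\max_t\norm{\nabla f(\x_t)}_1$ is not controlled by the averaged expectations the rest of the argument produces.
\end{itemize}

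In the paper the constants have other roles. The $120$ in $\eta$ and the $144\sqrt2$ in $B$ make the gradient coefficients $\frac{10\eta\norm{\lb_1}_\infty}{3(1-\beta_2)}$ and $4\sqrt2B^{-\frac{p-1}{p}}\norm{\bsigma_1}_\infty$ of the averaged momentum error at most $\frac1{36}$ each. After multiplying by the factor $12$ inherited from the residual $\frac{\eta}{6}$, they total $\frac23<1$. The $\frac{33}{8}$ only balances $\frac{\Delta_f}{\eta T}$ against $\frac{\eta\norm{\lb_0}_1}{1-\beta_2}$.

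\textbf{Two smaller completions.} Step 1 closes with what you already wrote. With $q=1-\eta\lambda\ge2^{-1/T}$ one has $q^{t-1}\ge\frac12$ for $t\le T$, hence $\norm{\x_t}_\infty\le\frac{q^{t-1}}{3\lambda}+\frac{1-q^{t-1}}{\lambda}=\frac{1}{\lambda}\bigl(1-\frac23q^{t-1}\bigr)\le\frac{2}{3\lambda}$. The initial mass decays at exactly the rate at which the accumulated steps fill in, so no $\eta T$ bound is needed.

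In Step 3, the drift $\s_t=\nabla f(\x_{t-1})-\nabla f(\x_t)$ comes from the step $-\eta\bigl(\sign{\v_{t-1}}+\lambda\x_{t-1}\bigr)$, which is not a sign vector. The SignSGD gradient-difference bound therefore does not apply verbatim. You need its extension to steps with $\ell_\infty$-norm at most $\frac53\eta$, proved by the same Hessian-sandwich and Gr\"onwall argument. This gives $\norm{\s_t}_1\le\frac{10}{3}\eta\bigl(\norm{\lb_0}_1+\norm{\lb_1}_\infty\norm{\nabla f(\x_{t-1})}_1\bigr)$.

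With these repairs your outline coincides with the paper's proof.
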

The above convergence rate matches that of~\cref{thm:signsgd}, since Lion reduces to SignSGD when $\beta_1=\beta_2$. As far as we know,~\cref{thm:lion} is the first rigorous theoretical guarantee for Lion under heavy-tailed noise. The implied $O(\epsilon^{-\frac{3p-2}{p-1}})$ complexity, as aforementioned SignSGD, is also optimal.

\begin{remark}
    In the special case of $\lb_1=\bsigma_1=\0$ and $p=2$ (finite-variance regime),~\citet{jiang2025lion} establish a convergence rate of $O(\sqrt{d}T^{-1/4})$ for Lion, which our result strictly improves upon. Firstly, our result removes the explicit dimensional dependence. Secondly, we relax their $\beta_1\in[1-\sqrt{1-\beta_2},1-(1-\beta_2)^2]$ requirement as shown in~\eqref{eq:lion-params}. Thirdly, by a finer analysis of the weight decay dynamics in~\cref{lem:stability-wd}, we relax their $\lambda\le1/(2\eta T)$ requirement to the milder $\lambda\le(1-2^{-1/T})/\eta$. Lastly, our initialization condition $\norm{\x_1}_\infty\le1/(3\lambda)$ is more general than their $\norm{\x_1}_\infty\le\eta$, as a large $\lambda$ ($\sim10^{-1}$) and a small $\eta$ ($\sim10^{-5}$) is typically chosen for Lion~\citep{wen2025fantastic}. Furthermore, we establish a uniform constant bound on the iterate trajectory $\norm{\x_t}_\infty \le 2/(3\lambda)$, which is significantly tighter than the $\norm{\x_t}_\infty\le\eta t)$ bound in~\citet{jiang2025lion} that grows over time. Our result also aligns with the bounded constraints identified for Lion-$\K$ dynamics in~\citet{chenICLR2024lion}.
\end{remark}

\section{Matrix Sign Optimizer: Muon and Muonlight\label{sec:matrix-sign}}

The organization of this section follows from~\cref{sec:vector-sign}. Proofs can be found at~\cref{sec:matrix-sign-analysis}.

\subsection{Notations and Assumptions} 

We denote the set of $m \times m$ positive semi-definite (PSD) matrices by $\SBB^m$. For any $\XB \in \R^{m \times n}$, its modulus (or absolute value) is defined as $\mabs{\XB} := \brac{\XB\XB^\top}^{1/2} \in \SBB^m$. The matrix sign operator is defined as $\msign{\XB} := \U\V^\top$, where $\XB = \U\bSigma\V^\top$ is the singular value decomposition (SVD) of $\XB \in \R^{m \times n}$. The inner product between matrices $\XB,\YB\in\R^{m\times n}$ is denoted by $\inner{\YB}{\XB}:=\tr{\YB^\top\XB}$. We let $\norm{\cdot}_\op$, $\norm{\cdot}_*$, and $\norm{\cdot}_\Fn$ denote the matrix operator norm, nuclear norm, and Frobenius norm, respectively. Finally, for any $\Lb \in \SBB^m$, the $\Lb$-weighted matrix norm is defined as $\norm{\XB}_{\Lb}^{2} := \tr{\XB^\top \Lb \XB}$. For Muon and Muonlight, we consider the matrix optimization problem $\min_{\XB\in\R^{m\times n}}f(\XB)$ and list some necessary assumptions in the sequel.

\renewcommand{\theass}{\arabic{ass}b}
\renewcommand{\theHass}{mat.\arabic{ass}b}
\setcounter{ass}{0} 

\begin{ass}[Lower bounded objective]\label{ass:non-convexity-matrix}
    The objective function $f$ is possibly non-convex, and bounded from below: $f^*:=\inf_{\XB\in\R^{m\times n}} f(\XB)>-\infty$.
\end{ass}

\begin{ass}[$(\Lb_0,\Lb_1)$-matrix smoothness]\label{ass:generalized-smooth-matrix}
    There exists matrices $\Lb_0,\Lb_1\in\R^{m\times n}$ where $\Lb_0$ has full row rank, such that for all $\norm{\XB^\prime-\XB}_{\op}\le1/\norm{\Lb_1}_\op$, it holds that 
    \begin{align*}
        \norm{\nabla f(\XB^\prime)-\nabla f(\XB)}_{\brac{\Lb(\XB)}^{-1}}\le\norm{\XB^\prime-\XB}_{\Lb(\XB)},\text{ where }\Lb(\XB)=\mabs{\Lb_0}+\mabs{\nabla f(\XB)\Lb_1^\top}.
    \end{align*} 
\end{ass}
\cref{ass:generalized-smooth-matrix} serves as the matrix-based counterpart to~\cref{ass:generalized-smooth} and represents, to the best of our knowledge, the first formulation of a matrix generalized smoothness model. Furthermore, when $\Lb_1=\0$, this framework encompasses classical matrix $L$-smoothness, as defined in~\citet[Assumption~2]{an2025asgo} and~\citet[Assumption~1]{kovalev2025non}.
As pointed out by~\citet{an2025asgo}, this formulation effectively models the block-diagonal Hessian structure of transformer-based modern neural networks~\citep{ zhang2024adam_hessian, zhang2025adammini}. 

\begin{ass}[Unbiasedness]\label{ass:unbiased-matrix}
At step $t$ we observe a mini-batch of mutually independent gradients $G_{t}=\{\Gb_{t}^{1},\cdots,\Gb_{t}^{B}\}$ satisfying $\E\sqbrac{\Gb_t^b|\F_{t-1}}=\nabla f(\XB_t),\forall b\in[B]$ where $\mathcal{F}_{t}=\sigma(G_{1},\dots,G_{t})$ denotes the natural filtration.
\end{ass}

\begin{ass}[$(\V_0,\V_1)$-matrix heavy-tailed noise]\label{ass:heavy-tailed-noise-matrix}
    There exists $p\in(1,2]$, $\V_0,\V_1\in\R^{m\times n}$ where $\V_0$ has full row rank, such that 
    \begin{align*}
        \E\sqbrac{\norm{\V_0}_*^{p/2}\cdot\left.\norm{\Gb_t^b-\nabla f(\XB_t)}^{p}_{\mabs{\V_0}^{-1}}\right|\F_{t-1}}\le\norm{\V_0}_*^{p}+\abs{\inner{\V_1}{\nabla f(\XB_t)}}^p,\quad\forall b\in[B].
    \end{align*}  
\end{ass}
\cref{ass:heavy-tailed-noise-matrix} introduces our newly proposed matrix-based generalized heavy-tailed noise model, which generalizes canonical assumptions in several key dimensions (see~\cref{sec:discussion-ht-matrix} for details). First, when $\V_1=\0$ and $p=2$, our condition recovers the adaptive variance assumption $\E\sqbrac{\left. \norm{\Gb_t^b - \nabla f(\XB_t)}^2_{\mabs{\V_0}^{-1}} \right| \F_{t-1}} \le \norm{\V_0}_*$ in~\citet[Assumption~3]{kovalev2025sgd},~\citet[Assumption~2]{kovalev2025non}, and~\citet[Definition~4.1]{xie2025tale}. Similar to coordinate-wise bounds in vector settings~\citep{bernstein2018signsgd,liu2025adagrad}, this formulation and its variants in~\citet[Assumption~3]{an2025asgo} and~\citet[Assumption~3]{pan2025unbiased} better capture the structural properties of the noise matrix. Second, by extending the tail-index to $p \in (1,2]$, we provide a natural characterization of the heavy-tailed regime. Finally, \cref{ass:heavy-tailed-noise-matrix} allows the $p$th moment of the noise to grow with the gradient, extending recent vector-based models~\citep[Assumption~2.4]{liu2025nonconvex} to matrix optimization---a property we empirically validate in~\cref{sec:experiments}.

\begin{figure}[t]
    \begin{minipage}[t]{0.49\textwidth}
    \linespread{1.23}
        \begin{algorithm}[H]
            \caption{Muon~\citep{jordan2024muon}}
            \label{alg:muon}
            \begin{algorithmic}[1]
                \STATE {\bfseries Input:} $T\in\N$, $\XB_1\in\R^{m\times n}$, $\eta\in\R_+$
                \FOR{$t = 1$ {\bfseries to} $T$}
                    \STATE $\Gb_t=\frac{1}{B}\sum_{b=1}^B\Gb_t^b$
                    \STATE $\BB_t = \beta \BB_{t-1 } + \Gb_t$ \hfill\COMMENT{$\BB_0:=\0$}
                    \STATE $\OB_t=\texttt{NewtonSchulz}(\BB_t)$
                    \STATE $\XB_{t+1} = \XB_t - \eta \OB_t$
                \ENDFOR
            \end{algorithmic}
        \end{algorithm}
    \end{minipage}
    \hfill
    \begin{minipage}[t]{0.49\textwidth}
        \begin{algorithm}[H]
            \caption{Muonlight~\citep{liu2025muon}}
            \label{alg:muonlight}
            \begin{algorithmic}[1]
                \STATE {\bfseries Input:} $T\in\N$, $\XB_1\in\R^{m\times n}$, $\eta,\lambda\in\R_+$
                \FOR{$t = 1$ {\bfseries to} $T$}
                    \STATE $\Gb_t=\frac{1}{B}\sum_{b=1}^B\Gb_t^b$
                    \STATE $\BB_t = \beta_2 \BB_{t-1} + \Gb_t$ \hfill\COMMENT{$\BB_0:=\0$}
                    \STATE $\BBT_t = \beta_1 \BB_{t} + \Gb_t$
                    \STATE $\OB_t=\texttt{NewtonSchulz}(\BBT_t)$
                    \STATE $\XB_{t+1} = \XB_t - \eta \OB_t-\eta\lambda\XB_t$
                \ENDFOR
            \end{algorithmic}
        \end{algorithm}
    \end{minipage}
\end{figure}

\subsection{Convergence Theory for Muon}
Muon~\citep{jordan2024muon} (\cref{alg:muon}) has recently emerged as a powerful optimizer for the hidden layers of large-scale neural networks, utilizing the principle of orthogonalized momentum.~\cref{alg:muon} first tracks a momentum estimate $\BB_t$ and subsequently employs the Newton--Schulz iteration~\citep{kovarik1970some,bjorck1971iterative} to compute the matrix sign, $\msign{\BB_t}$, which is further utilized to update the parameter matrix $\XB_t$. Following common practice~\citep{li2025note,shen2025convergence,sato2025convergence,chang2025convergence,pan2025unbiased}, we assume zero numerical error in Newton--Schulz algorithm, i.e., $\texttt{NewtonSchulz}(\XB)=\msign{\XB}=\U\V^\top$ with $\XB=\U\bSigma\V^\top$ as the SVD of $\XB\in\R^{m\times n}$. \emph{We remark that it is introduced for simplicity rather than necessity, with details deferred to~\cref{sec:NewtonSchulz}.}
In the following theorem, we establish the first convergence guarantee for Muon under our generalized heavy-tailed noise framework and matrix-based generalized smoothness conditions.
\begin{thm}\label{thm:muon}
    Under~\cref{ass:non-convexity-matrix,ass:generalized-smooth-matrix,ass:unbiased-matrix,ass:heavy-tailed-noise-matrix}, define $\Delta_f:=f(\XB_1)-f^*$. By setting
    \begin{equation}\label{eq:muon-params}
        \begin{aligned}
        &B=\max\cbrac{1,\ceil{\brac{64\sqrt{2}\norm{\V_1}_\op}^{\frac{p}{p-1}}}},\quad \beta=\max\cbrac{0,1-B^{\frac{2p-2}{3p-2}}\brac{\frac{\Delta_f\norm{\Lb_0}_*}{\norm{\V_0}_*^2T}}^{\frac{p}{3p-2}}},\\&\eta=\min\cbrac{\sqrt{\frac{2\Delta_f(1-\beta)}{5\norm{\Lb_0}_*T}},\frac{1-\beta}{16\norm{\Lb_1}_\op}},
        \end{aligned}        
    \end{equation}
    \cref{alg:muon} ensures
    \begin{align*}
        \frac{1}{T}\sum_{t=1}^T\E\sqbrac{\norm{\nabla f(\XB_t)}_*}\le O\brac{(\Delta_f\norm{\Lb_0}_*)^{\frac{p-1}{3p-2}}\norm{\V_0}_*^{\frac{p}{3p-2}}(BT)^{-\frac{p-1}{3p-2}}}.
    \end{align*}
\end{thm}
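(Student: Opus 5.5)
We follow the standard template for momentum sign descent: a one-step descent inequality, control of the momentum error, and telescoping. Write $\MB_t:=(1-\beta)\BB_t$. Since $\msign{\cdot}$ is invariant to positive scaling, \cref{alg:muon} is equivalent to the EMA form $\MB_t=\beta\MB_{t-1}+(1-\beta)\Gb_t$ with $\OB_t=\msign{\MB_t}$, up to an initialization term that is dominated by the first-step bias. Let $\EB_t:=\MB_t-\nabla f(\XB_t)$. Because $\norm{\XB_{t+1}-\XB_t}_\op=\eta\le(1-\beta)/(16\norm{\Lb_1}_\op)\le1/\norm{\Lb_1}_\op$, \cref{ass:generalized-smooth-matrix} applies along the segment. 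Integrating it, we expect
\[
f(\XB_{t+1})\le f(\XB_t)-\eta\inner{\nabla f(\XB_t)}{\OB_t}+\tfrac{\eta^2}{2}\brac{\norm{\Lb_0}_*+\norm{\Lb_1}_\op\norm{\nabla f(\XB_t)}_*},
\]
using $\norm{\OB}_{\Lb}^2=\tr{\Lb}$ for orthogonal $\OB$ and $\tr{\mabs{\nabla f\Lb_1^\top}}\le\norm{\nabla f}_*\norm{\Lb_1}_\op$. The matrix analogue of the sign lemma, $\inner{\GB}{\msign{\MB}}\ge\norm{\GB}_*-2\norm{\GB-\MB}_*$, then gives
\[
f(\XB_{t+1})\le f(\XB_t)-\eta\norm{\nabla f(\XB_t)}_*+2\eta\norm{\EB_t}_*+\tfrac{\eta^2}{2}\brac{\norm{\Lb_0}_*+\norm{\Lb_1}_\op\norm{\nabla f(\XB_t)}_*}.
\]

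Next we unroll the error as $\EB_t=\beta^{t-1}\EB_1+\sum_{s\le t}\beta^{t-s}\brac{\nabla f(\XB_{s-1})-\nabla f(\XB_s)}+(1-\beta)\sum_{s\le t}\beta^{t-s}\boldsymbol{\Xi}_s$, where $\boldsymbol{\Xi}_s=\Gb_s-\nabla f(\XB_s)$. For the drift term we use duality: $\norm{\AB}_*\le\norm{\AB}_{\Lb^{-1}}\sqrt{\tr{\Lb}}$. Combined with smoothness, this yields $\norm{\nabla f(\XB_{s-1})-\nabla f(\XB_s)}_*\le\eta\,\tr{\Lb(\XB_{s-1})}\le\eta\brac{\norm{\Lb_0}_*+\norm{\Lb_1}_\op\norm{\nabla f(\XB_{s-1})}_*}$. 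Summing the geometric weights gives $\frac{\eta}{1-\beta}\norm{\Lb_0}_*$ plus a gradient-proportional piece. The choice $\eta\le(1-\beta)/(16\norm{\Lb_1}_\op)$ makes that piece at most $\frac{1}{16}$ times a weighted average of past $\norm{\nabla f}_*$, which after summing over $t$ is absorbed into the left-hand side.

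The main obstacle is the noise term $\E\norm{\sum_s\beta^{t-s}\boldsymbol{\Xi}_s}_*$ under only $p$th moments. Here we would invoke the paper's matrix martingale concentration inequality in the non-Euclidean norm (\cref{sec:concentration-inequality}). Using the same duality with weight $\mabs{\V_0}$, we have $\norm{\AB}_*\le\norm{\V_0}_*^{1/2}\norm{\AB}_{\mabs{\V_0}^{-1}}$, which reduces the problem to a Hilbert-type weighted norm. A von Bahr--Esseen type bound for martingale differences in a $2$-smooth norm then gives
\[
\E\norm{\textstyle\sum_s w_s\boldsymbol{\Xi}_s}_{\mabs{\V_0}^{-1}}^p\lesssim\textstyle\sum_s w_s^p\,\E\norm{\boldsymbol{\Xi}_s}^p_{\mabs{\V_0}^{-1}}.
\]
Minibatching contributes a factor $B^{1-p}$, and \cref{ass:heavy-tailed-noise-matrix} bounds each summand by $\norm{\V_0}_*^{p/2}+\norm{\V_0}_*^{-p/2}\norm{\V_1}_\op^p\norm{\nabla f(\XB_s)}_*^p$. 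Taking $1/p$-th powers and using $(1-\beta)\brac{\sum\beta^{p(t-s)}}^{1/p}\lesssim(1-\beta)^{1-1/p}$, the noise contributes $\norm{\V_0}_*(1-\beta)^{\frac{p-1}{p}}B^{-\frac{p-1}{p}}$ plus $\norm{\V_1}_\op B^{-\frac{p-1}{p}}\times$(gradient). The gradient part is at most $\frac{1}{64\sqrt2}$ times the gradient by the choice of $B$, so it is also absorbed. The initial term $\beta^{t-1}\EB_1$ sums to $\frac{1}{1-\beta}\norm{\V_0}_*B^{-\frac{p-1}{p}}$ (plus an absorbable part). The care needed is in making the absorbed constants sum below $1/2$.

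Telescoping over $t\in[T]$ then gives
\[
\frac1T\sum_t\E\norm{\nabla f(\XB_t)}_*\lesssim\frac{\Delta_f}{\eta T}+\frac{\eta\norm{\Lb_0}_*}{1-\beta}+\frac{\norm{\V_0}_*(1-\beta)^{\frac{p-1}{p}}}{B^{\frac{p-1}{p}}}+\frac{\norm{\V_0}_*}{(1-\beta)TB^{\frac{p-1}{p}}}.
\]
With $\eta=\sqrt{2\Delta_f(1-\beta)/(5\norm{\Lb_0}_*T)}$, the first two terms become $\sqrt{\Delta_f\norm{\Lb_0}_*/((1-\beta)T)}$. Balancing this against the noise term with the stated $1-\beta$ yields the rate $(\Delta_f\norm{\Lb_0}_*)^{\frac{p-1}{3p-2}}\norm{\V_0}_*^{\frac{p}{3p-2}}(BT)^{-\frac{p-1}{3p-2}}$, with the last term of lower order. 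The $\norm{\Lb_1}_\op$ branch of the minimum in $\eta$ and the case $\beta=0$ contribute only lower-order terms, which we handle separately.
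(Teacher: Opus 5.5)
\textbf{Gap: the $\V_1$-dependent noise term.} Your step of taking unconditional $p$th moments of the whole discounted sum via von Bahr--Esseen does not close when $\V_1\neq\0$. Combined with the mini-batch bound and the noise assumption, it gives
\[
\E\norm{\sum_{s\le t}\beta^{t-s}\NB_s}_{\mabs{\V_0}^{-1}}\le\brac{4B^{1-p}\sum_{s\le t}\beta^{p(t-s)}\brac{\norm{\V_0}_*^{p/2}+\frac{\norm{\V_1}_\op^p\,\E\norm{\nabla f(\XB_s)}_*^p}{\norm{\V_0}_*^{p/2}}}}^{1/p}.
\]
After splitting, the gradient-dependent piece is therefore $\sum_{s}\beta^{t-s}\brac{\E\norm{\nabla f(\XB_s)}_*^p}^{1/p}$. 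These are $L^p$-norms of $\norm{\nabla f(\XB_s)}_*$, not first moments. Since $\brac{\E Z^p}^{1/p}\ge\E Z$, they cannot be absorbed into $\frac1T\sum_t\E\norm{\nabla f(\XB_t)}_*$. Nothing in the descent argument controls higher moments of the gradient. Your sentence ``at most $\frac{1}{64\sqrt2}$ times the gradient'' silently replaces these $L^p$-norms by first moments. For $\V_1=\0$ your route does work. Doing the duality step $\norm{\mathbf{A}}_*\le\norm{\V_0}_*^{1/2}\norm{\mathbf{A}}_{\mabs{\V_0}^{-1}}$ first is then actually simpler than the paper's route, since only Euclidean concentration is needed.

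\textbf{What is needed.} Avoid $p$th moments of the entire sum, and work instead with a pathwise square function in $L^1$. Write $\NBT_k:=\beta^{t-k}\NB_k$. The paper applies \cref{lem:asgo-regret} to get $\E\norm{\sum_k\NBT_k}_*\le2\sqrt2\,\E\norm{\brac{\sum_k\NBT_k\NBT_k^\top}^{1/2}}_*$. It then uses \cref{lem:matrix-cauchy-schwarz} to reach $2\sqrt{2\norm{\V_0}_*}\,\E\sqbrac{\brac{\sum_k\norm{\NBT_k}^2_{\mabs{\V_0}^{-1}}}^{1/2}}$. In your duality-first ordering, the $\ell_2$ analogue of the AdaGrad regret argument in \cref{lem:adagrad-regret}, applied to the Euclidean martingale $\mabs{\V_0}^{-1/2}\sum_k\NBT_k$, yields the same quantity.

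After $\ell_2\le\ell_p$, peel off the last summand by conditioning on $\F_{t-1}$. Conditional Jensen, \cref{lem:batch-noise-matrix}, and subadditivity of $x\mapsto x^{1/p}$ split out $2^{1/p}B^{-\frac{p-1}{p}}\norm{\V_1}_\op\norm{\V_0}_*^{-1/2}\norm{\nabla f(\XB_t)}_*$, which is $\F_{t-1}$-measurable. Then recurse, with a fresh conditioning at each step. This produces $\sum_k\beta^{t-k}\E\norm{\nabla f(\XB_k)}_*$ with first moments only, which is exactly what the choice of $B$ makes absorbable.

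A minor point: with $\BB_0=\0$ one has $\EB_1=(1-\beta)\NB_1-\beta\nabla f(\XB_1)$. The bias therefore contributes $\frac{\norm{\nabla f(\XB_1)}_*}{T(1-\beta)}$. Its coefficient $\beta/(1-\beta)$ is not small, so keep it as a separate lower-order term rather than absorbing it.
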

Notably, the convergence rate in~\cref{thm:muon} is free of the explicit dimensional dependence, representing a significant advancement over the existing literature on Muon~\citep{li2025note,shen2025convergence,chang2025convergence,huang2025limuon}. While prior works are restricted to the simpler finite-variance setting ($p=2, \Lb_1=\V_1=\0$), they still suffer from an explicit dependence on the matrix dimensions through the term $\sqrt{\min\{m,n\}}\norm{\V_0}_\Fn$. In contrast, our results not only provide the first convergence guarantee for the general heavy-tailed regime ($p\in(1,2)$) but also yield a sharper bound in the $p=2$ case, as the nuclear norm $\norm{\V_0}_*$ is significantly smaller than the \emph{dimension-dependent} factors in previous bounds. This improvement underscores the strength of our refined noise model in~\cref{ass:heavy-tailed-noise-matrix}, which more effectively captures the structural noise properties inherent to matrix optimization.

\begin{remark}
\citet{sfyraki2025lions} recently established a suboptimal $O(T^{-\frac{p-1}{3p-2}}\log{T})$ convergence rate for \emph{clipped} versions of Lion and Muon under heavy-tailed noise. Their analysis exhibits several limitations compared to our work: (i) their assumptions and convergence criteria rely on Euclidean norms ($\norm{\cdot}_2, \norm{\cdot}_\Fn$), which fail to capture the intrinsic geometry of sign-based methods, particularly from their Frank-Wolfe perspective; (ii) they require an additional bounded gradient assumption ($\norm{\nabla f(\XB)}_\Fn \le G$) and still incur extraneous logarithmic factors; and (iii) their theory is restricted to settings with a strictly positive weight decay parameter ($\lambda > 0$) and provides no guarantees for the standard algorithms \emph{without clipping}.
\end{remark}

\subsection{Convergence Theory for Muonlight}
Proposed by~\citet{liu2025muon}, Muonlight has achieved remarkable success in large-scale LLM pretraining, as evidenced by the \href{https://huggingface.co/moonshotai/Moonlight-16B-A3B}{Muonlight-16B-A3B}, \href{https://huggingface.co/moonshotai/Kimi-K2-Base}{Kimi-K2}, \href{https://huggingface.co/zai-org/GLM-4.5}{GLM-4.5} models. It introduces two primary modifications to the original Muon algorithm: (i) the integration of Nesterov momentum and (ii) the addition of decoupled weight decay. The former is a standard technique in modern optimization~\citep{dozat2016nadam,xie2024adan}---already a key feature in the popular PyTorch implementation of Adam~\citep{NEURIPS2019PYTORCH}---whose effectiveness is often attributed to implicit variance reduction~\citep{wen2025fantastic,yuan25mars,liu2025mars}. The latter, decoupled weight decay, has proven essential for stabilizing the training trajectories of large foundation models~\citep{team2025kimi}. The complete procedure is detailed in~\cref{alg:muonlight}. Notably, while original implementations often restrict Nesterov momentum to $\beta_1 = \beta_2$, our framework supports a general $(\beta_1, \beta_2)$ configuration. We present the theoretical analysis of Muonlight below.
 
\begin{thm}\label{thm:muonlight}
    Under~\cref{ass:non-convexity-matrix,ass:generalized-smooth-matrix,ass:unbiased-matrix,ass:heavy-tailed-noise-matrix}, define $\Delta_f:=f(\XB_1)-f^*$. Set
    \begin{equation}\label{eq:muonlight-params}
        \begin{aligned}
        &B=\max\cbrac{1,\ceil{\brac{5958\norm{\V_1}_\op}^{\frac{p}{p-1}}}},\quad \beta_2=\max\cbrac{0,1-B^{\frac{2p-2}{3p-2}}\brac{\frac{\Delta_f\norm{\Lb_0}_*}{\norm{\V_0}_*^2T}}^{\frac{p}{3p-2}}},\\&\eta=\min\cbrac{\sqrt{\frac{4\Delta_f(1-\beta_2)}{15\norm{\Lb_0}_*T}},\frac{3(1-\beta_2)}{625\norm{\Lb_1}_\op}},\ \beta_1\in\sqbrac{\max\cbrac{0,\beta_2-\frac{3}{20}},\min\cbrac{1,\beta_2+\frac{3}{20}}},
        \end{aligned}        
    \end{equation}
    and $\lambda\in\sqbrac{0,\brac{1-2^{-\frac{1}{T}}}/\eta}$. If $\norm{\XB_1}_\op\le1/(3\lambda)$, then~\cref{alg:muonlight} ensures $\norm{\XB_t}_\op\le2/(3\lambda),\forall t\in[T]$, and
    \begin{align*}
        \frac{1}{T}\sum_{t=1}^T\E\sqbrac{\norm{\nabla f(\XB_t)}_*}\le O\brac{(\Delta_f\norm{\Lb_0}_*)^{\frac{p-1}{3p-2}}\norm{\V_0}_*^{\frac{p}{3p-2}}(BT)^{-\frac{p-1}{3p-2}}}.
    \end{align*}
\end{thm}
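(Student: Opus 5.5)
We follow the template used for Muon (Theorem~\ref{thm:muon}) and add two ingredients: a stability argument for the weight decay, and a comparison between the Nesterov buffer $\BBT_t$ and the ordinary momentum. First we normalize. Write $\MB_t:=(1-\beta_2)\BB_t$. Then $\MB_t=\beta_2\MB_{t-1}+(1-\beta_2)\Gb_t$ is an EMA, and $\msign{\BBT_t}=\msign{\tilde\MB_t}$ with
\[
\tilde\MB_t:=\tfrac{1-\beta_2}{1+\beta_1-\beta_1\beta_2}\,\BBT_t
=c\,\MB_t+(1-c)\Gb_t,\qquad c=\tfrac{\beta_1}{1+\beta_1-\beta_1\beta_2}\in[0,1],
\]
since $\msign{\cdot}$ is scale invariant.

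\emph{Stability of the iterates.} We have $\XB_{t+1}=(1-\eta\lambda)\XB_t-\eta\OB_t$ with $\norm{\OB_t}_\op\le1$. Unrolling gives
\[
\norm{\XB_{t}}_\op\le(1-\eta\lambda)^{t-1}\norm{\XB_1}_\op+\tfrac{1}{\lambda}\bigl(1-(1-\eta\lambda)^{t-1}\bigr).
\]
The choice $\lambda\le(1-2^{-1/T})/\eta$ gives $(1-\eta\lambda)^{T}\ge1/2$. Using $\norm{\XB_1}_\op\le1/(3\lambda)$, the right-hand side stays below $\tfrac13\cdot\tfrac1\lambda+\tfrac12\cdot\tfrac1\lambda\le 2/(3\lambda)$; this is the matrix analogue of Lemma~\ref{lem:stability-wd}. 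It yields the boundedness claim, and it also gives $\eta\lambda\norm{\XB_t}_\op\le 2\eta/3$, so the weight-decay term perturbs each step by only $O(\eta)$ in operator norm. The step size therefore satisfies $\norm{\XB_{t+1}-\XB_t}_\op\le\tfrac53\eta\le1/\norm{\Lb_1}_\op$, and Assumption~\ref{ass:generalized-smooth-matrix} applies along the trajectory.

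\emph{Descent lemma.} From Assumption~\ref{ass:generalized-smooth-matrix} we derive the usual quadratic upper bound with the weighted norm $\norm{\cdot}_{\Lb(\XB_t)}$. Using $\norm{\OB}^2_{\Lb(\XB)}\le\norm{\Lb(\XB)}_*\le\norm{\Lb_0}_*+\norm{\Lb_1}_\op\norm{\nabla f(\XB)}_*$, this gives
\[
f(\XB_{t+1})\le f(\XB_t)-\eta\inner{\nabla f(\XB_t)}{\msign{\tilde\MB_t}}-\eta\lambda\inner{\nabla f(\XB_t)}{\XB_t}+O(\eta^2)\brac{\norm{\Lb_0}_*+\norm{\Lb_1}_\op\norm{\nabla f(\XB_t)}_*}.
\]
The sign step satisfies $-\inner{\nabla f}{\msign{\tilde\MB}}\le-\norm{\nabla f}_*+2\norm{\nabla f-\tilde\MB}_*$. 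For the weight-decay term, $\abs{\eta\lambda\inner{\nabla f}{\XB_t}}\le\eta\lambda\cdot\tfrac{2}{3\lambda}\norm{\nabla f}_*=\tfrac23\eta\norm{\nabla f}_*$. This bound is too crude: it would consume two thirds of the descent. Instead we keep it as is and, following the Lion analysis, compare $f$ with a regularized surrogate, or equivalently treat $\lambda\XB_t$ as part of the update direction.

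\emph{Error decomposition.} Write $\bepsilon_t:=\MB_t-\nabla f(\XB_t)$. Then
\[
\tilde\MB_t-\nabla f(\XB_t)=c\,\bepsilon_t+(1-c)(\Gb_t-\nabla f(\XB_t)).
\]
Unrolling $\bepsilon_t$ gives a drift term $\sum_s\beta_2^{t-s}(\nabla f(\XB_{s})-\nabla f(\XB_{s+1}))$, controlled by smoothness as $O(\eta/(1-\beta_2))(\norm{\Lb_0}_*+\norm{\Lb_1}_\op\norm{\nabla f}_*)$, plus a martingale term $(1-\beta_2)\sum_s\beta_2^{t-s}\bxi_s$. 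The martingale term is bounded in nuclear norm by the paper's matrix martingale concentration inequality in the $\mabs{\V_0}^{-1}$ geometry (Section~\ref{sec:concentration-inequality}), which yields
\[
O\!\brac{(1-\beta_2)^{\frac{p-1}{p}}B^{-\frac{p-1}{p}}\brac{\norm{\V_0}_*+\norm{\V_1}_\op\norm{\nabla f}_*}}.
\]
The raw noise term $(1-c)(\Gb_t-\nabla f)$ has no averaging, so this is \textbf{the main obstacle}. The restriction $\abs{\beta_1-\beta_2}\le3/20$ is what lets us handle it. We would try to show that $1-c=O(1-\beta_2)$ when $\beta_1$ is near $\beta_2$ and close to $1$; in the case $\beta_2$ is small, the whole direction is essentially a single gradient and the EMA analysis already covers it.

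If that fails, the fallback is to express $\tilde\MB_t$ as another EMA-type combination of the $\Gb_s$ whose weights are bounded by $O(1-\beta_2)$ times geometric decay, and then apply the concentration inequality directly to that weighted sum. The gradient-proportional noise pieces are absorbed into $\tfrac12\norm{\nabla f}_*$ by the choice $B\ge(5958\norm{\V_1}_\op)^{p/(p-1)}$, and the $\norm{\Lb_1}$ pieces by $\eta\le3(1-\beta_2)/(625\norm{\Lb_1}_\op)$.

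\emph{Conclusion.} Summing over $t$, telescoping $f$, and taking expectations gives
\[
\tfrac1T\sum_t\E\norm{\nabla f(\XB_t)}_*\lesssim\tfrac{\Delta_f}{\eta T}+\tfrac{\eta\norm{\Lb_0}_*}{1-\beta_2}+(1-\beta_2)^{\frac{p-1}{p}}B^{-\frac{p-1}{p}}\norm{\V_0}_*.
\]
Substituting the stated $\eta$ and $\beta_2$ balances the three terms and yields the claimed $O(T^{-\frac{p-1}{3p-2}})$ rate.
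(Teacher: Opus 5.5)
Your skeleton matches the paper's proof: bounded iterates, the weighted descent lemma, splitting the update error into the EMA error plus fresh noise, nuclear-norm martingale concentration, absorbing the gradient-proportional terms, and balancing $\eta$ and $\beta_2$. However, two load-bearing steps are left open, and in both cases the fix is simpler than what you propose.

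\textbf{The normalization and your ``main obstacle''.} The identity $\tilde{\MB}_t=c\MB_t+(1-c)\Gb_t$ is false with your constant.
\begin{itemize}
\item Since $\BBT_t=\frac{\beta_1}{1-\beta_2}\MB_t+\Gb_t$, the weights sum to $\frac{1+\beta_1-\beta_2}{1-\beta_2}$. The convex normalization is therefore $\tilde{\MB}_t=\frac{1-\beta_2}{1+\beta_1-\beta_2}\BBT_t$, with $c=\frac{\beta_1}{1+\beta_1-\beta_2}$ and $1-c=\frac{1-\beta_2}{1+\beta_1-\beta_2}$.
\item With your denominator $1+\beta_1-\beta_1\beta_2$, the weights do not sum to one, and a bias proportional to $\bnabla_t$ reappears.
\item With the right constant there is no obstacle. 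The condition $\beta_1\ge\beta_2-\frac{3}{20}$ gives $1+\beta_1-\beta_2\ge\frac{17}{20}$, so $1-c\le\frac{20}{17}(1-\beta_2)$ on the whole admissible range, not only ``when $\beta_1$ is near $\beta_2$ and close to $1$''.
\item Then $(1-c)\E\norm{\NB_t}_*$ is at most a constant times $(1-\beta_2)B^{-\frac{p-1}{p}}\brac{\norm{\V_0}_*+\norm{\V_1}_\op\E\norm{\bnabla_t}_*}$. This is dominated by the $(1-\beta_2)^{\frac{p-1}{p}}$ martingale term.
\end{itemize}
As written (``we would try to show \dots; if that fails, the fallback \dots''), this step is not proved.

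Once fixed, your route is a genuine and cleaner alternative to the paper's. The paper scales by $1-\beta_2$ instead, obtaining $(1-\beta_2)\BBT_t-\bnabla_t=\beta_1\EB_t+(1-\beta_2)\NB_t+(\beta_1-\beta_2)\bnabla_t$. The fresh noise then automatically carries the factor $1-\beta_2$, but a bias $(\beta_1-\beta_2)\bnabla_t$ appears. The full constraint $\abs{\beta_1-\beta_2}\le\frac{3}{20}$ is spent absorbing that bias into the descent, which is why its final constants inflate to $500$, $1000$, $6658$ and $1875$. Your convex form has no bias term, uses only $\beta_1\ge\beta_2-\frac{3}{20}$, and gives much smaller constants.

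\textbf{The weight-decay term.} You reject $\eta\lambda\abs{\inner{\bnabla_t}{\XB_t}}\le\eta\lambda\norm{\XB_t}_\op\norm{\bnabla_t}_*\le\frac{2\eta}{3}\norm{\bnabla_t}_*$ as too crude. You defer instead to an unspecified regularized-surrogate argument, so weight decay never enters your final bound.
\begin{itemize}
\item This bound is exactly what is needed, and it is what the paper does (in its Lion proof as well). It leaves $-\frac{\eta}{3}\norm{\bnabla_t}_*$.
\item After the curvature term $\frac{25\eta^2}{9}\norm{\Lb_1}_\op\norm{\bnabla_t}_*$, with $\eta\le\frac{3(1-\beta_2)}{625\norm{\Lb_1}_\op}$, you still have $-\frac{8\eta}{25}\norm{\bnabla_t}_*$.
\item Losing a constant fraction of the descent only multiplies the error terms by $\frac{25}{4}$. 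Under the stated $B$ and $\eta$, the gradient-proportional error coefficients are small absolute constants (roughly $10^{-2}$), so they are still absorbed into the left-hand side.
\end{itemize}

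\textbf{Smaller points.}
\begin{itemize}
\item In the stability step, $\frac{1}{3\lambda}+\frac{1}{2\lambda}=\frac{5}{6\lambda}$ is not at most $\frac{2}{3\lambda}$. Bound the two pieces jointly: with $q=1-\eta\lambda$ and $q^{t-1}\ge\frac12$, $\frac{q^{t-1}}{3\lambda}+\frac{1-q^{t-1}}{\lambda}=\frac{1}{\lambda}\brac{1-\frac23q^{t-1}}\le\frac{2}{3\lambda}$.
\item The $\norm{\V_1}_\op$ part of the martingale bound needs the paper's backward conditioning recursion. The concentration lemma plus Jensen alone would only control $p$-th moments of $\norm{\bnabla_k}_*$.
\item Your final display omits the lower-order term $\frac{\norm{\bnabla_1}_*}{T(1-\beta_2)}$ coming from $\BB_0=\0$.
\end{itemize}
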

Several remarks are in order: (i) Muonlight achieves the same convergence bounds as Muon under generalized heavy-tailed noise. (ii) Our result provides the first analysis for $p\in(1,2)$ and improves upon previous \emph{dimension-dependent} rates for the $p=2, \Lb_1=\V_1=\0$ case~\citep{sato2025convergence,chang2025convergence}. (iii) We establish a uniform trajectory bound $\norm{\XB_t}_\op \le 2/(3\lambda)$, mirroring the stability of Lion in~\cref{thm:lion}. (iv) The theory supports a broad range for the Nesterov momentum $\beta_1$, reinforcing the empirical flexibility and practicality of the algorithm.

\section{How Sign Operator Works}

In this section, we elucidate why the sign operator naturally accommodates heavy-tailed noise, while concurrently presenting our proof sketch and novel analysis techniques. To clearly illustrate the core intuition, we restrict our focus to the simplified setting where $\lb_1=\Lb_1=\0$, considering only SignSGD and Muon (\cref{thm:signsgd,thm:muon}). The core technical intuition is that the sign operator $\sign{\cdot}$ acts as \emph{coordinate-wise} normalization, inheriting the robust properties of gradient normalization under heavy-tailed noise, while \textbf{shifting the optimization geometry from Euclidean to non-Euclidean space}. For the matrix sign $\msign{\cdot}$, its effect mirrors the vector case by performing normalization on each singular value.

\subsection{Signed Gradient As Non-Euclidean Normalized Gradient\label{sec:proof-sketch}}

We begin by reviewing how the normalized gradient method (\cref{alg:nsgd} in~\cref{sec:comparisons}) mitigates heavy-tailed noise. We invoke the following well-established lemma, attributed to~\citet{cutkosky2020momentum,jin2021nonconvexdro,liu2025nonconvex,pmlr-v258-hubler25a}.
\begin{lem}\label{lem:nsgd-basic}
    Under the conditions in~\cref{thm:signsgd},~\cref{alg:nsgd} ensures that
    \begin{align*}
        &\frac{1}{T}\sum_{t=1}^T\E\sqbrac{\norm{\nabla f(\x_t)}_2}\overset{\hypertarget{nsgd-descent}{\textcolor{blue}{\textnormal{(a)}}}}{\le}\frac{\Delta_f}{\eta T}+\frac{\eta\norm{\Lb_0}_\infty}{2}+\frac{2}{T}\sum_{t=1}^T\E\sqbrac{\norm{\eps_t}_2}\\\overset{\hypertarget{nsgd-noise-expand}{\textcolor{blue}{\textnormal{(b)}}}}{\le}&\frac{\Delta_f}{\eta T}+\frac{\eta\norm{\Lb_0}_\infty}{2}+\frac{2}{T}\sum_{t=1}^T \E\sqbrac{\norm{\sum_{k=2}^t\beta^{t-k+1}\s_k}_2 + \beta^{t-1}\norm{\n_1}_2+ (1-\beta)\norm{\sum_{k=2}^T\beta^{t-k}\n_k}_2 }  ,
    \end{align*}
    where error term $\eps_t$, noise term $\n_t$, and curvature term $\s_t$ are defined in~\eqref{eq:momentum-def}.
\end{lem}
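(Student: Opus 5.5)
The plan is to prove the two inequalities separately: (a) by a one-step descent argument for the normalized update, and (b) by unrolling the momentum recursion. Throughout I assume the definitions in \eqref{eq:momentum-def} are the standard ones, $\eps_t:=\m_t-\nabla f(\x_t)$, $\n_t:=\g_t-\nabla f(\x_t)$ and $\s_t:=\nabla f(\x_{t-1})-\nabla f(\x_t)$. I take \cref{alg:nsgd} to be \cref{alg:signsgd} with $\sign{\m_t}$ replaced by $\m_t/\norm{\m_t}_2$, and I set $\lb_1=\0$, consistent with the simplified setting of this section.

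For (a), I first note that \cref{ass:generalized-smooth} with $\lb_1=\0$ gives
\begin{align*}
f(\x')\le f(\x)+\inner{\nabla f(\x)}{\x'-\x}+\tfrac12\norm{\x'-\x}_{\lb_0}^2\le f(\x)+\inner{\nabla f(\x)}{\x'-\x}+\tfrac{\norm{\lb_0}_\infty}{2}\norm{\x'-\x}_2^2,
\end{align*}
so $f$ is Euclidean-smooth with constant $\norm{\lb_0}_\infty$ (written $\norm{\Lb_0}_\infty$ in the statement). Plugging in $\x_{t+1}-\x_t=-\eta\m_t/\norm{\m_t}_2$, which has unit-scaled norm $\eta$, gives
\begin{align*}
f(\x_{t+1})\le f(\x_t)-\eta\inner{\nabla f(\x_t)}{\m_t/\norm{\m_t}_2}+\eta^2\norm{\lb_0}_\infty/2.
\end{align*}
The key step is the classical normalization inequality
\begin{align*}
-\inner{\nabla f(\x_t)}{\m_t/\norm{\m_t}_2}\le-\norm{\nabla f(\x_t)}_2+2\norm{\eps_t}_2.
\end{align*}
To prove it, write $\nabla f=\m_t-\eps_t$. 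Then $\inner{\m_t}{\m_t}/\norm{\m_t}_2=\norm{\m_t}_2\ge\norm{\nabla f(\x_t)}_2-\norm{\eps_t}_2$, and $\abs{\inner{\eps_t}{\m_t/\norm{\m_t}_2}}\le\norm{\eps_t}_2$ by Cauchy--Schwarz. If $\m_t=\0$, the claim follows trivially from $\norm{\nabla f}_2=\norm{\eps_t}_2$. I then sum over $t\in[T]$, telescope $f(\x_1)-f(\x_{T+1})\le\Delta_f$ using \cref{ass:non-convexity}, take expectations, and divide by $\eta T$, which yields (a).

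For (b), I unroll the momentum recursion. Subtracting $\nabla f(\x_t)$ from $\m_t=\beta\m_{t-1}+(1-\beta)\g_t$ and inserting $\pm\beta\nabla f(\x_{t-1})$ gives
\begin{align*}
\eps_t=\beta\eps_{t-1}+\beta\s_t+(1-\beta)\n_t\qquad\text{for }t\ge2.
\end{align*}
At $t=1$ the convention $\m_0=\g_1$ gives $\m_1=\g_1$, so $\eps_1=\n_1$. Induction then yields
\begin{align*}
\eps_t=\sum_{k=2}^t\beta^{t-k+1}\s_k+\beta^{t-1}\n_1+(1-\beta)\sum_{k=2}^t\beta^{t-k}\n_k,
\end{align*}
and the triangle inequality gives the bracketed expression in (b). The upper summation index $T$ in the last sum of the statement should read $t$, which is what the unrolled recursion produces.

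I expect no step to be a genuine obstacle. The only delicate points are bookkeeping ones: the index convention at $t=1$ produced by $\m_0=\g_1$, and confirming that the $\lb_0$-weighted quadratic bound reduces to the $\ell_\infty$ constant as claimed.
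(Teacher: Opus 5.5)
Your proof is correct and takes essentially the same route as the paper. The paper cites this lemma as standard rather than proving it, but it proves the SignSGD analogue in \cref{sec:proof-signsgd} by exactly this one-step descent plus unrolling of $\eps_t=\beta\eps_{t-1}+\beta\s_t+(1-\beta)\n_t$ from $\eps_1=\n_1$, with your inequality $-\inner{\nabla f(\x_t)}{\m_t/\norm{\m_t}_2}\le-\norm{\nabla f(\x_t)}_2+2\norm{\eps_t}_2$ playing the role of \cref{lem:sign-difference}. Your readings $\lb_1=\0$, $\norm{\Lb_0}_\infty=\norm{\lb_0}_\infty$, and upper index $t$ rather than $T$ in the last sum agree with the paper's own appendix derivation.
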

With~\cref{lem:nsgd-basic}, our goal is to bound the three terms in expectation. The first two term, namely the cumulative curvature term $\E\sqbrac{\norm{\sum_{k=2}^t\beta^{t-k+1}\s_k}_2}$ and the initial noise $\E\sqbrac{\beta^{t-1}\norm{\n_1}_2}$ admit straightforward upper bounds under~\cref{ass:generalized-smooth,ass:heavy-tailed-noise}. The crux of the analysis lies in controlling the cumulative noise $\E\sqbrac{\norm{\sum_{k=2}^T\beta^{t-k}\n_k}_2}$. This term is bounded by applying Hölder's inequality followed by standard von Bahr-Esseen type concentration inequalities~\citep{bahr1965inequalities,konilov2023accelerated,pmlr-v258-hubler25a}: $\E\sqbrac{\norm{\sum_{k=2}^T\beta^{t-k}\n_k}_2}\le\brac{\E\sqbrac{\norm{\sum_{k=2}^T\beta^{t-k}\n_k}_2^p}}^{\frac{1}{p}}\le\brac{2\sum_{k=2}^T\E\sqbrac{\norm{\beta^{t-k}\n_k}_2^p}}^{\frac{1}{p}}$. Invoking the heavy-tailed noise assumptions yields the optimal sublinear rate.

We now turn to sign-based gradient methods. The sign operator $\sign{\x}$ effectively performs \emph{coordinate-wise} normalization, as it satisfies $\norm{\sign{\x}}_\infty=1$ and $\norm{\sign{\x}}_1=d$. Indeed, signed gradient and normalized gradient methods can be viewed as instances of normalized steepest descent with respect to the $\ell_\infty$- and $\ell_2$-norms, respectively~\citep{bernstein2024old,yadav2025provable}. This fundamental shift in algorithmic geometry is formalized in the following lemma, with the proof provided in~\cref{sec:proof-signsgd}.
\begin{lem}\label{lem:signsgd-basic}
    Under the same conditions and notations as in~\cref{lem:nsgd-basic},~\cref{alg:signsgd} ensures that 
    \begin{align*}
        &\frac{1}{T}\sum_{t=1}^T\E\sqbrac{\norm{\nabla f(\x_t)}_1}\overset{\hypertarget{signsgd-descent}{\textcolor{blue}{\textnormal{(a)}}}}{\le}\frac{\Delta_f}{\eta T}+\frac{\eta\norm{\Lb_0}_1}{2}+\frac{2}{T}\sum_{t=1}^T\E\sqbrac{\norm{\eps_t}_1}\\\overset{\hypertarget{signsgd-noise-expand}{\textcolor{blue}{\textnormal{(b)}}}}{\le}&\frac{\Delta_f}{\eta T}+\frac{\eta\norm{\Lb_0}_1}{2}+\frac{2}{T}\sum_{t=1}^T \E\sqbrac{\norm{\sum_{k=2}^t\beta^{t-k+1}\s_k}_1 + \beta^{t-1}\norm{\n_1}_1+ (1-\beta)\norm{\sum_{k=2}^T\beta^{t-k}\n_k}_1 }  .
    \end{align*}
\end{lem}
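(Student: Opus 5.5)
Define $\eps_t:=\m_t-\nabla f(\x_t)$. The plan mirrors the normalized-SGD argument behind \cref{lem:nsgd-basic}, with the $\ell_2$ geometry replaced by the $(\ell_\infty,\ell_1)$ dual pair. We work in the simplified regime $\lb_1=\0$ considered in this section, so that \cref{ass:generalized-smooth} reduces to coordinate-wise smoothness with weights $\lb_0$. Since $\norm{\eta\,\sign{\m_t}}_\infty=\eta$, the step lies in the region where \eqref{eq:vector-smooth} applies. This gives
\[
f(\x_{t+1})\le f(\x_t)-\eta\inner{\nabla f(\x_t)}{\sign{\m_t}}+\frac{\eta^2}{2}\norm{\sign{\m_t}}_{\lb_0}^2 .
\]
The last term equals $\frac{\eta^2}{2}\norm{\lb_0}_1$, because $\sign{\m_t}_i^2\le1$.

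The key step is the sign inequality
\[
\inner{\nabla f(\x_t)}{\sign{\m_t}}\ge\norm{\nabla f(\x_t)}_1-2\norm{\eps_t}_1 .
\]
It follows coordinatewise. If $\sign{\m_{t,i}}=\sign{\nabla_i f(\x_t)}$, the contribution is $\abs{\nabla_i f(\x_t)}$. Otherwise, $\abs{\nabla_i f(\x_t)}\le\abs{\m_{t,i}-\nabla_if(\x_t)}$, so the contribution is at least $-\abs{\nabla_if(\x_t)}\ge\abs{\nabla_if(\x_t)}-2\abs{\eps_{t,i}}$. This is the analogue of the standard normalized-gradient inequality $\inner{\nabla f}{\m/\norm{\m}_2}\ge\norm{\nabla f}_2-2\norm{\eps}_2$. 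Substituting it, summing over $t\in[T]$, telescoping with $f(\x_{T+1})\ge f^*$, dividing by $\eta T$, and taking expectations yields (a).

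For (b), unroll the momentum recursion. Write $\n_k:=\g_k-\nabla f(\x_k)$ and $\s_k:=\nabla f(\x_{k-1})-\nabla f(\x_k)$, consistent with \eqref{eq:momentum-def}. Using $\m_0=\g_1$, we have $\eps_1=\n_1$, and
\[
\eps_t=\beta\big(\eps_{t-1}+\s_t\big)+(1-\beta)\n_t .
\]
Iterating gives
\[
\eps_t=\sum_{k=2}^t\beta^{t-k+1}\s_k+\beta^{t-1}\n_1+(1-\beta)\sum_{k=2}^t\beta^{t-k}\n_k .
\]
Applying the triangle inequality in $\ell_1$ then gives (b). The upper summation limit in the noise sum should read $t$, matching the statement's intent.

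The main obstacle is bookkeeping rather than depth. First, we must check that the smoothness assumption is usable for the step size, i.e.\ that $\eta\le1/\norm{\lb_1}_\infty$ when $\lb_1\neq\0$. In that case, the extra curvature term $\frac{\eta^2}{2}\norm{\lb_1\odot\abs{\nabla f(\x_t)}}_1\le\frac{\eta^2}{2}\norm{\lb_1}_\infty\norm{\nabla f(\x_t)}_1$ must be absorbed into the left side, using the step-size cap in \eqref{eq:signsgd-params}. Second, we must confirm the index conventions of $\s_k$ and $\n_k$ against \eqref{eq:momentum-def}.
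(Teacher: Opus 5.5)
Your proposal is correct and follows the paper's own argument essentially step for step. The ingredients are the quadratic upper bound from the smoothness assumption, the sign-difference inequality $\inner{\x}{\sign{\x}-\sign{\y}}\le 2\norm{\x-\y}_1$ (which you reprove coordinatewise), telescoping against $f^*$, and then unrolling $\eps_t=\beta\eps_{t-1}+\beta\s_t+(1-\beta)\n_t$ from $\eps_1=\n_1$ followed by the triangle inequality, with the noise sum indeed running up to $t$.

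One caveat concerns your side remark. For $\lb_1\neq\0$, absorbing the extra curvature term recovers (a) only up to constants: the appendix gets $2\Delta_f/(\eta T)$, $\eta\norm{\lb_0}_1$ and a factor $4$ on the error term. This is why the stated constants belong to the $\lb_1=\0$ regime of this section.
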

Compared to~\cref{lem:nsgd-basic}, the primary distinction lies in the shift from the $\ell_2$-norm to the $\ell_1$-norm for both the convergence criterion $\norm{\nabla f(\x_t)}$ and the error term $\norm{\eps_t}$ (along with the three decomposed components).~\cref{lem:signsgd-basic} demonstrates that \emph{the signed gradient operates as a non-Euclidean form of gradient normalization, fundamentally altering the problem's geometry and the corresponding analysis}. A more comprehensive discussion of these geometric implications is provided in~\cref{sec:comparisons}. Similar to standard gradient normalization, the principal analytical challenge for~\cref{alg:signsgd} is bounding the cumulative noise in the $\ell_1$-norm, i.e., $\E\sqbrac{\norm{\sum_{k=2}^T\beta^{t-k}\n_k}_1}$. However, since standard concentration inequalities are typically restricted to Euclidean norms, this necessitates the development of novel techniques to control this critical term.

\subsection{New Martingale Concentration Inequalities\label{sec:concentration-inequality}}

To address the major technical challenge identified in~\cref{sec:proof-sketch}, we seek to bound the (discounted) sum of vector martingales in the $\ell_1$-norm, i.e., $\E\sqbrac{\norm{\sum_{k=2}^T\beta^{t-k}\n_k}_1}$. A close inspection of the von Bahr-Esseen inequality (e.g., Lemma~10 in~\citet{pmlr-v258-hubler25a}) reveals its inherent reliance on Euclidean geometry, rendering it ineffective for non-Euclidean norms like $\norm{\cdot}_1$. Furthermore, the standard approach of applying Hölder's inequality complicates the analysis by introducing $\norm{\cdot}^p_1$. This prompts a key question: \emph{Can we bypass Hölder's inequality and derive a concentration inequality tailored specifically to the $\ell_1$-norm?}

We provide an affirmative answer by establishing a new vector martingale concentration inequality in~\cref{lem:vector-concentration}, which bounds the $\ell_1$-norm of a vector martingale via its \emph{coordinate-wise} variance. By applying~\cref{lem:vector-concentration} with $\g_k=\beta^{t-k}\n_k$, we recover the tractable term $\E\sqbrac{\beta^{p(t-k)}\abs{\n_{k,i}}^p}$ (following ~\eqref{eq:Bt-initial-bound} and~\eqref{eq:recursion}), which is directly controllable under~\cref{ass:heavy-tailed-noise}.

\begin{lem}[Concentration in $\ell_1$-norm]\label{lem:vector-concentration}
    Let $\cbrac{\g_t}_{t=1}^T\subset\R^d$ be a vector martingale difference sequence such that $\E\sqbrac{\g_t|\F_{t-1}}=\0$, where $\F_t=\sigma\brac{\g_1,\cdots,\g_t}$ is the natural filtration. Then  
    \begin{align*}
        \E\sqbrac{\norm{\sum_{t=1}^T\g_t}_1}\le2\sqrt{2}\sum_{i=1}^d\E\sqbrac{\norm{\g_{1:T,i}}_2}\le2\sqrt{2}\sum_{i=1}^d\E\sqbrac{\norm{\g_{1:T,i}}_p},\quad\forall p\in[1,2],
    \end{align*}
    where $\g_{1:t,i}:=[\g_{1,i},\cdots,\g_{t,i}]\in\R^t$.
\end{lem}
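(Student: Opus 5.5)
The plan is to reduce \cref{lem:vector-concentration} to a scalar statement and then prove a Davis/Burkholder-type inequality for $p=1$ with an explicit constant. By the triangle inequality, $\E\sqbrac{\norm{\sum_{t=1}^T\g_t}_1}=\sum_{i=1}^d\E\sqbrac{\abs{\sum_{t=1}^T\g_{t,i}}}$. For each fixed $i$, the sequence $\cbrac{\g_{t,i}}_{t=1}^T$ is a scalar martingale difference sequence with respect to the same filtration $\F_t$: conditional expectation acts coordinatewise, so $\E\sqbrac{\g_{t,i}|\F_{t-1}}=0$. It therefore suffices to show that a scalar martingale $M_t:=\sum_{s\le t}x_s$, with square function $S_t:=\brac{\sum_{s\le t}x_s^2}^{1/2}=\norm{x_{1:t}}_2$, satisfies
\begin{align*}
\E\abs{M_T}\le2\sqrt{2}\,\E\sqbrac{S_T}.
\end{align*}
The second inequality of the lemma is then immediate from the monotonicity $\norm{\cdot}_2\le\norm{\cdot}_p$ for $p\in[1,2]$, applied pathwise.

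For the scalar bound I will use a self-normalization plus Cauchy--Schwarz argument. Writing $\abs{M_T}=\brac{\abs{M_T}/\sqrt{S_T}}\cdot\sqrt{S_T}$ (with the convention $0/0=0$) gives
\begin{align*}
\E\abs{M_T}\le\sqrt{\E\sqbrac{M_T^2/S_T}}\cdot\sqrt{\E\sqbrac{S_T}},
\end{align*}
so the goal becomes $\E\sqbrac{M_T^2/S_T}\le 8\,\E\sqbrac{S_T}$. To get this I will telescope:
\begin{align*}
\frac{M_T^2}{S_T}=\sum_{t=1}^T\brac{\frac{M_t^2}{S_t}-\frac{M_{t-1}^2}{S_{t-1}}}\le\sum_{t=1}^T\frac{2M_{t-1}x_t+x_t^2}{S_t},
\end{align*}
where the inequality uses $S_t\ge S_{t-1}$. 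For the quadratic part, the standard estimate $\sum_t x_t^2/S_t\le 2S_T$ holds, via $x_t^2/S_t=(S_t^2-S_{t-1}^2)/S_t\le2(S_t-S_{t-1})$.

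The main obstacle is the cross term $\E\sqbrac{M_{t-1}x_t/S_t}$. It would vanish by the martingale property if the denominator were $\F_{t-1}$-measurable, but $S_t$ depends on $x_t$. My first attempt is to compare $1/S_t$ with a predictable surrogate and bound the discrepancy. Since $S_t^2=S_{t-1}^2+x_t^2$, we have $\abs{1/S_t-1/S_{t-1}}\le x_t^2/(S_tS_{t-1}(S_t+S_{t-1}))$, so the error is at most $\abs{M_{t-1}}x_t^3/(S_t^2S_{t-1})$. Using $\abs{x_t}\le S_t$, this is at most $\abs{M_{t-1}}\,x_t^2/(S_tS_{t-1})$. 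I then intend to absorb the resulting sum back into the left-hand quantity through an AM--GM split $\abs{M_{t-1}}/\sqrt{S_{t-1}}\cdot x_t^2/(S_t\sqrt{S_{t-1}})$, combined with a stopping argument at the first time $\abs{M_t}/\sqrt{S_t}$ becomes large.

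If this bookkeeping does not close with constant $8$, I will fall back to a different route. This is the classical Davis decomposition: split $x_t$ into a part with jumps controlled by the predictable running maximum $\max_{s<t}\abs{x_s}$, which is handled by the $L^2$ Doob/Burkholder bound after stopping, and a large-jump part whose total variation is at most $2\max_s\abs{x_s}\le2S_T$. Either route yields $\E\abs{M_T}\le C\,\E\sqbrac{S_T}$, and the remaining work is to track constants so that $C\le2\sqrt{2}$. Summing the scalar bound over the coordinates $i\in[d]$ then completes the proof.
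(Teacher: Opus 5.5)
Your reduction to one scalar martingale $M_t=\sum_{s\le t}x_s$ per coordinate is fine and agrees with the paper, and so is the final step $\norm{\cdot}_2\le\norm{\cdot}_p$. The core scalar inequality $\E\abs{M_T}\le2\sqrt{2}\,\E\sqbrac{S_T}$, however, is not established.

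\textbf{The main route cannot be closed.} Its intermediate target $\E\sqbrac{M_T^2/S_T}\le8\,\E\sqbrac{S_T}$ is false for general martingale differences. Take $x_t$ i.i.d.\ with $\Pr(x_t=-1)=1-1/T$ and $\Pr(x_t=T-1)=1/T$. For $T\ge2$, with probability $(1-1/T)^T\ge1/4$ no jump occurs. On that event $M_T=-T$ and $S_T=\sqrt{T}$, so $\E\sqbrac{M_T^2/S_T}\ge T^{3/2}/4$. On the other hand $\E\sqbrac{S_T}\le\sqrt{\E\sqbrac{S_T^2}}=\sqrt{T(T-1)}<T$, so the ratio exceeds $\sqrt{T}/4$.

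In the same example, the cross term you flagged, $\E\sqbrac{M_{t-1}x_t/S_t\mid\F_{t-1}}$, is of order $\sqrt{t}$ on the event that no jump has occurred before time $t$. So no stopping argument or AM--GM absorption can rescue it. The underlying problem is that Cauchy--Schwarz pays quadratically for the ratio $\abs{M_T}/S_T$, which is large exactly where $S_T$ is atypically small.

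\textbf{The fallback does not give the stated constant.} The Davis decomposition yields $\E\abs{M_T}\le C\,\E\sqbrac{S_T}$ for some absolute $C$, but not the asserted $C=2\sqrt{2}$. The large-jump part alone, after removing its compensator, is estimated by $2\E\sum_t\abs{x_t}\ind\cbrac{\abs{x_t}>2\max_{s<t}\abs{x_s}}\le4\,\E\max_t\abs{x_t}$. That is before the small-jump part, which needs its own stopping argument, is added. The proposal gives no mechanism for reaching $2\sqrt{2}$.

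\textbf{The missing idea (the paper's route)} is to replace the non-predictable normalizer $1/S_t$ by a bounded predictable ``bet'' produced by an online algorithm. Run projected gradient descent on $[-1,1]$ with AdaGrad steps: $w_1=0$, $w_{t+1}=\Pi_{[-1,1]}\brac{w_t-\gamma_tx_t}$ with $\gamma_t=\sqrt{2}/S_t$, and $w_{t+1}=0$ while $S_t=0$. The step uses $x_t$ but only enters $w_{t+1}$, so $w_t$ is $\F_{t-1}$-measurable with $\abs{w_t}\le1$.

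Apply the standard adaptive regret bound against the comparator $u=-\sign{M_T}$:
\begin{itemize}
\item the distance term is $D^2/(2\gamma_T)=\sqrt{2}\,S_T$ with diameter $D=2$;
\item the gradient term is $\sum_t\gamma_tx_t^2/2\le\sqrt{2}\,S_T$, by exactly your estimate $\sum_tx_t^2/S_t\le2S_T$.
\end{itemize}
This gives the pathwise inequality $\sum_tw_tx_t\le2\sqrt{2}\,S_T-\abs{M_T}$ for every sequence. Taking expectations, $\E\sqbrac{w_tx_t}=\E\sqbrac{w_t\E\sqbrac{x_t\mid\F_{t-1}}}=0$. Hence $\E\abs{M_T}\le2\sqrt{2}\,\E\sqbrac{S_T}$ with the exact constant, and summing over coordinates completes the proof.
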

Inspired by~\citet{rakhlin2017equivalence,liu2025nonconvex}, we delve deeply into the regret analysis of diagonal AdaGrad~\citep{McMahanS10adagrad,duchi2011adaptive} to prove this lemma (cf.~\cref{lem:adagrad-regret,lem:MDS-L1-concentration}). This conceptual strategy---\emph{concentrating vector martingales through the lens of regret analysis in online learning}---will be further leveraged in the matrix setting. Consider the matrix normalized gradient method (\cref{alg:mnsgd}) and Muon (\cref{alg:muon}), where the effect of matrix orthogonalization $\msign{\cdot}$ in Muon plays a role analogous to the vector sign operator $\sign{\cdot}$ discussed in~\cref{sec:proof-sketch}. Concretely, the norms highlighted in~\cref{lem:nsgd-basic,lem:signsgd-basic} map directly to their matrix Schatten $p$-norm counterparts: $\norm{\cdot}_2\to\norm{\cdot}_\Fn,\norm{\cdot}_1\to\norm{\cdot}_*$. Consequently, we face the similar challenge of bounding $\E\sqbrac{\norm{\sum_{k=2}^T\beta^{t-k}\NB_k}_*}$ ($\NB_k=\Gb_k-\nabla f(\XB_k)$), necessitating a novel matrix concentration inequality. To resolve this, we present the following lemma to bound $\norm{\sum_{t=1}^T\Gb_t}_*$ by its covariance.

\begin{lem}[Concentration in nuclear norm, full version in~\cref{lem:asgo-regret}]\label{lem:matrix-concentration}
    Let $\cbrac{\Gb_t}_{t=1}^T\subset\R^{m\times n}$ be a matrix martingale difference sequence such that $\E\sqbrac{\Gb_t|\F_{t-1}}=\0$, where $\F_t=\sigma\brac{\Gb_1,\cdots,\Gb_t}$ is the natural filtration. Then, it holds that
    \begin{align*}
        \E\sqbrac{\norm{\sum_{t=1}^T\Gb_t}_*}\le2\sqrt{2}\E\sqbrac{\norm{\brac{\sum_{t=1}^T\Gb_t\Gb_t^\top}^{1/2}}_*}.
    \end{align*}
\end{lem}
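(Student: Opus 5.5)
The plan is to prove \cref{lem:matrix-concentration} the same way as \cref{lem:vector-concentration}: view the nuclear norm of the martingale sum as the payoff of the best fixed comparator in an online linear optimization game, and bound it by the regret of an adaptive online learner plus a zero-mean martingale term. The learner will be a one-sided preconditioned AdaGrad method (ASGO), rather than diagonal AdaGrad. Write $\SB:=\sum_{t=1}^T\Gb_t$ and use the duality $\norm{\SB}_*=\sup_{\norm{\WB}_\op\le1}\inner{\WB}{\SB}=\inner{\msign{\SB}}{\SB}$. Let $\{\WB_t\}$ be any sequence of iterates in the operator-norm unit ball $\mathcal{W}:=\{\WB:\norm{\WB}_\op\le1\}$ with each $\WB_t$ being $\F_{t-1}$-measurable. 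Then, pathwise,
\begin{align*}
\norm{\SB}_*=\sum_{t=1}^T\inner{\WB_t}{\Gb_t}+\sum_{t=1}^T\inner{\msign{\SB}-\WB_t}{\Gb_t}.
\end{align*}
The first sum has zero expectation because $\E[\inner{\WB_t}{\Gb_t}|\F_{t-1}]=\inner{\WB_t}{\E[\Gb_t|\F_{t-1}]}=0$; integrability comes from $\norm{\WB_t}_\op\le1$, assuming $\E\norm{\Gb_t}_*<\infty$ (otherwise the right-hand side of the lemma is infinite anyway). The second sum is the regret, against the comparator $\msign{\SB}$, on the linear losses $\ell_t(\WB)=-\inner{\WB}{\Gb_t}$. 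This comparator depends on the whole sequence, so I need a regret bound that holds deterministically for every comparator in $\mathcal{W}$.

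For the learner I take $\HB_t:=\brac{\epsilon\I+\sum_{s\le t}\Gb_s\Gb_s^\top}^{1/2}$ and the update $\WB_{t+1}=\Pi^{\HB_t}_{\mathcal{W}}\brac{\WB_t+\eta\HB_t^{-1}\Gb_t}$, with $\WB_1=\0$. Here $\Pi^{\HB_t}_{\mathcal{W}}$ denotes projection in the weighted norm $\norm{\cdot}_{\HB_t}$. Since $\WB_t$ uses only $\Gb_1,\dots,\Gb_{t-1}$, it is $\F_{t-1}$-measurable. The standard mirror-descent analysis with time-varying quadratic potentials will give, for any $\WB^\star\in\mathcal{W}$,
\begin{align*}
\mathrm{Reg}_T(\WB^\star)\le\frac{1}{2\eta}\sum_{t=1}^T\norm{\WB_t-\WB^\star}^2_{\HB_t-\HB_{t-1}}+\frac{1}{2\eta}\norm{\WB_1-\WB^\star}^2_{\HB_0}+\frac{\eta}{2}\sum_{t=1}^T\norm{\Gb_t}^2_{\HB_t^{-1}}.
\end{align*}
The proof uses three facts.
\begin{itemize}
\item The map $\HB_t\mapsto\HB_t-\HB_{t-1}\succeq\0$ is monotone, because the matrix square root is operator monotone.
\item For $\DB\succeq\0$ we have $\tr{\AB^\top\DB\AB}\le\norm{\AB}_\op^2\tr{\DB}$. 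With $\norm{\WB_t-\WB^\star}_\op\le2$, each distance term is at most $4\,\tr{\HB_t-\HB_{t-1}}$, and these telescope to $4\,\tr{\HB_T}$.
\item The gradient terms are controlled by the one-sided matrix AdaGrad lemma, $\sum_{t}\tr{\Gb_t^\top\HB_t^{-1}\Gb_t}\le2\,\tr{\HB_T}$. I would prove this via $\AB_t^{1/2}-\AB_{t-1}^{1/2}\succeq\tfrac12\AB_t^{-1/2}(\AB_t-\AB_{t-1})$, which follows from operator concavity of the square root, or equivalently from $\tr{\AB^{1/2}}-\tr{(\AB-\DB)^{1/2}}\ge\tfrac12\tr{\AB^{-1/2}\DB}$.
\end{itemize}

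Combining these gives $\mathrm{Reg}_T\le\brac{\frac{2}{\eta}+\eta}\tr{\HB_T}+O(\sqrt{\epsilon})$. Choosing $\eta=\sqrt2$ yields $2\sqrt2\,\tr{\HB_T}$. Taking expectations and then $\epsilon\to0$ by monotone convergence, using $\tr{\HB_T}\to\norm{(\sum_t\Gb_t\Gb_t^\top)^{1/2}}_*$, gives the claim.

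I expect the main obstacle to be the matrix AdaGrad trace inequality together with the noncommutativity in the distance term. I must check that the weighted projection in $\norm{\cdot}_{\HB_t}$ still gives the three-point inequality when $\HB_t$ acts only from the left. It does, because $\norm{\cdot}_{\HB_t}$ is a genuine inner-product norm on $\R^{m\times n}$ and $\mathcal{W}$ is convex. I also need the bound $\tr{\AB^\top\DB\AB}\le\norm{\AB}_\op^2\tr{\DB}$, which holds since $\AB\AB^\top\preceq\norm{\AB}_\op^2\I$. The $\epsilon$-regularization is what guarantees invertibility of $\HB_t$, so the limiting argument has to be handled carefully at the end.
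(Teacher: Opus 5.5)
Your proposal is correct and follows essentially the same route as the paper's proof: an online-learning reduction using a one-sided (ASGO-type) preconditioned projected method on the operator-norm unit ball. As in the paper, the distance terms telescope to $\frac{2}{\eta}\tr{\cdot}$, the gradient terms are controlled by the one-sided matrix AdaGrad trace inequality, $\eta=\sqrt{2}$, the comparator is the nuclear/operator-norm dual, the martingale term has zero mean, and $\epsilon\downarrow 0$ at the end. The differences are cosmetic: you regularize as $(\epsilon\IB+\sum_{s\le t}\Gb_s\Gb_s^\top)^{1/2}$ rather than $(\sum_{s\le t}\Gb_s\Gb_s^\top)^{1/2}+\epsilon\IB$ (a slightly cleaner choice for the trace inequality), and you prove that inequality yourself via concavity of $\XB\mapsto\tr{\XB^{1/2}}$. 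Rely only on that trace form, because the Loewner-order version you also wrote has a nonsymmetric right-hand side and is not well posed.
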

\cref{lem:matrix-concentration} stands in sharp contrast to existing matrix concentration inequalities, which typically rely on $\norm{\cdot}_\Fn, \norm{\cdot}_\op$ or suffer from dimension-dependent factors~\citep{nemirovski2007sums,so2011moment,TIT:2014:Zhang,tropp2015introduction}. At first glance, this result appears elusive. To derive it, we seek to extend the regret-based concentration argument from~\cref{lem:vector-concentration} to the matrix setting. The most natural candidate for this extension is Shampoo~\citep{gupta18shampoo}, which serves as the matrix counterpart to AdaGrad. However, this direct path is blocked because the standard Shampoo regret bound incurs explicit dimension dependence. To overcome this barrier, we instead conduct a new regret analysis of \emph{one-sided} Shampoo~\citep{an2025asgo,xie2025structured}. This critical pivot allows us to establish a desirable bound that, equipped with~\cref{lem:matrix-concentration}, enables the clean cancellation of terms (cf.~\eqref{eq:Bt-cancellation}) necessary to handle the heavy-tailed noise.

\begin{figure}[t]
\centering
\includegraphics[width=\columnwidth]{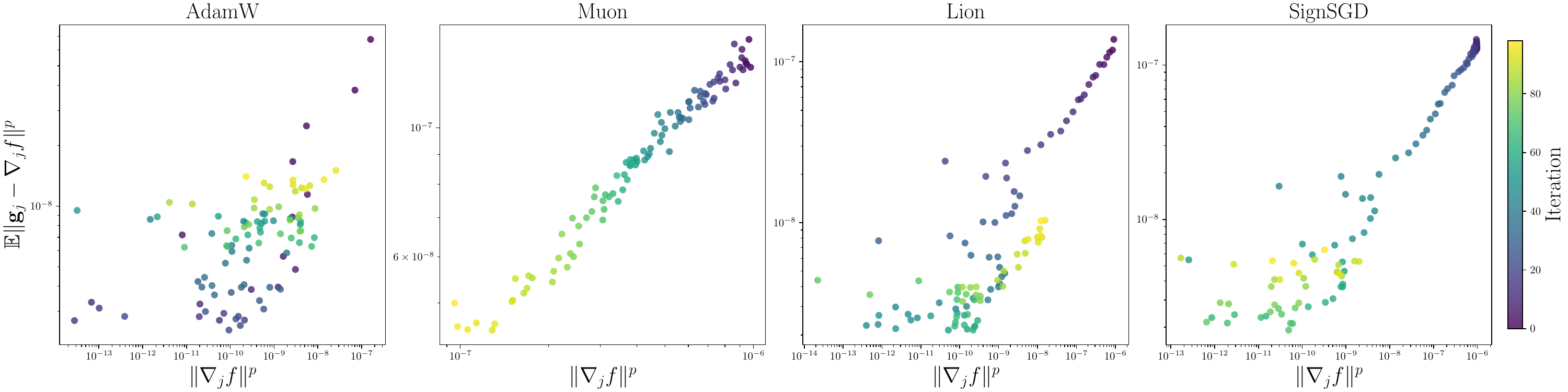}
\caption{Verification of~\cref{ass:heavy-tailed-noise}. \textbf{x-axis}: $|\nabla_j f|^p$, \textbf{y-axis}: $\E \sqbrac{\abs{\g_j - \nabla_j f}^p}$.}
\label{figs:assumption_4a_verification_start=0_j=1555144}
\end{figure}
\begin{figure}[t]
\centering
\includegraphics[width=\columnwidth]{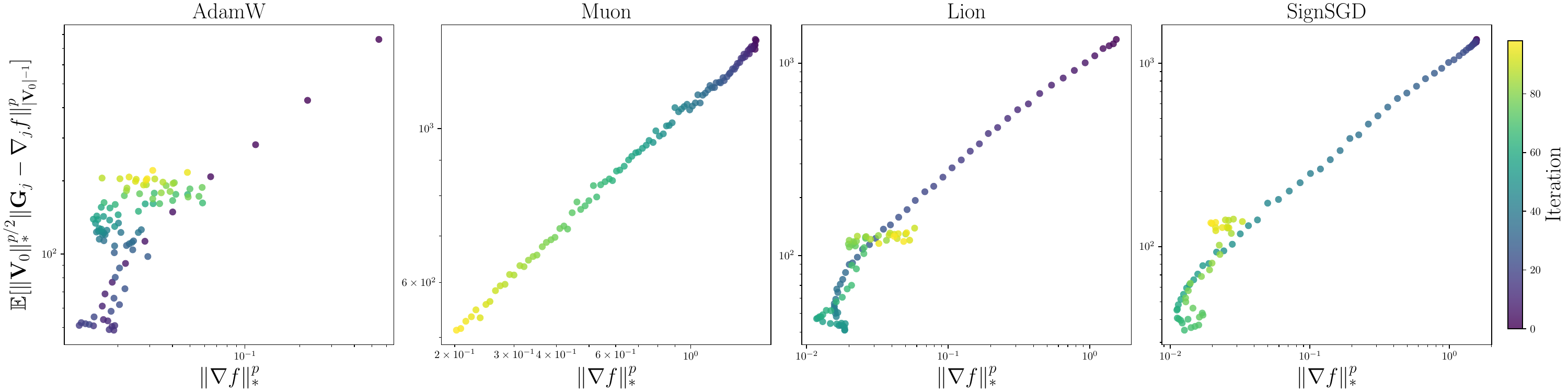}
\caption{Verification of~\cref{ass:heavy-tailed-noise-matrix}. \textbf{x-axis}: $\norm{\nabla f}_*^p$, \textbf{y-axis}: $\E \sqbrac{\norm{\V_0}_*^{p/2} \norm{\Gb - \nabla f}^p_{\mabs{\V_0}^{-1}}}$.}
\label{figs:assumption_4c_verification_start=0}
\end{figure}

\section{Empirical Study\label{sec:experiments}}
In this section, we provide empirical evidence to (i) demonstrate that our proposed noise models in~\cref{ass:heavy-tailed-noise,ass:heavy-tailed-noise-matrix} accurately reflect the stochasticity in LLM training, and (ii) evaluate the efficacy of sign-based optimizers under these heavy-tailed regimes. Code is available at \href{https://github.com/Dingzhen230/Heavy-tailed-Noise-in-LLMs}{https://github.com/Dingzhen230/Heavy-tailed-Noise-in-LLMs}. Further details regarding hyperparameters and the complete experimental setup are available in~\cref{sec:experiments-add}. \cref{figs:assumption_4a_verification_start=0_j=1555144,figs:assumption_4c_verification_start=0} shows that our proposed generalized heavy-tailed noise model closely reflects the noise pattern in practice. \cref{figs:train_val_and_speedup} verifies the practical efficiency of sign-based optimizers in LLM pretraining, where Lion and Muon achieve \textcolor{red}{1.07$\times$} and \textcolor{red}{1.32$\times$ \faRocket} speedup over AdamW, respectively. 

\begin{figure}[h]
    \centering
    \captionsetup[subfloat]{font=small, justification=centering}
    \vspace{-20pt}
    \subfloat[Train loss]{
        \includegraphics[width=.175\textwidth]{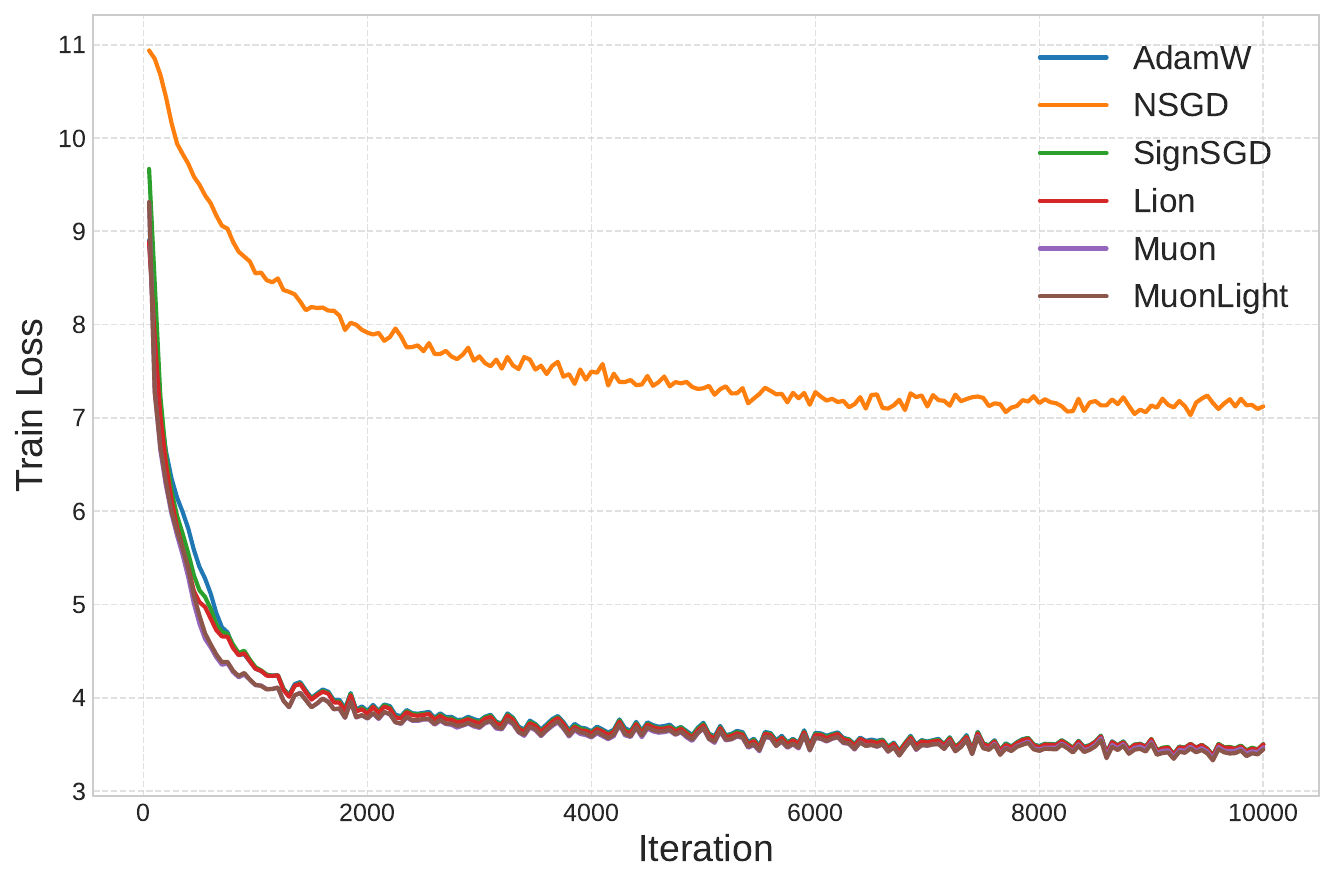}
        \label{fig:train_loss}
    }
    \hfill
    \subfloat[Val loss]{
        \includegraphics[width=.175\textwidth]{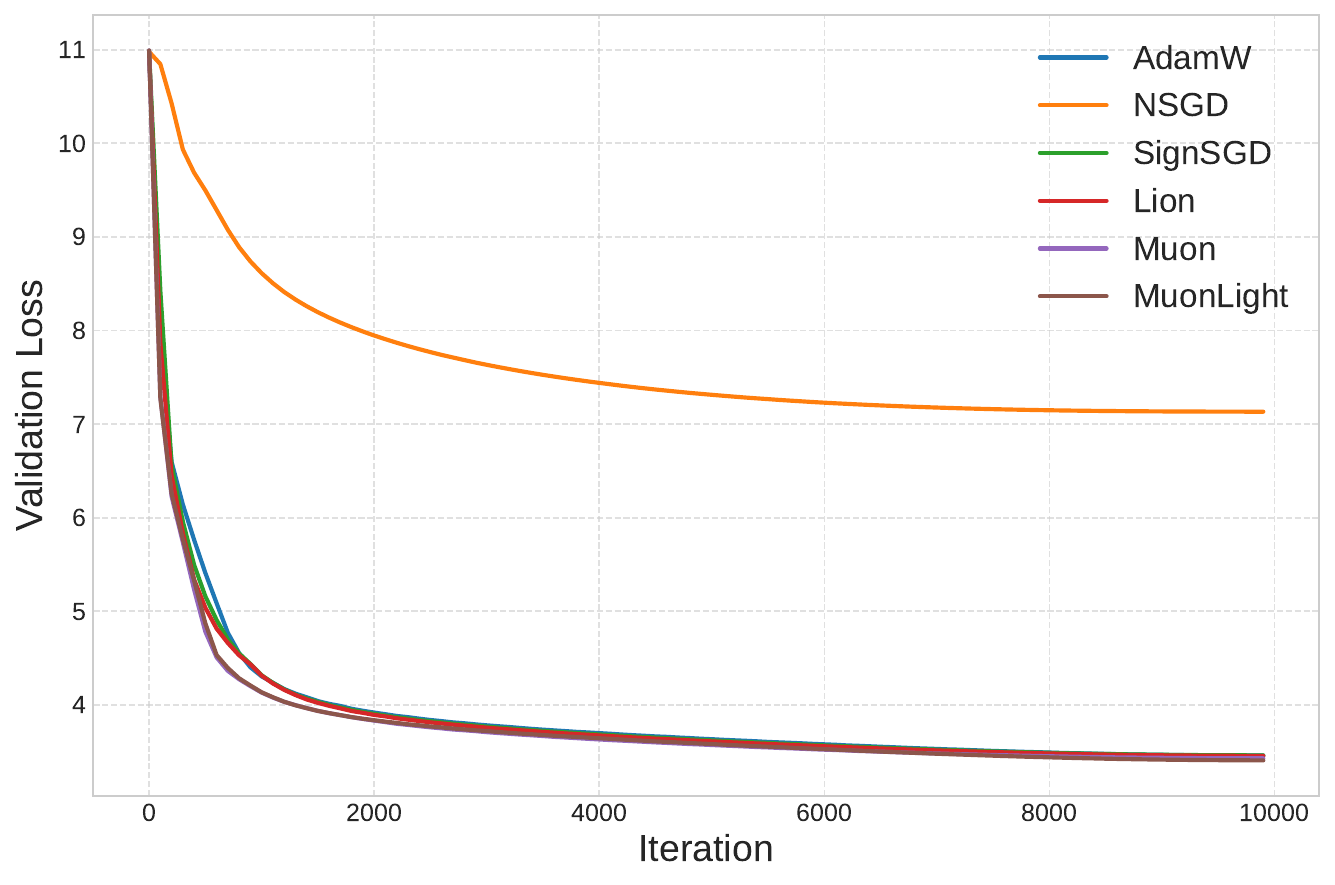}
        \label{fig:val_loss}
    }
    \hfill
    \subfloat[Val acc]{
        \includegraphics[width=.175\textwidth]{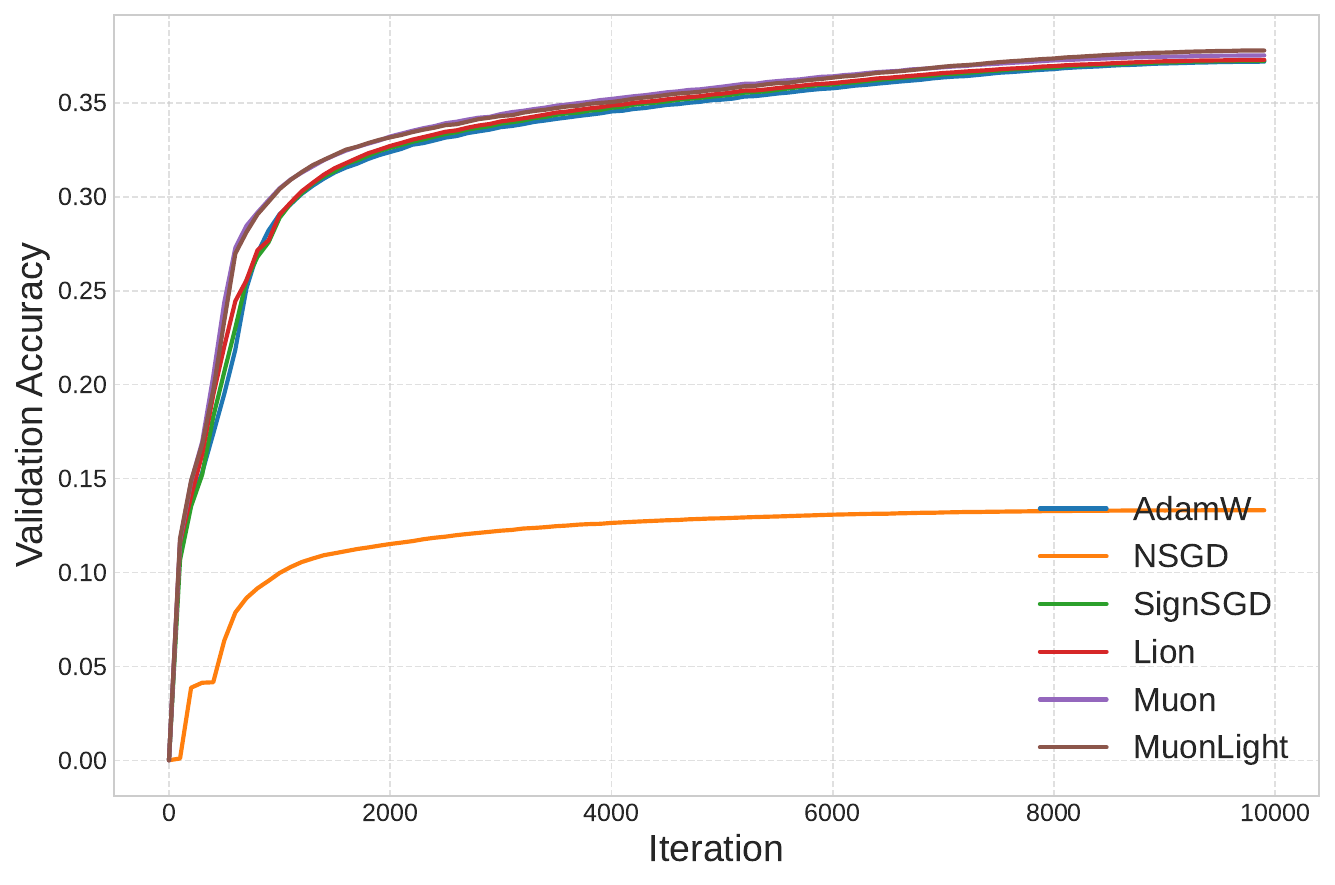}
        \label{fig:val_acc}
    }
    \hfill
    \subfloat[Lion: \textcolor{red}{1.07$\times$ \faRocket}]{
        \includegraphics[width=.19\textwidth]{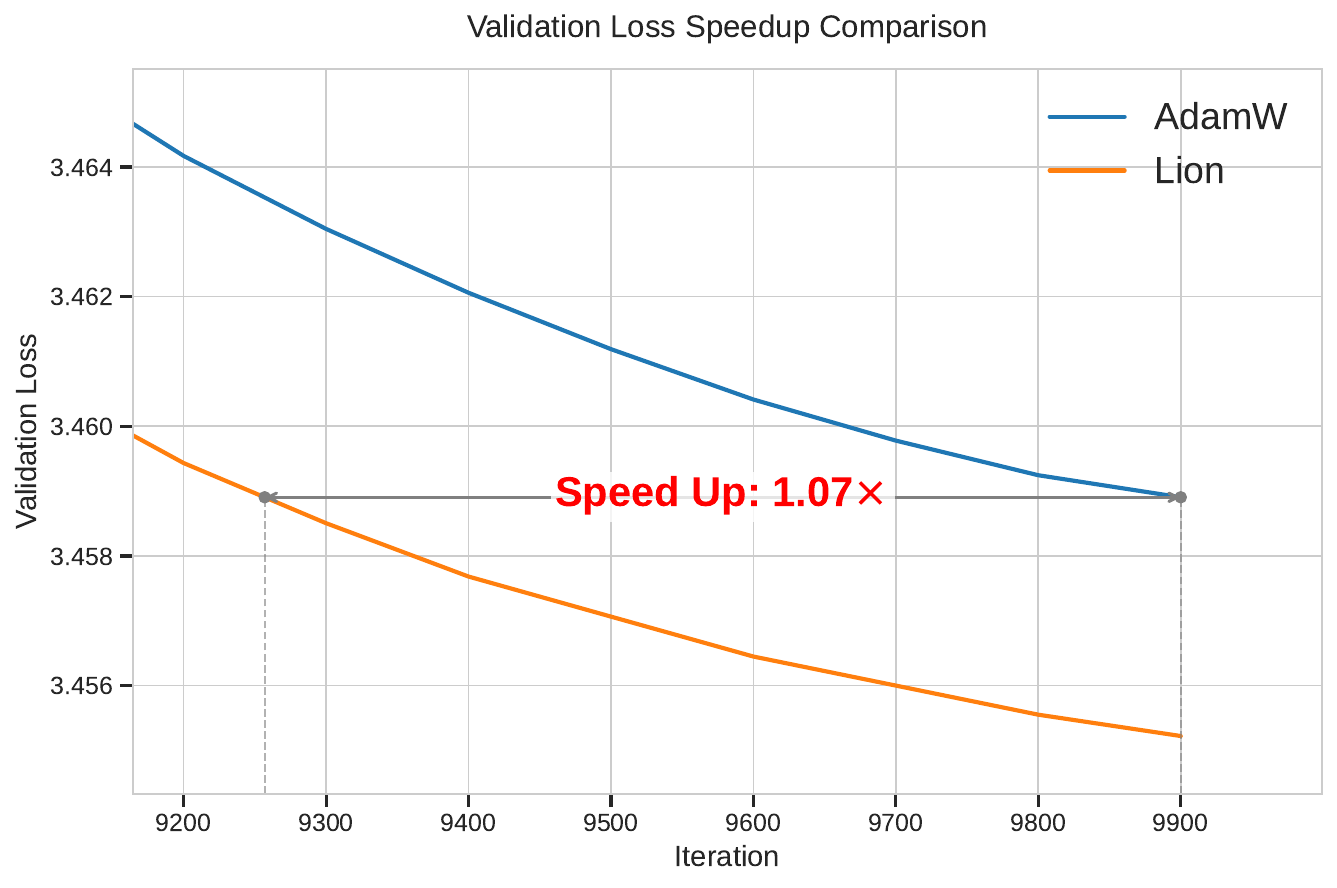}
        \label{fig:lion_speedup}
    }
    \hfill
    \subfloat[Muon: \textcolor{red}{1.32$\times$ \faRocket}]{
        \includegraphics[width=.19\textwidth]{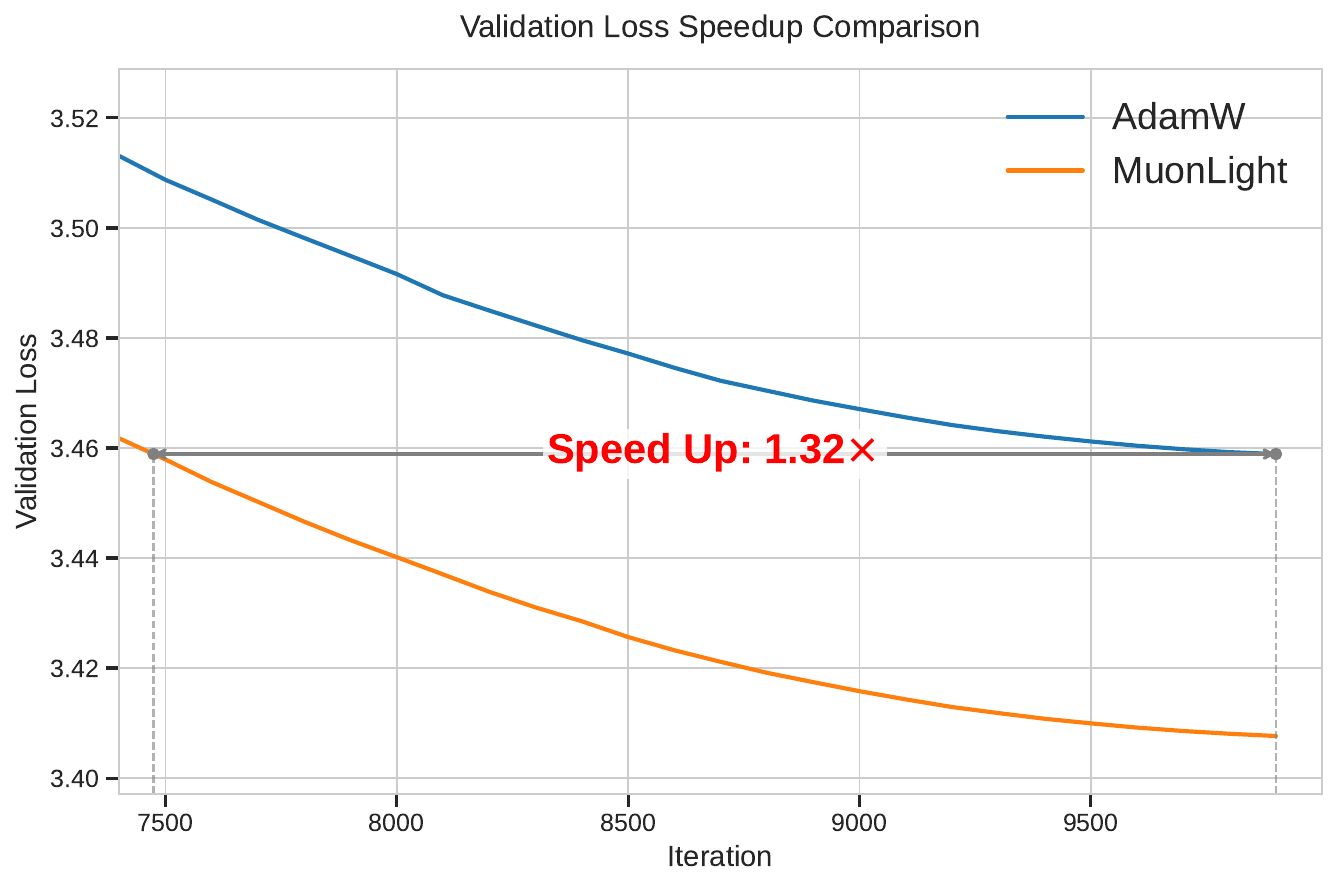}
        \label{fig:muon_speedup}
    }
    \caption{Training and validation curves for nanoGPT on C4, together with zoom-in speedup comparisons between sign-based methods and AdamW. The first three panels show training loss, validation loss, and validation accuracy. The last two panels show that Lion and Muon achieve \textcolor{red}{1.07$\times$} and \textcolor{red}{1.32$\times$ \faRocket} speedup over AdamW, respectively, following the common speedup-ratio computation practice; see, e.g.,~\citet{wen2025fantastic}.}
    \label{figs:train_val_and_speedup}
\end{figure}

\section{Conclusion}

This work aims to provide theoretical justifications for the empirical success of sign-based optimization algorithms such as Lion and Muon, particularly in the context of training LLMs. We introduce generalized heavy-tailed gradient noise assumptions that allow the noise magnitude to scale linearly with the gradient norm, a model we empirically validate as accurately reflecting the stochastic dynamics of real-world LLM training. Under this regime, we establish sharp convergence guarantees for SignSGD, Lion, Muon, and Muonlight, achieving rates that match or surpass the best-known bounds. Crucially, our theory indicates provable advantages of sign-based methods over NSGD and AdamW in the presence of heavy-tailed stochasticity. These findings are corroborated by LLM pretraining experiments, which confirm the practical superiority of sign-based optimizers.

\acks{We sincerely appreciate the valuable feedback provided by Weizhong Zhang and Yuxing Liu.}

\bibliography{ref}

\appendix

\crefalias{section}{appendix}
\crefalias{subsection}{appendix}
\crefalias{subsubsection}{appendix}

\section{Provable Complexity Improvement of Sign Gradient Descent over Normalized Gradient Descent\label{sec:comparisons}}

This section provides a theoretical comparison between the complexities of sign-based methods and Normalized Stochastic Gradient Descent (NSGD)~\citep{nesterov1984minimization,hazan2015beyond,you2017large,You2020Large,cutkosky2020momentum}, highlighting scenarios where the former offers provable advantages. The comparison is motivated by two key observations: (i) NSGD is a robust baseline for heavy-tailed noise~\citep{liu2025nonconvex, pmlr-v258-hubler25a}, and (ii) both methods can be interpreted as steepest descent algorithms—or Linear Minimization Oracles (LMOs)—operating under distinct geometric constraints~\citep{bernstein2024old, pethick2025training, sfyraki2025lions}. Consistent with the main text, both methods incorporate momentum, which is essential for tight convergence rates~\citep{cutkosky2020momentum}.

To quantify these geometric differences, we utilize the notion of \emph{vector density function} following~\citet{bernstein2018signsgd, jiang2024convergence}:
\begin{align*}
    \phi_q(\v):=\frac{\norm{\v}_1}{\norm{\v}_q}\in[1,d^{1-\frac{1}{q}}],\forall \v\in\R^d,q\in[1,\infty],
\end{align*}
where a higher $\phi_q(\v)$ indicates a denser vector. To compare different stationary measures, we define the trajectory-wide density as $\phi_2(\nabla_T):=\min_{t\in[T]}\phi_2(\nabla f(\x_t))$. This concept extends naturally to the matrix setting via the \emph{matrix density function}:
\begin{align*}
    \psi_q(\YB):=\frac{\norm{\YB}_{S_1}}{\norm{\YB}_{S_q}}\in[1,\brac{\rank{\YB}}^{1-\frac{1}{q}}],\forall \YB\in\R^{m\times n},q\in[1,\infty],
\end{align*}
where $\norm{\cdot}_{S_q}$ denotes the Schatten $q$-norm (the $\ell_q$-norm of the singular values). Notably, $\norm{\cdot}_{S_1}, \norm{\cdot}_{S_2}$, and $\norm{\cdot}_{S_\infty}$ correspond to the nuclear norm $\norm{\cdot}_*$, Frobenius norm $\norm{\cdot}_\Fn$, and operator norm $\norm{\cdot}_\op$, respectively. Intuitively, the matrix density function $\psi_q(\cdot)$ measures the distribution of singular values, capturing the ``flatness" of the matrix spectrum. We define the matrix trajectory density as $\psi_2(\bnabla_T) := \min_{t\in[T]} \psi_2(\nabla f(\XB_t))$. To simplify the comparison, we focus on the case where $\lb_1=\Lb_1 = \bsigma_1 = \V_1 = \0$. The complete versions of normalized gradient descent for vectors and matrices are listed in~\cref{alg:nsgd,alg:mnsgd}, respectively.

\begin{figure}[t]
    \begin{minipage}[t]{0.47\textwidth}
        \begin{algorithm}[H] 
            \caption{Normalized Stochastic Gradient Descent (NSGD)}
            \label{alg:nsgd}
            \begin{algorithmic}[1]
                \STATE {\bfseries Input:} $T\in\N$, $\x_1\in\R^d$, $\eta\in\R_+$
                \FOR{$t = 1$ {\bfseries to} $T$}
                    \STATE $\g_t=\frac{1}{B}\sum_{b=1}^B\g_t^b$
                    \STATE $\m_t = \beta \m_{t-1 } + (1-\beta) \g_t$ \hfill\COMMENT{$\m_0:=\g_1$}
                    \STATE $\x_{t+1} = \x_t - \eta \frac{\m_t}{\norm{\m_t}_2}$
                \ENDFOR
            \end{algorithmic}
        \end{algorithm}
    \end{minipage}
    \hfill
    \begin{minipage}[t]{0.53\textwidth}
        \begin{algorithm}[H] 
            \caption{Matrix Normalized Stochastic Gradient Descent (MNSGD)}
            \label{alg:mnsgd}
            \begin{algorithmic}[1]
                \STATE {\bfseries Input:} $T\in\N$, $\x_1\in\R^d$, $\eta\in\R_+$, $\lambda\in\R_+$
                \FOR{$t = 1$ {\bfseries to} $T$}
                    \STATE $\Gb_t=\frac{1}{B}\sum_{b=1}^B\Gb_t^b$
                    \STATE $\MB_t = \beta_1 \MB_{t-1} + (1-\beta_1) \Gb_t$ \hfill\COMMENT{$\MB_0:=\Gb_1$}
                    \STATE $\XB_{t+1} = \XB_t - \eta \frac{\MB_t}{\norm{\MB_t}_\Fn}$                 
                \ENDFOR
            \end{algorithmic}
        \end{algorithm}
    \end{minipage}
\end{figure}

\paragraph{Vector Optimization}
Under the conditions of~\cref{ass:generalized-smooth,ass:heavy-tailed-noise}, the objective $f$ is effectively $\norm{\lb_0}_\infty$-smooth with $\norm{\bsigma_0}_2$-heavy-tailed noise\footnote{Technically speaking, we reparameterize the original $\sigma_0$ by $\norm{\bsigma_0}_2$, which shares the same range as $\sigma_0$ in $\phi_2(\bsigma_0)$ and does not affect our later discussions.}. As established in~\citet{liu2025nonconvex},~\cref{alg:nsgd} achieves a complexity of $O(\Delta_f\norm{\lb_0}_\infty\norm{\bsigma_0}_2^{\frac{p}{p-1}}\epsilon^{-\frac{3p-2}{p-1}})$ for finding $\ell_2$-stationary points, i.e., $\E\sqbrac{\norm{\nabla f(\x)}_2}\le\epsilon$. Using density functions to convert the complexity notion, we obtain the following bounds for identifying $\ell_2$-stationary points.
\begin{align*}
    \text{SignSGD \& Lion (\cref{thm:signsgd,thm:lion}): }\quad&O\brac{\frac{\Delta_f\norm{\lb_0}_\infty\norm{\bsigma_0}_2^{\frac{p}{p-1}}\textcolor{red}{\phi_\infty(\lb_0)\brac{\phi_2(\bsigma_0)}^{\frac{p}{p-1}}}}{\epsilon^{\frac{3p-2}{p-1}}\textcolor{red}{\brac{\phi_2(\nabla_T)}^{\frac{3p-2}{p-1}}}}}\\
    \text{NSGD~\citep{liu2025nonconvex}: }\quad&O\brac{\frac{\Delta_f\norm{\lb_0}_\infty\norm{\bsigma_0}_2^{\frac{p}{p-1}}}{\epsilon^{\frac{3p-2}{p-1}}}}
\end{align*}
Evidently, the relative complexity is governed by the ratio
\begin{align*}
    R=R_1(R_2)^{\frac{p}{2(p-1)}},\quad\text{where }R_1:=\frac{\phi_\infty(\lb_0)}{\phi_2^2(\nabla_T)}\text{ and }R_2=\frac{\phi_2^2(\bsigma_0)}{\phi_2^2(\nabla_T)}.
\end{align*}
Based on the empirical evidence in~\citet[Figure~1]{bernstein2018signsgd}, we shall see that $R_2$ is a mild constant ($R_2\le5$). For the dominant factor $R_1$, the extensive experiments on language modeling as well as computer vision tasks in~\citet{bernstein2018signsgd,dong2024convergence,JMLR:v26:24-0523} show that the gradients $\cbrac{\nabla f(\x_t)}_{t\in[T]}$ along the optimization trajectory remain dense, and the ratio $\phi_2(\nabla_T)$ is close to $\Theta(\sqrt{d})$. Since $\phi_\infty(\lb_0)\in[1,d]$, so we have that $R_1=O(1)$ in the worst case. When the curvature vector $\lb_0$ exhibits axis-alignment properties~\citep{balles2020geometry} such that $\phi_\infty(\lb_0)\approx1$, SignSGD and Lion achieve a remarkable dimension-wise speedup of factor $d$ over NSGD.

\paragraph{Matrix Optimization}
Consider the MNSGD method shown in~\cref{alg:mnsgd}. The trajectory of~\cref{alg:mnsgd} is equivalent to~\cref{alg:nsgd} since $\norm{\YB}_\Fn = \norm{\text{vec}(\YB)}_2$. Under~\cref{ass:generalized-smooth-matrix,ass:heavy-tailed-noise-matrix} and by the same reparameterization trick as in the vector case, we can deduce that the objective $f$ is $\norm{\Lb_0}_\op$-smooth and $\norm{\V_0}_\Fn$-heavytailed. The results in~\citet{liu2025nonconvex} suggest that MNSGD require $O(\Delta_f\norm{\Lb_0}_\op\norm{\V_0}_\Fn^{\frac{p}{p-1}}\epsilon^{-\frac{3p-2}{p-1}})$ iterations to reach $\E\sqbrac{\norm{\nabla f(\XB)}_\Fn}\le\epsilon$. Leveraging matrix density functions, we obtain the following complexity comparison:
\begin{align*}
    \text{Muon \& Muonlight (\cref{thm:muon,thm:muonlight}): }\quad&O\brac{\frac{\Delta_f\norm{\Lb_0}_\op\norm{\V_0}_\Fn^{\frac{p}{p-1}}\textcolor{red}{\psi_\infty(\Lb_0)\brac{\psi_2(\V_0)}^{\frac{p}{p-1}}}}{\epsilon^{\frac{3p-2}{p-1}}\textcolor{red}{\brac{\psi_2(\bnabla_T)}^{\frac{3p-2}{p-1}}}}}\\
    \text{MNSGD~\citep{liu2025nonconvex}: }\quad&O\brac{\frac{\Delta_f\norm{\Lb_0}_\op\norm{\V_0}_\Fn^{\frac{p}{p-1}}}{\epsilon^{\frac{3p-2}{p-1}}}}
\end{align*}
As in the vector case, the comparison is dominated by the ratios below:
\begin{align*}
    R=R_1(R_2)^{\frac{p}{2(p-1)}},\quad\text{where }R_1:=\frac{\psi_\infty(\Lb_0)}{\psi_2^2(\bnabla_T)}\text{ and }R_2=\frac{\psi_2^2(\V_0)}{\psi_2^2(\bnabla_T)}.
\end{align*}
The Hessians in modern deep neural networks (DNN) are typically low-rank~\citep{sagun2016eigenvalues,sagun2017empirical,wu2020dissecting,an2025asgo}, implying $\psi_\infty(\Lb_0)$ is close to $1$. Furthermore, while gradients in DNNs frequently exhibit low-rank structures~\citep{gur2018gradient,zhao2022zero,cosson2023low,yang2023spectral}, recent evidence~\citep{songICLR2025does} suggests that gradient components aligning with the low-rank eigenspace may not effectively reduce training loss. When the variance and gradient matrices share a similar effective rank---particularly under Muon or Muonlight, which balance singular values via orthogonalization---the gradients can maintain a high effective rank~\citep{pan2025unbiased}. In this scenario, $R_1 \approx 1/\min\{m,n\}$ and $R_2$ remains a mild constant, highlighting a significant $\min\{m,n\}$ complexity improvement of Muon over MNSGD.

\begin{table}[t]
\centering
\caption{Complexity comparison to find an $\epsilon$-stationary point.}
\label{tab:complexity-comparison}
\renewcommand{\arraystretch}{2.0}
\begin{tabular}{@{}ccccc@{}}
\toprule
\textbf{Setting} & \textbf{Algorithm} & \textbf{Criterion} & \textbf{Complexity} & \textbf{Improvement} \\ \midrule
\multirow{2}{*}[-0.75em]{Vector} & NSGD & $\E[\norm{\nabla}_2] \le \epsilon$ & $O\left(\frac{\Delta_f \norm{\lb_0}_\infty \norm{\bsigma_0}_2^{\frac{p}{p-1}}}{\epsilon^{\frac{3p-2}{p-1}}}\right)$ & $=$ lower bound \\ \cmidrule{2-5}
 & \multicolumn{1}{c}{%
   {\renewcommand{\arraystretch}{1.0}%
   \begin{tabular}{@{}c@{}}
     SignSGD\\ 
     \& Lion
   \end{tabular}}} & $\E[\norm{\nabla}_1] \le \epsilon$ & $O\left(\frac{\Delta_f \norm{\lb_0}_1 \norm{\bsigma_0}_1^{\frac{p}{p-1}}}{\epsilon^{\frac{3p-2}{p-1}} }\right)$ & Up to \textcolor{red}{$d$} \\ \midrule
\multirow{2}{*}[-0.75em]{Matrix} & MNSGD & $\E[\norm{\bnabla}_\Fn] \le \epsilon$ & $O\left(\frac{\Delta_f \norm{\Lb_0}_\op \norm{\V_0}_\Fn^{\frac{p}{p-1}}}{\epsilon^{\frac{3p-2}{p-1}}}\right)$ & $=$ lower bound \\ \cmidrule{2-5}
 & \multicolumn{1}{c}{%
   {\renewcommand{\arraystretch}{1.0}%
   \begin{tabular}{@{}c@{}}
     Muon \&\\ 
    Muonlight
   \end{tabular}}} & $\E[\norm{\bnabla}_*] \le \epsilon$ & $O\left(\frac{\Delta_f \norm{\Lb_0}_* \norm{\V_0}_*^{\frac{p}{p-1}}}{\epsilon^{\frac{3p-2}{p-1}} }\right)$ & Up to \textcolor{red}{$\min\{m,n\}$} \\ \bottomrule
\end{tabular}
\end{table}

\paragraph{Lower Bounds}
The comparisons above follow the approach of \citet{bernstein2018signsgd} by evaluating the upper bounds of both method classes. However, this argument can be significantly strengthened by comparing the \emph{upper bounds} of sign-based optimizers directly against the \emph{lower bounds} of gradient normalization. Given that the complexity results for NSGD and MNSGD are tight~\citep{liu2025nonconvex}, this comparison highlights a fundamental, provable advantage of sign-descent methods. Similar to the approach in~\citet{jiang2024convergence}, evaluating our upper bounds against established lower bounds rigorously demonstrates the superiority of sign-based optimization in these specific geometric settings.

We consolidate these findings in~\cref{tab:complexity-comparison}. This table explicitly illustrates how sign-based algorithms leverage problem geometry to achieve superior convergence rates compared to the established lower bounds of normalized gradient methods.

\section{Further Discussions on Assumptions\label{sec:discussion}}

In this section, we complete the omitted discussions in~\cref{sec:vector-sign,sec:matrix-sign}.

\subsection{Generalized Smoothness: Assumption~\ref{ass:generalized-smooth}\label{sec:discussion-gs}}
\renewcommand{\theass}{\arabic{ass}c}
\renewcommand{\theHass}{vecsmooth.\arabic{ass}c}
\setcounter{ass}{1}

Below, we briefly compare~\cref{ass:generalized-smooth} with other $(\lb_0,\lb_1)$-\emph{coordinate-wise smooth} conditions. Consider the generalized smoothness model proposed in~\citet[Assumption~2]{crawshaw2022robustness}:
\begin{align}
    \abs{\nabla_if(\x^\prime)-\nabla_if(\x)}\le\brac{\lb_{0,i}+\lb_{1,i}\abs{\nabla_if(\x)}}\norm{\x^\prime-\x}_2,\forall\ i\in[d],\norm{\x^\prime-\x}_2\le\frac{1}{\norm{\lb_1}_\infty}.\label{eq:coordinate-smooth}
\end{align}
Evidently, the above assumption is stronger than~\cref{ass:generalized-smooth} in the sense that it requires~\eqref{eq:coordinate-smooth} to hold for all coordinates, and that the requested domain is larger as $\norm{\x^\prime-\x}_\infty\le\norm{\x^\prime-\x}_2$. Also,~\eqref{eq:coordinate-smooth} will incur an explicit dimensional factor of $d$, which is unfavorable for LLMs.~\citet{liu2025adagrad} further refined the condition in~\eqref{eq:coordinate-smooth} into
\begin{equation}
    \norm{\nabla f(\x^\prime)-\nabla f(\x)}_{\brac{\lb_0+\lb_1\odot\abs{\nabla f(\x)}}^{-1}}\le\norm{\x^\prime-\x}_{\lb_0+\lb_1\odot\abs{\nabla f(\x)}},\quad\forall\norm{\x^\prime-\x}_{\lb_1^2}\le\sqrt{d}.\label{eq:anisotropic-smooth}
\end{equation}
By standard calculus, it's easy to see that~\eqref{eq:anisotropic-smooth} implies~\cref{ass:generalized-smooth-variant}, a variant of~\cref{ass:generalized-smooth} to be introduced later. Note that all of our theoretical results also hold under~\cref{ass:generalized-smooth-variant} (see discussions below~\cref{ass:generalized-smooth-variant}). We deduce that our smoothness model based on the quadratic bound~\eqref{eq:vector-smooth} is generally weaker than the gradient curvature bound in~\eqref{eq:coordinate-smooth} and~\eqref{eq:anisotropic-smooth}. On the other hand,~\eqref{eq:coordinate-smooth} leverages $\ell_2$-norm while~\eqref{eq:anisotropic-smooth} utilizes $\lb_1^2$-weighted norm, both of which fail to align with the geometry of sign gradient descent~\citep{bernstein2024old,bernstein2025modular}. On the contrary, our $\ell_\infty$-norm formulation perfectly matches the interpretation of sign descent from the perspective of linear minimization oracle~\citep{pethick2025training}. Lastly, the empirical validations in~\citet{crawshaw2022robustness,liu2025adagrad} confirms the practical value of~\cref{ass:generalized-smooth}.

The following condition is a variant of~\cref{ass:generalized-smooth}, and can be inferred from~\eqref{eq:anisotropic-smooth}.
\begin{ass}[Variant of~\cref{ass:generalized-smooth}]\label{ass:generalized-smooth-variant}
    There exists non-negative vectors $\lb_0=[\lb_{0,1},\cdots,\allowdisplaybreaks\\\lb_{0,d}]\in\R^d_+$ and $\lb_1=[\lb_{1,1},\cdots,\lb_{1,d}]\in\R^d_+$ such that for all $\norm{\x^\prime-\x}_{\lb_1^2}\le1/\sqrt{d}$\footnote{The requirement $\norm{\x^\prime-\x}_{\lb_1^2}\le\sqrt{d}$ here as well as in~\eqref{eq:anisotropic-smooth} is stated as $\norm{\x^\prime-\x}_{\lb_1}\le\sqrt{d}$ in~\citet{liu2025adagrad}, which is not correct after private communications with the authors of~\citet{liu2025adagrad}. Thus, we discuss under the right condition here.}: 
    \begin{align}\label{eq:generalized-smooth-variant}
    \abs{f(\x^\prime)-\brac{f(\x)+\inner{\nabla f(\x)}{\x^\prime-\x}}}\le\frac{1}{2}\norm{\x^\prime-\x}_{\lb_0+\lb_1\odot\abs{\nabla f(\x)}}^2.
    \end{align} 
\end{ass}
Ignoring the absolute value on the LHS of~\eqref{eq:generalized-smooth-variant}, then~\cref{ass:generalized-smooth-variant} admits the exact same form as Lemma~C.3 in~\citet{liu2025adagrad}, which is derived under condition~\eqref{eq:anisotropic-smooth}~\citep[Assumption~5.1]{liu2025adagrad}. According to the textbook analysis~\citep{nesterov2018lectures}, we can expand their proof to derive~\eqref{eq:generalized-smooth-variant}, which indicates that the type of gradient curvature bound in~\eqref{eq:coordinate-smooth} and~\eqref{eq:anisotropic-smooth} is \emph{stronger} than the function value quadratic bound in~\cref{ass:generalized-smooth,ass:generalized-smooth-variant}. To illustrate that the theoretical guarantee in~\cref{sec:vector-sign} still holds under~\cref{ass:generalized-smooth-variant}, note that the only difference between~\cref{ass:generalized-smooth,ass:generalized-smooth-variant} lies in the condition $\norm{\x^\prime-\x}_\infty\le1/\norm{\lb_1}_\infty$ versus $\norm{\x^\prime-\x}_{\lb_1^2}\le\sqrt{d}$. Delving into the analysis of SignSGD in~\cref{sec:proof-signsgd},~\cref{ass:generalized-smooth} is used under
\begin{align*}
    \eta\le1/\norm{\lb_1}_\infty\Longrightarrow\norm{\x_{t+1}-\x_t}_\infty=\eta\norm{\sign{\m_t}}_\infty\le1/\norm{\lb_1}_\infty.
\end{align*}
Since we have
\begin{align*}
    \eta\le1/\norm{\lb_1}_\infty\Longrightarrow\norm{\x_{t+1}-\x_t}_{\lb_1^2}=\eta\sqrt{\sum_{i=1}^d\brac{\sign{\m_{t,i}}^2\cdot\lb_{1,i}^2}}\le\frac{\norm{\lb_1}_2}{\norm{\lb_1}_\infty}\le\sqrt{d},
\end{align*}
so we can replace~\cref{ass:generalized-smooth} by~\cref{ass:generalized-smooth-variant}. The same arguments are still valid for Lion, which we omit here for brevity.

\subsection{Generalized Smoothness: Assumption~\ref{ass:generalized-smooth-matrix}\label{sec:discussion-gs-matrix}}

For completeness, we also provide the following assumption for Muon and Muonlight.
\renewcommand{\theass}{\arabic{ass}d}
\renewcommand{\theHass}{matsmooth.\arabic{ass}d}
\setcounter{ass}{1}
\begin{ass}[Variant of~\cref{ass:generalized-smooth-matrix}]\label{ass:generalized-smooth-matrix-variant}
    There exists non-negative constants $\norm{\Lb_0}_*,\norm{\Lb_1}_\op\in\R$ such that for all $\norm{\XB^\prime-\XB}_\op\le1/\norm{\Lb_1}_\op$: 
    \begin{align}\label{eq:generalized-smooth-matrix-variant}
    \norm{\nabla f(\XB^\prime)-\nabla f(\XB)}_*\le\brac{\norm{\Lb_0}_*+\norm{\Lb_1}_\op\norm{\nabla f(\XB)}_*}\norm{\XB^\prime-\XB}_\op.
    \end{align} 
\end{ass}
\cref{ass:generalized-smooth-matrix-variant} mirrors the original $(L_0, L_1)$-smoothness formulation in~\citet{Zhang2020Why} by utilizing scalar constants $(\norm{\Lb_0}_*, \norm{\Lb_1}_\op)$. This is a milder condition than \cref{ass:generalized-smooth-matrix}, as the latter necessitates the specification of two full matrices $\Lb_0$ and $\Lb_1$. To demonstrate that \cref{ass:generalized-smooth-matrix-variant} is a weaker requirement, consider the case where \cref{ass:generalized-smooth-matrix} holds. We have
\begin{align*}
    &\norm{\nabla f(\XB^\prime)-\nabla f(\XB)}_*\overset{\textnormal{\cref{lem:matrix-cauchy-schwarz}}}{\le}\sqrt{\norm{\Lb(\XB)}_*}\norm{\nabla f(\XB^\prime)-\nabla f(\XB)}_{\brac{\Lb(\XB)}^{-1}}\\\overset{\textnormal{\cref{ass:generalized-smooth-matrix}}}{\le}&\sqrt{\norm{\Lb(\XB)}_*}\norm{\XB^\prime-\XB}_{\Lb(\XB)}=\sqrt{\norm{\Lb(\XB)}_*\tr{\brac{\XB^\prime-\XB}^\top\Lb(\XB)\brac{\XB^\prime-\XB}}}\\\le&\sqrt{\norm{\Lb(\XB)}_*\cdot\brac{\norm{\Lb(\XB)}_*\norm{\brac{\XB^\prime-\XB}\brac{\XB^\prime-\XB}^\top}_\op}}\\\overset{\textnormal{\cref{lem:trace-property}}}{\le}&\norm{\Lb(\XB)}_*\norm{\XB^\prime-\XB}_\op\le\brac{\norm{\Lb_0}_*+\norm{\Lb_1}_\op\norm{\nabla f(\XB)}_*}\norm{\XB^\prime-\XB}_\op,
\end{align*}
suggesting that~\cref{ass:generalized-smooth-matrix} implies~\cref{ass:generalized-smooth-matrix-variant}. In order to be more consistent with prior work~\citep{an2025asgo,kovalev2025non} and address the matrix structures, we adopt the formulation in the main text.

\subsection{Heavy-Tailed Noise: Assumption~\ref{ass:heavy-tailed-noise}\label{sec:discussion-ht}}

Our theoretical justification mainly follows from~\citet{liu2025nonconvex}. The following lemma shows that~\cref{ass:heavy-tailed-noise} is meaningful for non-zero $\bsigma_1$.

\begin{lem}[Based on Example~A.1 in~\citet{liu2025nonconvex}]
    Given $\x_*=[x_{*,1},\cdots,x_{*,d}]\in\R^d$, define the separable function $h(\x):=\sum_{i=1}^dh_i(x_i),\forall\x=[x_{1},\cdots,x_{d}]\in\R^d$ with
    \begin{align*}
        h_i(x_i):=\frac{1}{2}\E\sqbrac{\brac{a_ix_i-b_i}^2},\quad a_i\sim\textnormal{Bernoulli}(0.5),b=a_ix_{*,i}+\xi_i,
    \end{align*}
    where $\xi_i$ is a centered random variable independent of $a_i$ and further satisfy $\E\sqbrac{\abs{\xi_i}^p}\le\sigma_i^p$ for some $\sigma_i\ge0$. Then, for any $h$ defined above, it holds that 
    \begin{enumerate}
        \item $\bsigma_1=\0$:~\cref{ass:heavy-tailed-noise} can not be satisfied.
        \item $\bsigma_1>\0$:~\cref{ass:heavy-tailed-noise} is satisfied with $\bsigma_{0,i}=\sigma2^{1-\frac{2}{p}},\bsigma_{1,i}=2^{-\frac{1}{p}}+2^{1-\frac{2}{p}},\forall i\in[d]$.
    \end{enumerate}
\end{lem}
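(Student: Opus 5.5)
The plan is to compute the stochastic gradient coordinate-wise in closed form, write the noise as an explicit function of $\nabla_i h$, $a_i$ and $\xi_i$, and then bound its $p$th moment from below (item 1) and from above (item 2). Since $h$ is separable, the oracle for coordinate $i$ is $\g_i=\partial_{x_i}\tfrac12(a_ix_i-b_i)^2=a_i(a_ix_i-b_i)$. Substituting $b_i=a_ix_{*,i}+\xi_i$ and using $a_i^2=a_i$ for a Bernoulli variable gives
\begin{align*}
\g_i=a_i(x_i-x_{*,i})-a_i\xi_i .
\end{align*}
Because $\E[a_i]=1/2$ and $\xi_i$ is centered and independent of $a_i$, the true gradient is $\nabla_ih(\x)=\tfrac12(x_i-x_{*,i})$, so the oracle is unbiased (\cref{ass:unbiased}). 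The noise then takes the form
\begin{align*}
\g_i-\nabla_ih(\x)=(2a_i-1)\nabla_ih(\x)-a_i\xi_i .
\end{align*}
The structure is simple. On the event $\{a_i=0\}$, of probability $1/2$, the noise is exactly $-\nabla_ih(\x)$. On $\{a_i=1\}$ it equals $\nabla_ih(\x)-\xi_i$.

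For item 1, I drop the nonnegative contribution of $\{a_i=1\}$ and keep only the event $\{a_i=0\}$:
\begin{align*}
\E\sqbrac{\abs{\g_i-\nabla_ih(\x)}^p}\ge\tfrac12\abs{\nabla_ih(\x)}^p=2^{-1-p}\abs{x_i-x_{*,i}}^p .
\end{align*}
This lower bound holds regardless of the law of $\xi_i$, so it covers every such $h$. It grows without bound as $\abs{x_i}\to\infty$. Hence no finite $\bsigma_{0,i}$ can satisfy \cref{ass:heavy-tailed-noise} with $\bsigma_{1,i}=0$ uniformly over the iterates $\x_t\in\R^d$.

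For item 2, I condition on $a_i$ and apply the elementary inequality $\abs{u+v}^p\le2^{p-1}(\abs{u}^p+\abs{v}^p)$ on $\{a_i=1\}$. Writing $\nabla_i$ for $\nabla_ih(\x)$, this gives
\begin{align*}
\E\sqbrac{\abs{\g_i-\nabla_i}^p}
&=\tfrac12\abs{\nabla_i}^p+\tfrac12\E\sqbrac{\abs{\nabla_i-\xi_i}^p}\\
&\le\brac{2^{-1}+2^{p-2}}\abs{\nabla_i}^p+2^{p-2}\sigma_i^p .
\end{align*}
The last step uses $\E\abs{\xi_i}^p\le\sigma_i^p$. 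Matching constants, $2^{p-2}\sigma_i^p=\brac{\sigma_i2^{1-\frac2p}}^p$ gives the claimed $\bsigma_{0,i}$. For $\bsigma_{1,i}$, superadditivity of $t\mapsto t^p$ for $p\ge1$ yields
\begin{align*}
2^{-1}+2^{p-2}=\brac{2^{-\frac1p}}^p+\brac{2^{1-\frac2p}}^p\le\brac{2^{-\frac1p}+2^{1-\frac2p}}^p ,
\end{align*}
so the stated value of $\bsigma_{1,i}$ suffices.

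No step is a genuine obstacle. The only places needing care are using $a_i^2=a_i$ correctly when simplifying $\g_i$ and tracking the constants in the last step. The statement writes $\sigma$ where $\sigma_i$ is meant, and I will read it coordinate-wise.
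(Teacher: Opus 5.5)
Your proposal is correct and takes the same route as the paper. The paper computes $\g_i=a_i(a_ix_i-b_i)$ and the true gradient, reduces each coordinate to the one-dimensional example of \citet{liu2025nonconvex}, and reads off $\bsigma_{0,i}^p=2^{p-2}\sigma^p$ and $\bsigma_{1,i}^p=\frac12+2^{p-2}$; these are exactly the constants your conditioning-on-$a_i$ computation produces, and your superadditivity step supplies the reconciliation (left implicit in the paper) between this derived value and the slightly larger $\bsigma_{1,i}=2^{-\frac1p}+2^{1-\frac2p}$ stated in the lemma.
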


\begin{proof}
    The stochastic gradient, as well as the true gradient at point $\x$ is given by
    \begin{align*}
        \g_i=a_i\brac{a_ix_i-b_i},\quad\nabla_if(\x)=\E[a_i^2]\x-\E[a_ib_i].
    \end{align*}
    Then, the function $h_i$ is reduced to the one-dimensional case in~\citet[Example~A.1]{liu2025nonconvex}. Thus, we apply their results to deduce that when $\bsigma_1=\0$, for all $h_i,i\in[d]$,~\cref{ass:heavy-tailed-noise} can not be satisfied. Also, their results indicate that for any $i\in[d]$, we can select $\bsigma_{0,i}^p=2^{p-2}\sigma^p$ and $\bsigma_{1,i}^p=0.5+2^{p-2}$. Due to our separate construction of $h$ and the independence between all coordinates, we can extend this conclusion to finish the second part of the proof.
\end{proof}

\subsection{Heavy-Tailed Noise: Assumption~\ref{ass:heavy-tailed-noise-matrix}\label{sec:discussion-ht-matrix}}
The following assumption is a relaxation of~\cref{ass:heavy-tailed-noise-matrix}.
\renewcommand{\theass}{\arabic{ass}c}
\renewcommand{\theHass}{matnoise.\arabic{ass}c}
\setcounter{ass}{3}
\begin{ass}[Variant of~\cref{ass:heavy-tailed-noise-matrix}]\label{ass:heavy-tailed-noise-matrix-variant}
    There exists $p\in(1,2]$, $\V_0\in\R^{m\times n}$ with full row rank, and $\norm{\V_1}_\op\in\R_+$ such that 
    \begin{align}\label{eq:heavy-tailed-noise-matrix-variant}
        \E\sqbrac{\left.\norm{\Gb_t^b-\nabla f(\XB_t)}^{p}_{\mabs{\V_0}^{-1}}\right|\F_{t-1}}\le\norm{\V_0}_*^{p/2}+\frac{\norm{\V_1}_{\op}^p\norm{\nabla f(\XB_t)}_*^p}{\norm{\V_0}_*^{p/2}},\quad\forall b\in[B].
    \end{align}  
\end{ass}
By the Cauchy-Schwarz inequality, $|\inner{\V_1}{\nabla f(\XB_t)}|^p \le \norm{\V_1}_\op^p \norm{\nabla f(\XB_t)}_*^p$, which implies that~\cref{ass:heavy-tailed-noise-matrix} is a stronger condition than~\cref{ass:heavy-tailed-noise-matrix-variant}. The latter is more amenable to empirical verification, as it replaces the full matrix $\V_1$ with a scalar coefficient $\norm{\V_1}_\op$, an approach we adopt in \cref{sec:experiments}. We also underline that in both assumptions, the term $\norm{\V_0}_*^{p/2}$ is introduced to maintain the homogeneity of the $p$th moment, ensuring that the scaling of the $\mabs{\V_0}^{-1}$-weighted metric remains consistent with the noise magnitude.

Crucially, the use of the $\norm{\cdot}_{\mabs{\V_0}^{-1}}$ weighted norm allows our model to capture the anisotropic structural properties of the noise~\citep{xie2025tale,an2025asgo}. Specifically,~\citet[Assumption~3]{an2025asgo} and~\citet[Assumption~3]{pan2025unbiased} assume a PSD constraint on the noise covariance: $\E[\brac{\Gb_t^b-\nabla f(\XB_t)}\brac{\Gb_t^b-\nabla f(\XB_t)}^\top|\F_{t-1}]\preceq \V_0\V_0^\top$. This condition directly implies $\E\sqbrac{\left. \norm{\Gb_t^b - \nabla f(\XB_t)}^2_{\mabs{\V_0}^{-1}} \right| \F_{t-1}} \le \norm{\V_0}_*$~\citep[Proposition~A.10]{xie2025tale}, illustrating how our framework encapsulates the geometry of the gradient's principal directions.

\subsection{Exact Newton--Schulz Oracle\label{sec:NewtonSchulz}}

In practice,~\cref{alg:muon,alg:muonlight} often runs in $q=5$ Newton--Schulz iterations~\citep{kovarik1970some,bjorck1971iterative}. The numerical error of the Newton--Schulz algorithm has been observed to exert little error on the optimization trajectory~\citep{jordan2024muon,liu2025muon}.~\citet{kim2026convergence} investigate this phenomenon theoretically and show that Muon with inexact Newton--Schulz converges at the same rate compared to the exact SVD realization, up to a constant factor which converges to $1$ \emph{double exponentially} in $q$. After reading their proof, it should be evident that their analysis can serve as a black box compatible with ours. It suffices to adapt~\eqref{eq:muon-onestep} in~\cref{sec:proof-muon} to equation~(5) in~\citet{kim2026convergence}. Ultimately, we only need to pay the same constant factor for the convergence of Muon without assuming exact Newton--Schulz oracle. The same technique can be extended to other numerical algorithms that compute $\msign{\cdot}$ in Muon, e.g., the recent hit \texttt{PolarExpress}~\citep{amsel2025polar}.

\section{Analysis for SignSGD and Lion\label{sec:vector-sign-analysis}}

To facilitate the theoretical analysis, we define
\begin{align}\label{eq:momentum-def}
    \eps_t:=\m_t-\nabla f(\x_t),\quad \n_t:=\g_t-\nabla f(\x_t),\quad \s_t:=\nabla f(\x_{t-1})-\nabla f(\x_t).
\end{align}

\subsection{Vector Calculus}

\begin{lem}[Minkowski's inequality]\label{lem:minkowski}
    For any $p\ge1,x,y\in\R_+$, it holds that $(x+y)^{1/p}\le x^{1/p}+y^{1/p}$.
\end{lem}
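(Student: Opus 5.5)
The plan is to reduce the claim to superadditivity of $t\mapsto t^p$ on $\R_+$ for $p\ge1$. Set $a:=x^{1/p}\ge0$ and $b:=y^{1/p}\ge0$, so that $x=a^p$ and $y=b^p$. The claim then reads $(a^p+b^p)^{1/p}\le a+b$. Since $t\mapsto t^{1/p}$ and $t\mapsto t^p$ are both nondecreasing on $\R_+$, this is equivalent to
\begin{align*}
    a^p+b^p\le(a+b)^p .
\end{align*}

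First I dispose of the degenerate case $a+b=0$, i.e.\ $x=y=0$, where both sides of the original inequality vanish. Otherwise $a+b>0$, and I normalize by setting $u:=a/(a+b)$ and $v:=b/(a+b)$. Then $u,v\in[0,1]$ and $u+v=1$.

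Because $p\ge1$ and $u\in[0,1]$, we have $u^p\le u$, and likewise $v^p\le v$. Adding these gives $u^p+v^p\le u+v=1$. Multiplying through by $(a+b)^p>0$ yields $a^p+b^p\le(a+b)^p$, which is the desired inequality. Taking $p$th roots and substituting back $a=x^{1/p}$, $b=y^{1/p}$ completes the argument.

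There is no genuine obstacle here. The only points needing care are the monotonicity used to pass between the $p$th power and the $p$th root, and the separate handling of the zero case so that the normalization step is legitimate.
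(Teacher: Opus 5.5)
Your proof is correct, but it argues differently from the paper. The paper's proof is a one-liner. It applies the general Minkowski (triangle) inequality $\norm{\mathbf{u}+\mathbf{w}}_p\le\norm{\mathbf{u}}_p+\norm{\mathbf{w}}_p$ in $\R^2$ to the disjointly supported vectors $\mathbf{u}=(x^{1/p},0)$ and $\mathbf{w}=(0,y^{1/p})$, whose sum has $\ell_p$-norm $(x+y)^{1/p}$.

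You instead prove the needed special case from scratch. After substituting $a=x^{1/p}$, $b=y^{1/p}$, the claim becomes the norm comparison $\norm{(a,b)}_p\le\norm{(a,b)}_1$. You obtain it by normalizing and using $u^p\le u$ on $[0,1]$.

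The paper's route is shorter, but it relies on a theorem much stronger than what is used. For disjointly supported vectors the triangle inequality collapses to exactly your inequality, and the general version's usual proof goes through convexity of $t\mapsto t^p$ or H\"older. Your route is self-contained and needs only the elementary bound $u^p\le u$. It also makes explicit the monotonicity step and the $x=y=0$ case that the citation leaves implicit.

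Either argument suffices for how the lemma is used later, namely splitting $(\cdot)^{1/p}$ of a sum of two nonnegative terms.
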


\begin{proof}
    Applying Minkowski's inequality $\norm{\x+\y}_p\le\norm{\x}_p+\norm{\y}_p$ for two-dimensional vectors $\x=(x^{1/p},0),\x=(0,y^{1/p})$ yields the result.
\end{proof}

\begin{lem}[Jensen's inequality for $\ell_p$-means]\label{lem:lp-mean}
    Let $p\in[1,+\infty]$ and let $X_1,\cdots,X_n$ be non-negative scalar random variables on some probability space, then
    \begin{align*}
        \E\sqbrac{\sum_{i=1}^nX_i^p}^{1/p}\le\brac{\sum_{i=1}^n\E\sqbrac{X_i^p}}^{1/p}
    \end{align*}
\end{lem}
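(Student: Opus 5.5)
The plan is to apply Jensen's inequality once to the concave map $\varphi(s):=s^{1/p}$ on $\R_+$. I will treat finite $p\in[1,\infty)$ first and then handle $p=\infty$ separately at the end.

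Fix $p\in[1,\infty)$ and set $S:=\sum_{i=1}^nX_i^p$. This is a non-negative random variable. If $\E[S]=\sum_{i=1}^n\E[X_i^p]=+\infty$, the right-hand side is $+\infty$ and nothing needs proving, so assume $\E[S]<\infty$. Since $1/p\in(0,1]$, the function $\varphi$ is concave on $[0,\infty)$. Jensen's inequality for concave functions then gives
\begin{align*}
    \E\sqbrac{\brac{\sum_{i=1}^nX_i^p}^{1/p}}=\E\sqbrac{\varphi(S)}\le\varphi\brac{\E[S]}=\brac{\sum_{i=1}^n\E\sqbrac{X_i^p}}^{1/p}.
\end{align*}
The last equality is linearity of expectation. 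The notation in the statement is slightly ambiguous: the left-hand side may mean $\E[S]^{1/p}$ rather than $\E[S^{1/p}]$. Under that reading the two sides are literally equal, so the inequality holds trivially in both interpretations.

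For $p=\infty$ I will read both sides as limits as $p\to\infty$. The left-hand side becomes $\E[\max_iX_i]$. The right-hand side becomes $\max_i\norm{X_i}_{L^\infty}$, the largest essential supremum. The inequality $\E[\max_iX_i]\le\max_i\operatorname{ess\,sup}X_i$ holds because $\max_iX_i\le\max_i\operatorname{ess\,sup}X_i$ almost surely.

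The only step that needs any care is measure-theoretic. One has to justify Jensen for a concave function that is not differentiable at $0$ and for a variable that may take the value $0$ or be non-integrable. I will handle this by using a supporting line of $\varphi$ at the point $\E[S]>0$: the bound $\varphi(s)\le\varphi(\E S)+\varphi'(\E S)(s-\E S)$ holds for all $s\ge0$, and taking expectations yields the claim. The degenerate case $\E[S]=0$ is immediate, since then $S=0$ almost surely.
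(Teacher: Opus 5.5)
Your proof is correct and is essentially the paper's argument. The paper applies Jensen to the convex map $x\mapsto x^p$ at $Z=\brac{\sum_{i=1}^nX_i^p}^{1/p}$, obtaining $\brac{\E\sqbrac{Z}}^p\le\E\sqbrac{Z^p}=\sum_{i=1}^n\E\sqbrac{X_i^p}$ and then taking $1/p$-th roots, which is just the dual form of your concave-Jensen step for $s\mapsto s^{1/p}$ at $S$; your explicit handling of the cases $\E\sqbrac{S}\in\cbrac{0,+\infty}$ and $p=\infty$, which the paper's proof does not address, is a useful addition.
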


\begin{proof}
    Let $Y = (X_1, \dots, X_n)$ be a random vector in $\mathbb{R}^n_+$. By Jensen's inequality as well as the convexity of $x\mapsto x^p,p\ge1$:
    \begin{align*}
        \brac{\E\sqbrac{\norm{Y}_p}}^{p}\le\E\sqbrac{\norm{Y}_p^p}=\E\sqbrac{\sum_{i=1}^n X_i^p}=\sum_{i=1}^n \E[X_i^p].
    \end{align*}
    Taking the \( 1/p \)-th power on both sides yields the result.
\end{proof}

\begin{lem}[Sign-difference Bound]\label{lem:sign-difference}
    For any $\x,\y\in\R^d$, it holds that $\inner{\x}{\sign{\x}-\sign{\y}}\le2\norm{\x-\y}_1$.
\end{lem}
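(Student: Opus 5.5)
The inequality is separable across coordinates, so I will prove the scalar version for each $i\in[d]$ and then sum. Write $\inner{\x}{\sign{\x}-\sign{\y}}=\sum_{i=1}^d x_i\brac{\sign{x_i}-\sign{y_i}}$ and $\norm{\x-\y}_1=\sum_{i=1}^d\abs{x_i-y_i}$. It then suffices to show that for all scalars $a,b\in\R$,
\begin{align*}
    a\brac{\sign{a}-\sign{b}}\le 2\abs{a-b}.
\end{align*}

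I will split on the relative signs of $a$ and $b$. If $\sign{a}=\sign{b}$, the left-hand side is $0$, so the inequality holds. Otherwise I use $a\,\sign{a}=\abs{a}$ to rewrite the left-hand side as $\abs{a}-a\,\sign{b}$.

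There are two sub-cases. If $b=0$, then with the convention $\sign{0}=0$ the left-hand side equals $\abs{a}=\abs{a-b}\le2\abs{a-b}$. If instead $\sign{b}\ne0$ and $\sign{b}\ne\sign{a}$, then either $a=0$, in which case the left-hand side is $0$, or $a$ and $b$ have strictly opposite signs. In the opposite-sign case, $-a\,\sign{b}=\abs{a}$, so the left-hand side equals $2\abs{a}$. Since $a$ and $b$ lie on opposite sides of zero, $\abs{a-b}=\abs{a}+\abs{b}\ge\abs{a}$, and hence $2\abs{a}\le2\abs{a-b}$.

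Summing the scalar inequality over $i\in[d]$ gives the lemma. There is no real obstacle here. The only point needing care is the convention for $\sign{0}$. If the paper instead uses $\sign{0}=\pm1$, the case $b=0$ gives a left-hand side of at most $2\abs{a}=2\abs{a-b}$, so the same bound still holds and the constant $2$ is unchanged.
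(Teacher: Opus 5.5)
Your proof is correct and is essentially the paper's argument: reduce to coordinates, note that each term vanishes when $\sign{x_i}=\sign{y_i}$, and otherwise bound it by $2\abs{x_i}\le 2\abs{x_i-y_i}$, which holds because $x_i$ and $y_i$ cannot then lie strictly on the same side of zero. Your explicit handling of the $\sign{0}$ convention is a small refinement that the paper's one-line chain leaves implicit.
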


\begin{proof}
    Let $\x=[\x_1,\cdots,\x_d]\in\R^d,\y=[\y_1,\cdots,\y_d]\in\R^d$, then
    \begin{align*}
        &\inner{\x}{\sign{\x}-\sign{\y}}=\sum_{i=1}^d\x_i\cdot\sqbrac{\sign{\x_i}-\sign{\y_i}}\le\sum_{i=1}^d2\abs{\x_i}\cdot\ind\brac{\sign{\x_i}\ne\sign{\y_i}}\\\le&\sum_{i=1}^d2\abs{\x_i-\y_i}\cdot\ind\brac{\sign{\x_i}\ne\sign{\y_i}}\le\sum_{i=1}^d2\abs{\x_i-\y_i}=2\norm{\x-\y}_1.
    \end{align*}
\end{proof}

\subsection{Technical Lemmas}
The following lemma with $\lb_1=\0$ is originally proven in~\citet[Lemma~F.3]{bernstein2018signsgd}] with some inaccuracies, which we fix below with a clean pathwise analysis and extend to general $\lb_1$ at the same time.
\begin{lem}\label{lem:smooth-grad-sign}
    Under~\cref{ass:generalized-smooth}, for any sign vector $\s\in\cbrac{-1,1}^d$ and any $\eta\le1/\norm{\lb_1}_\infty$, 
    \begin{align*}
        \norm{\nabla f(\x+\eta\s)-\nabla f(\x)}_1\le2\eta\brac{\norm{\lb_0}_1+\inner{\lb_1}{\abs{\nabla f(\x)}}}.
    \end{align*}
\end{lem}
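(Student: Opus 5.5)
The plan is to prove a coordinate-free Hessian bound from~\cref{ass:generalized-smooth}, integrate it along the segment from $\x$ to $\x+\eta\s$, and close the argument with a Gr\"onwall step that absorbs the drift of $\abs{\nabla f}$ along the path. The slack in the constant $2$ is what pays for that drift.

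\textbf{Step 1: Hessian bound.} Take $\z$ at which $f$ is twice differentiable. Fix any $\u$ and send $\x^\prime=\z+\tau\u$ with $\tau\to0$ in~\eqref{eq:vector-smooth}. The second-order Taylor expansion then gives
\[
\abs{\u^\top\nabla^2 f(\z)\u}\le\norm{\u}_{\Lb(\z)}^2,\qquad \Lb(\z):=\lb_0+\lb_1\odot\abs{\nabla f(\z)}.
\]
The domain restriction $\norm{\x^\prime-\x}_\infty\le1/\norm{\lb_1}_\infty$ is irrelevant in this limit. Next, for sign vectors $\r,\s\in\cbrac{-1,1}^d$, write $H=\nabla^2 f(\z)$ and use polarization:
\[
\r^\top H\s=\tfrac14\sqbrac{(\r+\s)^\top H(\r+\s)-(\r-\s)^\top H(\r-\s)}\le\tfrac14\brac{\norm{\r+\s}_{\Lb(\z)}^2+\norm{\r-\s}_{\Lb(\z)}^2}.
\]
Coordinatewise, $(r_i+s_i)^2+(r_i-s_i)^2=4$, so the right-hand side equals $\norm{\Lb(\z)}_1$. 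Taking the maximum over $\r$ yields
\[
\norm{H\s}_1\le\norm{\lb_0}_1+\inner{\lb_1}{\abs{\nabla f(\z)}}.
\]

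\textbf{Step 2: integration along the segment.} Let $\Delta(\tau):=\nabla f(\x+\tau\eta\s)-\nabla f(\x)$ for $\tau\in[0,1]$. Then $\Delta^\prime(\tau)=\eta\nabla^2f(\x+\tau\eta\s)\s$. Combine Step~1 with $\abs{\nabla f(\x+\tau\eta\s)}\le\abs{\nabla f(\x)}+\abs{\Delta(\tau)}$ and $\inner{\lb_1}{\abs{\Delta}}\le\norm{\lb_1}_\infty\norm{\Delta}_1$. Setting $A:=\norm{\lb_0}_1+\inner{\lb_1}{\abs{\nabla f(\x)}}$, this gives
\[
\norm{\Delta(\tau)}_1\le\int_0^\tau\norm{\Delta^\prime(u)}_1\,du\le\eta A\tau+\eta\norm{\lb_1}_\infty\int_0^\tau\norm{\Delta(u)}_1\,du.
\]

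\textbf{Step 3: Gr\"onwall.} Gr\"onwall's inequality gives
\[
\norm{\Delta(1)}_1\le\frac{A}{\norm{\lb_1}_\infty}\brac{e^{\eta\norm{\lb_1}_\infty}-1}.
\]
Since $\eta\norm{\lb_1}_\infty\le1$ and $e^u-1\le(e-1)u$ on $[0,1]$, we get $\norm{\Delta(1)}_1\le(e-1)\eta A\le2\eta A$, which is the claim. In the case $\lb_1=\0$ the bound is directly $\eta A$.

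\textbf{Main obstacle: regularity.} \cref{ass:non-convexity} only gives twice-differentiability almost everywhere, and the segment $\{\x+\tau\eta\s\}$ could lie in the exceptional null set. I would handle this in two parts. First, note that \eqref{eq:vector-smooth} forces $\nabla f$ to be locally Lipschitz: the two-sided quadratic bound applied at two nearby points gives a Lipschitz gradient locally. Hence $\nabla f$ is differentiable a.e. and absolutely continuous along segments. Second, by Fubini, for almost every small perturbation $\x+\boldsymbol\delta$ the segment meets the bad set in a set of $\tau$ of measure zero, so Steps~1--3 apply to the perturbed segment. Finally, let $\boldsymbol\delta\to\0$ and use continuity of $\nabla f$ on both sides of the inequality. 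This perturbation argument is the step I expect to require the most care. The algebra in Steps~1--3 is routine.
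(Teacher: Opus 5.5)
Your proposal is correct and follows essentially the same route as the paper: the polarization identity over sign vectors to get $\norm{\nabla^2 f(\z)\s}_1\le\norm{\lb_0}_1+\inner{\lb_1}{\abs{\nabla f(\z)}}$, then integration along the segment, then Gr\"onwall with $(e^{\eta\norm{\lb_1}_\infty}-1)/\norm{\lb_1}_\infty\le(e-1)\eta\le 2\eta$. Your regularity step is more careful than the paper's proof: you derive local Lipschitzness of $\nabla f$ from the two-sided bound, perturb the segment via Fubini, and pass to the limit by continuity, whereas the paper simply asserts the Hessian bound and the integral representation for almost every $t$ on the original segment.
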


\begin{proof}
    If $\eta=0$, the claim is immediate. Hence, assume $\eta>0$. Denote
    \[
        \z_t:=\x+t\eta\s,\qquad
        \q(t):=\nabla f(\z_t),\qquad
        Q(t):=\norm{\q(t)-\q(0)}_1,
    \]
    and define
    \[
        \DB_t:=\diag{\lb_0+\lb_1\odot\abs{\q(t)}},\qquad t\in[0,1].
    \]
    By~\cref{ass:generalized-smooth}, at every point on the segment where $f$ is twice-differentiable,
    \[
        -\DB_t\preceq \nabla^2 f(\z_t)\preceq \DB_t .
    \]
    Since $\q'(t)=\eta\nabla^2 f(\z_t)\s$ for almost every $t\in[0,1]$, we have
    \begin{align*}
        \frac{1}{\eta}\norm{\q'(t)}_1
        &= \norm{\nabla^2 f(\z_t)\s}_1
         = \max_{\u\in\cbrac{-1,1}^d}\inner{\u}{\nabla^2 f(\z_t)\s}.
    \end{align*}
    We next show that the last quantity is bounded by $\tr{\DB_t}$. Fix any
    $\u,\s\in\cbrac{-1,1}^d$, and set
    \[
        \a:=\frac{\u+\s}{2},\qquad
        \b:=\frac{\u-\s}{2}.
    \]
    Then $\u=\a+\b$, $\s=\a-\b$, and $\a_i^2+\b_i^2=1$ for every coordinate $i$.
    Using the symmetry of $\nabla^2 f(\z_t)$, we obtain
    \begin{align*}
        \inner{\u}{\nabla^2 f(\z_t)\s}
        &= \inner{\a+\b}{\nabla^2 f(\z_t)(\a-\b)} \\
        &= \inner{\a}{\nabla^2 f(\z_t)\a}
           -\inner{\b}{\nabla^2 f(\z_t)\b} \\
        &\le \inner{\a}{\DB_t\a}+\inner{\b}{\DB_t\b} \\
        &= \sum_{i=1}^d(\DB_t)_{ii}\brac{\a_i^2+\b_i^2}
         = \tr{\DB_t}.
    \end{align*}
    Therefore, for almost every $t\in[0,1]$,
    \[
        \frac{1}{\eta}\norm{\q'(t)}_1\le \tr{\DB_t}.
    \]
    Moreover,
    \begin{align*}
        \tr{\DB_t}
        &= \norm{\lb_0}_1+\inner{\lb_1}{\abs{\q(t)}} \\
        &\le \norm{\lb_0}_1+\inner{\lb_1}{\abs{\q(0)}}
            +\norm{\lb_1}_\infty\norm{\q(t)-\q(0)}_1 \\
        &= \norm{\lb_0}_1+\inner{\lb_1}{\abs{\nabla f(\x)}}
            +\norm{\lb_1}_\infty Q(t),
    \end{align*}
    where we used $\abs{\abs{a}-\abs{b}}\le \abs{a-b}$ coordinatewise. Hence, for every $t\in[0,1]$,
    \begin{align*}
        Q(t)
        &\le \int_0^t\norm{\q'(\tau)}_1\diff\tau \\
        &\le \eta t\brac{\norm{\lb_0}_1+\inner{\lb_1}{\abs{\nabla f(\x)}}}
        +\eta\norm{\lb_1}_\infty\int_0^t Q(\tau)\diff\tau .
    \end{align*}
    By Gr\"onwall's inequality,
    \begin{align*}
        Q(t)
        \le
        \eta\brac{\norm{\lb_0}_1+\inner{\lb_1}{\abs{\nabla f(\x)}}}
        \int_0^t e^{\eta\norm{\lb_1}_\infty(t-\tau)}\diff\tau .
    \end{align*}
    If $\norm{\lb_1}_\infty>0$, then taking $t=1$ gives
    \begin{align*}
        Q(1)
        &\le
        \brac{\norm{\lb_0}_1+\inner{\lb_1}{\abs{\nabla f(\x)}}}
        \frac{e^{\eta\norm{\lb_1}_\infty}-1}{\norm{\lb_1}_\infty} \\
        &\le
        \eta(e-1)\brac{\norm{\lb_0}_1+\inner{\lb_1}{\abs{\nabla f(\x)}}}
        \le
        2\eta\brac{\norm{\lb_0}_1+\inner{\lb_1}{\abs{\nabla f(\x)}}},
    \end{align*}
    where we used $\eta\norm{\lb_1}_\infty\le1$ and $e-1\le2$. If $\norm{\lb_1}_\infty=0$, the same conclusion follows directly from the preceding integral inequality with the last term equal to zero. Since
    \[
        Q(1)=\norm{\nabla f(\x+\eta\s)-\nabla f(\x)}_1,
    \]
    the proof is now complete.
\end{proof}

\begin{lem}[$p$th moment of mini-batch noise]\label{lem:batch-noise}
    Under~\cref{ass:unbiased,ass:heavy-tailed-noise}, the following holds for any $i\in[d],t\in[T]$:
    \begin{align*}
        \E\sqbrac{\left.\abs{\n_t}^p\right|\F_{t-1}}\le 2B^{1-p}\brac{\bsigma_{0,i}^p+\bsigma_{1,i}^p\abs{\nabla_if(\x_t)}^p},
    \end{align*}
    where $\n_t$ is defined in~\eqref{eq:momentum-def}.
\end{lem}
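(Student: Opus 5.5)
The claim is coordinate-wise (the statement's $\abs{\n_t}^p$ should be read as $\abs{\n_{t,i}}^p$). I would prove it with a scalar von Bahr--Esseen inequality applied to the independent, conditionally centered mini-batch noises.

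Fix $i\in[d]$ and $t\in[T]$, and define $\xi_b:=\g_{t,i}^b-\nabla_if(\x_t)$ for $b\in[B]$. Then $\n_{t,i}=\frac{1}{B}\sum_{b=1}^B\xi_b$. Conditioned on $\F_{t-1}$, the point $\x_t$ is fixed. By \cref{ass:unbiased} the $\xi_b$ are mutually independent with $\E[\xi_b|\F_{t-1}]=0$, and by \cref{ass:heavy-tailed-noise} each satisfies $\E[\abs{\xi_b}^p|\F_{t-1}]\le\bsigma_{0,i}^p+\bsigma_{1,i}^p\abs{\nabla_if(\x_t)}^p$.

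The key step is the classical von Bahr--Esseen inequality for independent (or martingale-difference) centered real random variables with $p\in[1,2]$:
\begin{align*}
\E\sqbrac{\left.\abs{\sum_{b=1}^B\xi_b}^p\right|\F_{t-1}}\le 2\sum_{b=1}^B\E\sqbrac{\left.\abs{\xi_b}^p\right|\F_{t-1}}.
\end{align*}
I would cite it from \citet{bahr1965inequalities}, since the paper already invokes it in \cref{sec:proof-sketch}. For self-containment, one could sketch it by induction on partial sums $S_k=S_{k-1}+\xi_k$. The scalar inequality $\abs{a+x}^p\le\abs{a}^p+p\,\mathrm{sign}(a)\abs{a}^{p-1}x+2^{2-p}\abs{x}^p$ holds for $p\in(1,2]$. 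Taking conditional expectations kills the linear term, because $\xi_k$ is centered and independent of $S_{k-1}$. Since $2^{2-p}\le2$, summing the resulting bounds gives the constant $2$.

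Combining, we get
\[
\E[\abs{\n_{t,i}}^p|\F_{t-1}]=B^{-p}\,\E\Bigl[\Bigl|\sum_b\xi_b\Bigr|^p\Big|\F_{t-1}\Bigr]\le 2B^{-p}\cdot B\bigl(\bsigma_{0,i}^p+\bsigma_{1,i}^p\abs{\nabla_if(\x_t)}^p\bigr),
\]
which is the claimed bound $2B^{1-p}(\cdots)$.

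The only delicate point is justifying the scalar inequality behind von Bahr--Esseen with constant $2$. If a self-contained proof is desired, I would verify $\phi(x)=\abs{a+x}^p-\abs{a}^p-p\,\mathrm{sign}(a)\abs{a}^{p-1}x\le 2^{2-p}\abs{x}^p$. By homogeneity one can set $a=1$ and then check two cases: for $\abs{x}\le1$, use a Taylor/concavity argument on $\abs{1+x}^{p-1}$; for $\abs{x}>1$, use direct comparison. Otherwise, citing the known result suffices.
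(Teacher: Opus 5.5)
Your proof is correct and follows the paper's argument. You write $\n_{t,i}$ as the batch average of the conditionally centered, independent per-sample noises, apply the von Bahr--Esseen inequality with constant $2$ (the paper cites the vectorized version, Lemma~10 of H\"ubler et al., at exactly this step), and then invoke the coordinate-wise $p$th-moment bound of \cref{ass:heavy-tailed-noise} to obtain $2B^{1-p}(\cdots)$. Your reading of $\abs{\n_t}^p$ as $\abs{\n_{t,i}}^p$ is the intended one, and your optional self-contained derivation via $\abs{a+x}^p\le\abs{a}^p+p\,\mathrm{sign}(a)\abs{a}^{p-1}x+2^{2-p}\abs{x}^p$ is a valid, slightly sharper supplement rather than a different route.
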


\begin{proof}
We denote $\n_{t,i}^b:=\g_{t,i}^b-\nabla_if(\x_t)$. So $\n_{t,i}=\frac{1}{B}\sum_{b=1}^B\n_{t,i}^b$. Under~\cref{ass:heavy-tailed-noise} and Jensen's inequality, we have
\begin{align*}
    \E\sqbrac{\left.\abs{\n_{t,i}}^p\right|\F_{t-1}}=&\E\sqbrac{\left.\abs{\frac{1}{B}\sum_{b=1}^B\n_{t,i}^b}^p\right|\F_{t-1}}\overset{\hypertarget{von-bahr-essen}{\textcolor{blue}{\#}}}{\le}2\E\sqbrac{\left.\frac{1}{B^p}\sum_{b=1}^B\abs{\n_{t,i}^b}^p\right|\F_{t-1}}\\&\le\frac{2}{B^p}\sum_{b=1}^B\brac{\bsigma_{0,i}^p+\bsigma_{1,i}^p\abs{\nabla_if(\x_t)}^p}=2B^{1-p}\brac{\bsigma_{0,i}^p+\bsigma_{1,i}^p\abs{\nabla_if(\x_t)}^p},
\end{align*}
where \hyperlink{von-bahr-essen}{\#} leverages von Bahr-Esseen inequality~\citep[Lemma~10]{pmlr-v258-hubler25a}.
\end{proof}

\begin{lem}[Regret Analysis of Diagonal AdaGrad]\label{lem:adagrad-regret}
    Given an arbitrary sequence of vectors\\ $\cbrac{\v_t}_{t=1}^T\subset\R^d,T\in\N$, there exists a sequence of vectors $\cbrac{\w_t}_{t=1}^T\subset\R^d$ such that (i) $\norm{\w_t}_\infty\le 1$, (ii) every $\w_t$ only depends on $\v_1,\cdots,\v_{t-1}$, and (iii) they further satisfy
    \begin{align*}
        \sum_{t=1}^T\inner{\v_t}{\w_t}\le\sum_{i=1}^d2\sqrt{2\sum_{t=1}^T\v_{t,i}^2}-\norm{\sum_{t=1}^T\v_t}_1.
    \end{align*}
\end{lem}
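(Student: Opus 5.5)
The construction runs diagonal AdaGrad coordinate by coordinate on the linear losses $\w\mapsto\inner{\v_t}{\w}$ over the box $\{\w:\norm{\w}_\infty\le1\}$. Since both the objective and the constraint set are separable, it suffices to treat one coordinate $i\in[d]$ and sum over $i$ at the end. For coordinate $i$, set $\w_{1,i}=0$ and
\begin{align*}
    \w_{t+1,i}=\Pi_{[-1,1]}\brac{\w_{t,i}-\frac{\alpha\,\v_{t,i}}{\sqrt{\sum_{s=1}^t\v_{s,i}^2}}},
\end{align*}
with the convention that no move is made while the denominator is zero. Here $\alpha>0$ is a step-size constant to be tuned, with $\alpha=\sqrt2$ the natural choice. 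Properties (i) and (ii) then hold by construction: the projection keeps $\abs{\w_{t,i}}\le1$, and $\w_t$ is built only from $\v_1,\dots,\v_{t-1}$.

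For property (iii), I would invoke the standard adaptive online gradient descent regret bound against an arbitrary comparator $u_i\in[-1,1]$ on a domain of diameter $D=2$:
\begin{align*}
    \sum_{t=1}^T\v_{t,i}(\w_{t,i}-u_i)\le\brac{\frac{D^2}{2\alpha}+\alpha}\sqrt{\sum_{t=1}^T\v_{t,i}^2}.
\end{align*}
The derivation is routine. First, non-expansiveness of the projection gives
\begin{align*}
    (\w_{t,i}-u_i)^2\cdot\text{stuff}\le(\w_{t,i}-u_i)^2-\frac{2\alpha\v_{t,i}(\w_{t,i}-u_i)}{\sqrt{S_t}}+\frac{\alpha^2\v_{t,i}^2}{S_t},\qquad S_t:=\sum_{s\le t}\v_{s,i}^2,
\end{align*}
where the left side stands for $(\w_{t+1,i}-u_i)^2$. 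Next, divide by $2\alpha/\sqrt{S_t}$ and telescope, using that $\sqrt{S_t}$ is nondecreasing and $(\w_{t,i}-u_i)^2\le D^2=4$. Finally, the term $\frac{\alpha}{2}\sum_t \v_{t,i}^2/\sqrt{S_t}$ is at most $\alpha\sqrt{S_T}$ by the elementary inequality $\sum_t a_t/\sqrt{\sum_{s\le t}a_s}\le2\sqrt{\sum_t a_t}$.

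Choosing $\alpha=\sqrt2$ gives a per-coordinate constant of $\frac{4}{2\sqrt2}+\sqrt2=2\sqrt2=2\sqrt{2}\cdot1$, so the per-coordinate bound is $2\sqrt{2S_T}\cdot\tfrac{1}{\sqrt 2}\cdot\sqrt2$. The target has the form $2\sqrt{2S_T}=2\sqrt2\sqrt{S_T}$, which matches exactly.

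Next I choose the comparator $u_i=-\sign{\sum_t\v_{t,i}}$. Then $-\sum_t\v_{t,i}u_i=\abs{\sum_t\v_{t,i}}$, and rearranging the regret bound gives
\begin{align*}
    \sum_t\v_{t,i}\w_{t,i}\le2\sqrt{2S_T}-\abs{\sum_t\v_{t,i}}.
\end{align*}
Summing over $i\in[d]$ yields exactly the claim, since $\sum_i\abs{\sum_t\v_{t,i}}=\norm{\sum_t\v_t}_1$.

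The main obstacle is obtaining the constant $2\sqrt2$ rather than something looser. This requires the tight telescoping with the nondecreasing step-size denominators, together with the correct convention at the start: before the first nonzero $\v_{s,i}$, the iterate stays at $0$ and the regret contribution of those rounds is zero. If the clean constant does not come out, I would start the telescoping from the first index with $S_t>0$. I would also check that the diameter bound $(\w_{t,i}-u_i)^2\le4$ is sufficient and that the telescoping term is bounded by $\frac{D^2}{2\alpha}\sqrt{S_T}$.
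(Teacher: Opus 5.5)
Your proposal is correct and takes essentially the same route as the paper. Both run per-coordinate projected AdaGrad on $[-1,1]$ with step $\sqrt{2}/\sqrt{S_t}$ (the paper's $\gamma_{t,i}$), telescope the projected-OGD inequality using $(\w_{t,i}-u_i)^2\le 4$ together with the AdaGrad sum bound to obtain $2\sqrt{2S_T}$, take the minimizing comparator $u_i=-\sign{\sum_t\v_{t,i}}$, sum over coordinates, and handle zero-gradient prefixes by starting at the first index with $S_t>0$. Your one garbled display should simply read $(\w_{t+1,i}-u_i)^2\le(\w_{t,i}-u_i)^2-\frac{2\alpha\v_{t,i}(\w_{t,i}-u_i)}{\sqrt{S_t}}+\frac{\alpha^2\v_{t,i}^2}{S_t}$.
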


\begin{proof}
    The proof of this lemma stems from the regret analysis of AdaGrad-Norm~\citep{streeter2010less,duchi2011adaptive} in~\citet{liu2025nonconvex}, which is based on~\citet[Lemma~2]{rakhlin2017equivalence}. Here, we adapt their analysis to the diagonal version of AdaGrad~\citep{McMahanS10adagrad,duchi2011adaptive}. WLOG, we assume $\v_{1,i}^2>0$. Otherwise, define $\tau=\argmin\{t\in\N|\v_{t,i}^2>0\}$ and set $\w_{t,i}=0,\forall t\in[\tau-1]$, then start the proof at $\tau$. Define
    \begin{align}\label{eq:adagrad}
        \w_1:=\0,\quad\w_{t+1,i}:=\Pi_{[-1,1]}\sqbrac{\w_{t,i}-\gamma_t\v_{t,i}},\quad\gamma_{t,i}=\begin{cases}
            \sqrt{\frac{2}{\sum_{s=1}^t\v_{s,i}^2}},&t\ge1\\
            +\infty,&t=0
        \end{cases},\quad\forall i\in[d],
    \end{align}
    which ensures (i) and (ii) in~\cref{lem:adagrad-regret}. By standard analysis of projected online subgradient descent~\citep[Lemma~2.30]{orabona2019intro}, for any $\u_i\in[-1,1]$,
    \begin{align*}
        \v_{t,i}\cdot\brac{\w_{t,i}-\u_i}\le\frac{\brac{\u_i-\w_{t,i}}^2}{2\gamma_{t,i}}-\frac{\brac{\u_i-\w_{t+1,i}}^2}{2\gamma_{t,i}}+\frac{\gamma_{t,i}\v_{t,i}^2}{2}.
    \end{align*}
    Summing from $1$ to $T$:
    \begin{align*}
        &\sum_{t=1}^T\v_{t,i}\cdot\brac{\w_{t,i}-\u_i}\le\sum_{t=1}^T\brac{\frac{\brac{\u_i-\w_{t,i}}^2}{2\gamma_{t,i}}-\frac{\brac{\u_i-\w_{t+1,i}}^2}{2\gamma_{t,i}}+\frac{\gamma_{t,i}\v_{t,i}^2}{2}}\\=&\frac{\brac{\u_i-\w_{1,i}}^2}{2\gamma_{1,i}}-\frac{\brac{\u_i-\w_{T+1,i}}^2}{2\gamma_{T,i}}+\sum_{t=2}^T\brac{\u_i-\w_{t,i}}^2\brac{\frac{1}{2\gamma_{t,i}}-\frac{1}{2\gamma_{t-1,i}}}+\sum_{t=1}^T\frac{\gamma_{t,i}\v_{t,i}^2}{2}\\\le&\frac{1}{2\gamma_{1,i}}+\sum_{t=2}^T\brac{\frac{2}{\gamma_{t,i}}-\frac{2}{\gamma_{t-1,i}}}+\sum_{t=1}^T\gamma_{t,i}\brac{\frac{1}{2\gamma_{t,i}^2}-\frac{1}{2\gamma_{t-1,i}^2}}\\\le&\frac{2}{\gamma_{T,i}}+\sum_{t=1}^T2\brac{\frac{1}{\gamma_{t,i}}-\frac{1}{\gamma_{t-1,i}}}\le\frac{4}{\gamma_{T,i}}=2\sqrt{2\sum_{t=1}^T\v_{t,i}^2},
    \end{align*}
    where we utilize $\u_i\in[-1,1],\w_{1,i}=0,\gamma_t\le\gamma_{t-1},\u_i-\w_{t,i}\in[-2,2],1/\gamma_{0,i}=0$ in the second inequality. Rearranging the above relation and taking minimum over $\u_i\in[-1,1]$,
    \begin{align*}
        \sum_{t=1}^T\v_{t,i}\cdot\w_{t,i}\le2\sqrt{2\sum_{t=1}^T\v_{t,i}^2}+\min_{\u_i\in[-1,1]}\sum_{t=1}^T\v_{t,i}\cdot\u_{i}=2\sqrt{2\sum_{t=1}^T\v_{t,i}^2}-\abs{\sum_{t=1}^T\v_{t,i}}.
    \end{align*}
    Summing the regret across all coordinates $i\in[d]$ finishes the proof.
\end{proof}

\begin{lem}[$\ell_1$-norm vector martingale concentration,~\cref{lem:vector-concentration} in~\cref{sec:concentration-inequality}]\label{lem:MDS-L1-concentration}
    Given a sequence of integrable random vectors $\v_t\in\R^d,\forall t\in\N$ such that $\E\sqbrac{\v_t|\F_{t-1}}=\0$ where $\F_t=\sigma\brac{\v_1,\cdots,\v_t}$ is the natural filtration, then for any $p\in[1,2]$, there is 
    \begin{align*}
        \E\sqbrac{\norm{\sum_{t=1}^T\v_t}_1}\le2\sqrt{2}\sum_{i=1}^d\E\sqbrac{\norm{\v_{1:T,i}}_p},\quad\forall T\in\N,
    \end{align*}
    where $\v_{1:t,i}:=[\v_{1,i},\cdots,\v_{t,i}]\in\R^t$.
\end{lem}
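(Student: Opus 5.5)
The plan is to apply \cref{lem:adagrad-regret} pathwise to the realized sequence $\cbrac{\v_t}_{t=1}^T$ and then take expectations, using the martingale property to remove the linear term. Concretely, for each realization let $\cbrac{\w_t}_{t=1}^T$ be the diagonal AdaGrad iterates from~\eqref{eq:adagrad}. Rearranging \cref{lem:adagrad-regret} gives the pathwise inequality
\begin{align*}
    \norm{\sum_{t=1}^T\v_t}_1\le\sum_{i=1}^d2\sqrt{2\sum_{t=1}^T\v_{t,i}^2}-\sum_{t=1}^T\inner{\v_t}{\w_t}.
\end{align*}
We then take expectations on both sides.

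The key step is to show $\E\sqbrac{\inner{\v_t}{\w_t}}=0$ for every $t$. By property (ii) of \cref{lem:adagrad-regret}, $\w_t$ is a deterministic measurable function of $\v_1,\dots,\v_{t-1}$, and the construction~\eqref{eq:adagrad} (projection, square roots, and the convention for the first nonzero index) is Borel measurable. Hence $\w_t$ is $\F_{t-1}$-measurable. By property (i), $\norm{\w_t}_\infty\le1$, so $\abs{\inner{\v_t}{\w_t}}\le\norm{\v_t}_1$, which is integrable. The tower property then gives
\begin{align*}
    \E\sqbrac{\inner{\v_t}{\w_t}}=\E\sqbrac{\inner{\E\sqbrac{\v_t|\F_{t-1}}}{\w_t}}=0.
\end{align*}
Summing over $t$ yields
\begin{align*}
    \E\sqbrac{\norm{\sum_{t=1}^T\v_t}_1}\le2\sqrt{2}\sum_{i=1}^d\E\sqbrac{\norm{\v_{1:T,i}}_2}.
\end{align*}
The expectation of the right-hand side is either finite or $+\infty$; in the latter case the bound is trivial. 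This is the first inequality of \cref{lem:vector-concentration}.

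Finally, for $p\in[1,2]$ we use the monotonicity of $\ell_q$ norms on $\R^T$, namely $\norm{\cdot}_2\le\norm{\cdot}_p$ whenever $p\le2$. This gives $\norm{\v_{1:T,i}}_2\le\norm{\v_{1:T,i}}_p$ coordinatewise, which completes the claim.

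The main obstacle is justifying the measurability and adaptedness of $\w_t$. The convention in \cref{lem:adagrad-regret} that starts each coordinate at its first index $\tau$ with $\v_{\tau,i}\neq0$ is random, so we must check that $\w_{t,i}$ is still a measurable function of the past. It is: $\w_{t,i}=0$ on the event $\cbrac{\v_{s,i}=0\ \forall s<t}$, and it is given by the recursion otherwise. Equivalently, one can use the convention $\gamma_{t,i}=\sqrt{2/\sum_{s\le t}\v_{s,i}^2}$ with $\w_{t+1,i}$ unchanged whenever the denominator is zero. With that settled, the rest is routine.
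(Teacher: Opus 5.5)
Your proposal is correct and follows the paper's proof essentially verbatim. You apply \cref{lem:adagrad-regret} pathwise, remove $\sum_{t}\E\sqbrac{\inner{\v_t}{\w_t}}$ by the tower rule (using that $\w_t$ is $\F_{t-1}$-measurable with $\norm{\w_t}_\infty\le1$), and finish with $\norm{\cdot}_2\le\norm{\cdot}_p$ for $p\le2$. Your extra care about the measurability of the AdaGrad iterates and the integrability of $\inner{\v_t}{\w_t}$ fills in details the paper leaves implicit; note also that the right-hand side is always finite, since $\norm{\v_{1:T,i}}_2\le\norm{\v_{1:T,i}}_1$ and each $\v_t$ is integrable.
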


\begin{proof}
    According to~\cref{lem:adagrad-regret}, there exists a sequence of random vectors $\cbrac{\w_t}_{t=1}^T\subset\R^d$ such that (i) $\norm{\w_t}_\infty\le 1$, (ii) every $\w_t\in\mathcal{F}_{t-1}$, and (iii) they further satisfy
    \begin{align*}
        \sum_{t=1}^T\inner{\v_t}{\w_t}\le\sum_{i=1}^d2\sqrt{2}\norm{\v_{1:T,i}}_2-\norm{\sum_{t=1}^T\v_t}_1.
    \end{align*}    
    Rearranging this inequality and taking expectations yields
    \begin{align*}
        \E\sqbrac{\norm{\sum_{t=1}^T\v_t}_1}\le2\sqrt{2}\sum_{i=1}^d\E\sqbrac{\norm{\v_{1:T,i}}_2}-\sum_{t=1}^T\E\sqbrac{\inner{\v_t}{\w_t}}\le2\sqrt{2}\sum_{i=1}^d\E\sqbrac{\norm{\v_{1:T,i}}_p},
    \end{align*}
    where the last step is due to $\norm{\cdot}_2\le\norm{\cdot}_p$ and the tower rule:
    \begin{align*}
        \E\sqbrac{\inner{\v_t}{\w_t}}=\E\sqbrac{\E\sqbrac{\inner{\v_t}{\w_t}|\F_{t-1}}}=\E\sqbrac{\inner{\E\sqbrac{\v_t|\F_{t-1}}}{\w_t}}=0.
    \end{align*}
\end{proof}

\begin{lem}[Stability with weight decay]\label{lem:stability-wd}
    Running~\cref{alg:lion} with 
    \begin{align*}
        \lambda\le\frac{1}{\eta}\brac{1-\frac{1}{2^{1/T}}},\text{ and }\norm{\x_1}_\infty\le\frac{a}{\lambda},\quad\forall a\in(0,1),
    \end{align*}
    ensures that
    \begin{align*}
        \norm{\x_t}_\infty\le\frac{a+1}{2\lambda},\quad\norm{\x_{t+1}-\x_t}_\infty\le\frac{a+3}{2}\eta,\quad\forall t\in[T].
    \end{align*}
\end{lem}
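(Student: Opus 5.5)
The update of \cref{alg:lion} can be written as $\x_{t+1}=(1-\eta\lambda)\x_t-\eta\sign{\v_t}$. I will unroll this affine recursion and bound the $\ell_\infty$-norm using only $\norm{\sign{\v_t}}_\infty\le1$. No stochastic information is needed, so the argument is purely pathwise.

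Set $\rho:=1-\eta\lambda$. The condition on $\lambda$ gives $\eta\lambda\le1-2^{-1/T}$, hence $\rho\ge2^{-1/T}$, and also $\rho\in(0,1]$. Unrolling, for every $t\in[T]$,
\begin{align*}
    \x_t=\rho^{t-1}\x_1-\eta\sum_{k=1}^{t-1}\rho^{t-1-k}\sign{\v_k}.
\end{align*}
By the triangle inequality,
\begin{align*}
    \norm{\x_t}_\infty\le\rho^{t-1}\norm{\x_1}_\infty+\eta\sum_{j=0}^{t-2}\rho^j=\rho^{t-1}\norm{\x_1}_\infty+\frac{1-\rho^{t-1}}{\lambda}.
\end{align*}
If $\lambda=0$, the second term is read as $\eta(t-1)$. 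In that case the statement's bound $a/\lambda$ is vacuous or infinite anyway, so I would treat $\lambda>0$.

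Using $\norm{\x_1}_\infty\le a/\lambda$, the bound becomes
\[
    \norm{\x_t}_\infty\le\frac{1}{\lambda}\brac{1-(1-a)\rho^{t-1}}.
\]
Since $t-1\le T$ and $\rho\ge2^{-1/T}$, we have $\rho^{t-1}\ge\rho^{T}\ge1/2$. Therefore
\[
    \norm{\x_t}_\infty\le\frac{1-(1-a)/2}{\lambda}=\frac{a+1}{2\lambda},
\]
which is the first claim.

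For the increment, $\x_{t+1}-\x_t=-\eta\lambda\x_t-\eta\sign{\v_t}$. Hence
\[
    \norm{\x_{t+1}-\x_t}_\infty\le\eta\lambda\norm{\x_t}_\infty+\eta\le\eta\frac{a+1}{2}+\eta=\frac{a+3}{2}\eta,
\]
using the first claim.

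The only delicate step is the exponent bookkeeping. The argument needs $\rho^{t-1}\ge1/2$ for all $t\le T$ and, if the statement is meant to cover $\x_{T+1}$ as well, for exponent $T$. This is exactly what the choice $\lambda\le(1-2^{-1/T})/\eta$ guarantees. Everything else is a geometric-series computation.
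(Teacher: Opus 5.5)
Your proof is correct and follows the paper's argument essentially step for step. Like the paper, you unroll $\x_t=(1-\eta\lambda)\x_{t-1}-\eta\sign{\v_{t-1}}$, sum the geometric series to get $\norm{\x_t}_\infty\le\brac{1-(1-a)\rho^{t-1}}/\lambda$, use $\rho^{t-1}\ge 1/2$ from the choice of $\lambda$, and bound the increment by $\eta\lambda\norm{\x_t}_\infty+\eta$; your side remark on $\lambda=0$ is harmless, since there $\norm{\x_{t+1}-\x_t}_\infty\le\eta$ holds immediately.
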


\begin{proof}
    We first show that the trajectory is bounded if the model is appropriately initialized with $\norm{\x_1}_\infty\le a/\lambda$. Denote $1-\eta\lambda$ by $q$, suggesting $q\in(1/2^{1/T},1)$ by the choice of $\lambda$. By the update rule of~\cref{alg:lion}, we have
    \begin{align*}
        &\norm{\x_t}_\infty=\norm{(1-\eta\lambda)\x_{t-1} - \eta    \sign{\v_{t-1}}}_\infty\le q\norm{\x_{t-1}}_\infty+\eta\\\le&q^2\norm{\x_{t-2}}_\infty+q\eta+\eta\le\cdots\le q^{t-1}\norm{\x_1}_\infty+\eta\sum_{i=0}^{t-2}q^i\le \frac{aq^{t-1}}{\lambda}+\frac{\eta(1-q^{t-1})}{1-q}\\=& \frac{1-(1-a)q^{t-1}}{\lambda}\le\frac{1-(1-a)\brac{\frac{1}{2}}^{\frac{t-1}{T}}}{\lambda}\le\frac{a+1}{2\lambda},
    \end{align*}
    which holds for any $t\in[T]$. Therefore, we can deduce
    \begin{align*}
        \norm{\x_{t+1}-\x_t}_\infty=\norm{\eta\lambda\x_t+\eta\sign{\v_t}}_\infty\le \eta\lambda\norm{\x_t}_\infty+\eta\le\frac{a+3}{2}\eta,\quad\forall t\in[T].
    \end{align*}
\end{proof}

\begin{lem}[\cref{lem:smooth-grad-sign} with weight decay]\label{lem:smooth-grad-sign-wd}
    For any sign vector $\s\in\cbrac{-1,1}^d$ and any $\eta\le1/((c+1)\norm{\lb_1}_\infty),c>0$, consider the sign-based update with weight decay:
    \begin{align*}
        \x^\prime:=\x+\eta\s^\prime,\quad\s^\prime=-\s-\lambda\x,\quad\forall\lambda\ge0,\x\in\R^d\text{ such that }\lambda\norm{\x}_\infty\le c. 
    \end{align*}
    Under~\cref{ass:generalized-smooth}, it holds that
    \begin{align*}
        \norm{\nabla f(\x^\prime)-\nabla f(\x)}_1\le(2+2c)\eta\brac{\norm{\lb_0}_1+\inner{\lb_1}{\abs{\nabla f(\x)}}}.
    \end{align*}
\end{lem}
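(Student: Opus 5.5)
I will follow the proof of \cref{lem:smooth-grad-sign} almost verbatim. The differences are that the displacement direction is now $\s^\prime=-\s-\lambda\x$ instead of a pure sign vector, and that the admissible range of $\eta$ is smaller.

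First, I check that \cref{ass:generalized-smooth} applies along the whole segment. Since $\norm{\s}_\infty=1$ and $\lambda\norm{\x}_\infty\le c$, we have $\norm{\s^\prime}_\infty\le 1+c$. Hence
\begin{align*}
\norm{\x^\prime-\x}_\infty=\eta\norm{\s^\prime}_\infty\le(1+c)\eta\le 1/\norm{\lb_1}_\infty .
\end{align*}
Every point $\z_t:=\x+t\eta\s^\prime$ with $t\in[0,1]$ therefore lies in the region where the quadratic bound \eqref{eq:vector-smooth} holds around $\z_t$ itself, at least locally. This gives the Hessian sandwich $-\DB_t\preceq\nabla^2 f(\z_t)\preceq\DB_t$ almost everywhere, with $\DB_t:=\diag{\lb_0+\lb_1\odot\abs{\nabla f(\z_t)}}$, exactly as in \cref{lem:smooth-grad-sign}.

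Second, I bound the derivative $\q^\prime(t)=\eta\nabla^2 f(\z_t)\s^\prime$ in $\ell_1$. The previous polarization trick used $\s\in\cbrac{-1,1}^d$, so I need to extend it to $\s^\prime\in[-(1+c),1+c]^d$. Writing $\s^\prime/(1+c)\in[-1,1]^d$, I use that $\v\mapsto\max_{\u\in\cbrac{-1,1}^d}\inner{\u}{\nabla^2 f(\z_t)\v}$ is convex. Its maximum over the cube $[-1,1]^d$ is therefore attained at a vertex, i.e., at a sign vector. Together with the bound already established for sign vectors, this gives
\begin{align*}
\norm{\q^\prime(t)}_1\le(1+c)\eta\,\tr{\DB_t}.
\end{align*}
The rest proceeds as before. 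The trace satisfies $\tr{\DB_t}\le\norm{\lb_0}_1+\inner{\lb_1}{\abs{\nabla f(\x)}}+\norm{\lb_1}_\infty Q(t)$, where $Q(t):=\norm{\q(t)-\q(0)}_1$. Integrating yields
\begin{align*}
Q(t)\le(1+c)\eta t\,K+(1+c)\eta\norm{\lb_1}_\infty\int_0^tQ(\tau)\diff\tau,\qquad K:=\norm{\lb_0}_1+\inner{\lb_1}{\abs{\nabla f(\x)}} .
\end{align*}

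Third, I apply Gr\"onwall's inequality to obtain $Q(1)\le K\brac{e^{(1+c)\eta\norm{\lb_1}_\infty}-1}/\norm{\lb_1}_\infty$. Since $(1+c)\eta\norm{\lb_1}_\infty\le1$, the elementary bound $e^x-1\le(e-1)x\le2x$ on $[0,1]$ gives $Q(1)\le 2(1+c)\eta K$. This is the claim. The case $\norm{\lb_1}_\infty=0$ is handled by dropping the integral term.

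The main obstacle I anticipate is the second step, namely extending the $\ell_1$ bound on $\nabla^2f\,\v$ from sign vectors to box vectors. I expect the convexity/vertex argument to settle it cleanly. If it did not, I would fall back on writing $\s^\prime$ as a convex combination of vectors in $(1+c)\cbrac{-1,1}^d$ and using the triangle inequality.
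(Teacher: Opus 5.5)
Your proposal is correct and follows the paper's proof essentially step for step. The shared steps are the bound $\norm{\s^\prime}_\infty\le 1+c$, the reduction of $\norm{\nabla^2 f(\z_t)\s^\prime}_1$ to sign vectors, the polarization bound by $\tr{\DB_t}$, and the same Gr\"onwall step with $e^x-1\le 2x$ on $[0,1]$. For the sign-vector reduction, the paper argues that for each fixed $\u$ a linear function over $[-1,1]^d$ is maximized at a vertex, which is equivalent to your convexity argument.

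Your justification of the Hessian sandwich, applying the assumption locally around $\z_t$ itself, is if anything slightly cleaner than the paper's appeal to $\norm{\z_t-\x}_\infty\le 1/\norm{\lb_1}_\infty$.
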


\begin{proof}
    The constraint on $\x$ implies $\lambda\norm{\x}_\infty\le c$, where the case
    $\lambda=0$ is immediate. Hence
    \[
        \norm{\s^\prime}_\infty
        =\norm{-\s-\lambda\x}_\infty
        \le \norm{\s}_\infty+\lambda\norm{\x}_\infty
        \le 1+c .
    \]
    If $\eta=0$ or $\s^\prime=\0$, the claim is immediate. Thus, assume
    $\eta>0$ and $\s^\prime\ne\0$.

    Denote
    \[
        \z_t:=\x+t\eta\s^\prime,\qquad
        \q(t):=\nabla f(\z_t),\qquad
        Q(t):=\norm{\q(t)-\q(0)}_1,
    \]
    and define
    \[
        \DB_t:=\diag{\lb_0+\lb_1\odot\abs{\q(t)}},\qquad t\in[0,1].
    \]
    Since
    \[
        \norm{\z_t-\x}_\infty
        \le t\eta\norm{\s^\prime}_\infty
        \le \eta(1+c)
        \le \frac{1}{\norm{\lb_1}_\infty},
    \]
    whenever $\norm{\lb_1}_\infty>0$, \cref{ass:generalized-smooth} gives, at every point on the segment where
    $f$ is twice-differentiable,
    \[
        -\DB_t\preceq \nabla^2 f(\z_t)\preceq \DB_t .
    \]
    The same display is also the corresponding bound when $\norm{\lb_1}_\infty=0$.

    Since $\q'(t)=\eta\nabla^2 f(\z_t)\s^\prime$ for almost every $t\in[0,1]$, we bound
    $\norm{\nabla^2 f(\z_t)\s^\prime}_1$ as follows. Because
    $\s^\prime/\norm{\s^\prime}_\infty\in[-1,1]^d$, and because a linear function over
    $[-1,1]^d$ is maximized at a sign vector,
    \begin{align*}
        \norm{\nabla^2 f(\z_t)\s^\prime}_1
        &= \norm{\s^\prime}_\infty
        \max_{\u\in\cbrac{-1,1}^d}
        \inner{\u}{\nabla^2 f(\z_t)\frac{\s^\prime}{\norm{\s^\prime}_\infty}} \\
        &\le \norm{\s^\prime}_\infty
        \max_{\u,\tilde{\s}\in\cbrac{-1,1}^d}
        \inner{\u}{\nabla^2 f(\z_t)\tilde{\s}} .
    \end{align*}
    We next show that, for any $\u,\tilde{\s}\in\cbrac{-1,1}^d$,
    \[
        \inner{\u}{\nabla^2 f(\z_t)\tilde{\s}}\le \tr{\DB_t}.
    \]
    Indeed, set
    \[
        \a:=\frac{\u+\tilde{\s}}{2},\qquad
        \b:=\frac{\u-\tilde{\s}}{2}.
    \]
    Then $\u=\a+\b$, $\tilde{\s}=\a-\b$, and $\a_i^2+\b_i^2=1$ for every coordinate $i$.
    By symmetry of the Hessian,
    \begin{align*}
        \inner{\u}{\nabla^2 f(\z_t)\tilde{\s}}
        &= \inner{\a+\b}{\nabla^2 f(\z_t)(\a-\b)} \\
        &= \inner{\a}{\nabla^2 f(\z_t)\a}
           -\inner{\b}{\nabla^2 f(\z_t)\b} \\
        &\le \inner{\a}{\DB_t\a}
           +\inner{\b}{\DB_t\b} \\
        &= \sum_{i=1}^d(\DB_t)_{ii}\brac{\a_i^2+\b_i^2}
         = \tr{\DB_t}.
    \end{align*}
    Therefore, for almost every $t\in[0,1]$,
    \[
        \norm{\q'(t)}_1
        \le \eta\norm{\s^\prime}_\infty\tr{\DB_t}
        \le \eta(1+c)\tr{\DB_t}.
    \]
    Moreover,
    \begin{align*}
        \tr{\DB_t}
        &= \norm{\lb_0}_1+\inner{\lb_1}{\abs{\q(t)}} \\
        &\le \norm{\lb_0}_1+\inner{\lb_1}{\abs{\q(0)}}
            +\norm{\lb_1}_\infty\norm{\q(t)-\q(0)}_1 \\
        &= \norm{\lb_0}_1+\inner{\lb_1}{\abs{\nabla f(\x)}}
            +\norm{\lb_1}_\infty Q(t),
    \end{align*}
    where we used $\abs{\abs{a}-\abs{b}}\le\abs{a-b}$ coordinatewise. Hence, for every $t\in[0,1]$,
    \begin{align*}
        Q(t)
        &\le \int_0^t\norm{\q'(\tau)}_1\diff\tau \\
        &\le \eta(1+c)t
        \brac{\norm{\lb_0}_1+\inner{\lb_1}{\abs{\nabla f(\x)}}}
        +\eta(1+c)\norm{\lb_1}_\infty
        \int_0^t Q(\tau)\diff\tau .
    \end{align*}
    Applying Gr\"onwall's inequality yields
    \begin{align*}
        Q(t)
        \le
        \eta(1+c)
        \brac{\norm{\lb_0}_1+\inner{\lb_1}{\abs{\nabla f(\x)}}}
        \int_0^t
        e^{\eta(1+c)\norm{\lb_1}_\infty(t-\tau)}
        \diff\tau .
    \end{align*}
    If $\norm{\lb_1}_\infty>0$, then taking $t=1$ gives
    \begin{align*}
        Q(1)
        &\le
        \brac{\norm{\lb_0}_1+\inner{\lb_1}{\abs{\nabla f(\x)}}}
        \frac{e^{\eta(1+c)\norm{\lb_1}_\infty}-1}{\norm{\lb_1}_\infty} \\
        &\le
        \eta(1+c)(e-1)
        \brac{\norm{\lb_0}_1+\inner{\lb_1}{\abs{\nabla f(\x)}}} \\
        &\le
        (2+2c)\eta
        \brac{\norm{\lb_0}_1+\inner{\lb_1}{\abs{\nabla f(\x)}}},
    \end{align*}
    where we used $\eta(1+c)\norm{\lb_1}_\infty\le1$ and $e-1\le2$.
    If $\norm{\lb_1}_\infty=0$, the same conclusion follows directly from the preceding integral inequality with the last term equal to zero. Since
    \[
        Q(1)=\norm{\nabla f(\x^\prime)-\nabla f(\x)}_1,
    \]
    the proof is complete.
\end{proof}

\subsection{Proof of Theorem~\ref{thm:signsgd}\label{sec:proof-signsgd}}

For any $t\in[T]$, we have $\norm{\x_{t+1}-\x_t}_{\infty}=\eta\norm{\sign{\m_t}}_{\infty}\le1/\norm{\lb_1}_\infty$ by the choice of $\eta\le1/\norm{\lb_1}_\infty$. Thus, under~\cref{ass:generalized-smooth}, it holds that
\begin{align*}
        f(\x_{t+1})-f(\x_t)\le& 
        \inner{ \nabla f(\x_t)}{\x_{t+1} - \x_t} + \frac{1}{2}\norm{\x_{t+1}-\x_t}_{\lb_0+\lb_1\odot\abs{\nabla f(\x_t)}}^2  \\ =&\inner{\nabla f(\x_t)}{-\eta\sign{\m_t}}+\frac{1}{2}\sum_{i=1}^d\brac{\lb_0+\lb_1\abs{\nabla_if(\x_t)}}(\eta\sign{\m_{t,i}})^2\\=&\eta\inner{\nabla f(\x_t)}{-\sign{\nabla f(\x_t)}}+\eta\inner{\nabla f(\x_t)}{\sign{\nabla f(\x_t)}-\sign{\m_t}}\\&+\frac{\eta^2\norm{\lb_0}_1}{2}+\frac{\eta^2\inner{\lb_1}{\abs{\nabla f(\x_t)}}}{2}\\\le&-\eta\norm{\nabla f(\x_t)}_1+2\eta\norm{\m_t-\nabla f(\x_t)}_1+\frac{\eta^2\norm{\lb_0}_1}{2}+\frac{\eta^2\norm{\lb_1}_\infty\norm{\nabla f(\x_t)}_1}{2}\\\le&-\frac{\eta}{2}\norm{\nabla f(\x_t)}_1+2\eta\norm{\m_t-\nabla f(\x_t)}_1+\frac{\eta^2\norm{\lb_0}_1}{2}
        ,
\end{align*}
where the second inequality is due to~\cref{lem:sign-difference}; the last step utilizes $\eta\le1/\norm{\lb_1}_\infty$. Rearranging the above relation and summing up yields
\begin{align}\label{eq:Linf-mid}
        \E\sqbrac{\frac{1}{T}\sum_{t=1}^{T} \norm{\nabla f(\x_t)}_1} 
        \le \frac{2\Delta_f}{\eta T} +  4 \E\sqbrac{\frac{1}{T}\sum_{t=1}^{T} \norm{ \m_t-\nabla f(\x_t)}_1}+\eta\norm{\lb_0}_1,
\end{align}
where we use  $\Delta_{f}=f(\x_{1})-f_{*}\ge f(\x_1)-f(\x_{T+1})$. Next, we proceed to bound the deviation between the momentum and the true gradient~\citep{cutkosky2020momentum}. By~\eqref{eq:momentum-def}, we have
\begin{equation}\label{eq:eps-expand}
    \begin{aligned}
        &\eps_t=\m_t-\nabla f(\x_t)=\beta\m_{t-1}+(1-\beta)\g_t-\nabla f(\x_t)\\=&\beta\brac{\m_{t-1}-\nabla f(\x_{t-1})}+\beta\brac{\nabla f(\x_{t-1})-\nabla f(\x_{t})}+(1-\beta)\brac{\g_t-\nabla f(\x_t)}\\=&\beta\eps_{t-1}+\beta\s_t+(1-\beta)\n_t.
    \end{aligned}
\end{equation}
Applying the above relation recursively yields
    \begin{align*}
        \eps_t=\beta^{t-1}\n_1+(1-\beta)\sum_{k=2}^t\beta^{t-k}\n_k+\sum_{k=2}^t\beta^{t-k+1}\s_k,
    \end{align*}
    where we utilize $\eps_1=\m_1-\nabla f(\x_1)=\g_1-\nabla f(\x_1)=\n_1$. Next, we decompose $\eps_t$ into 
    \begin{align}\label{eq:ABCt}
    \E\sqbrac{\norm{\eps_t}_1}\le \underbrace{\E\sqbrac{\norm{\beta^{t-1}\n_1}_1}}_{\mathtt{A_t}}+\underbrace{\E\sqbrac{\norm{(1-\beta)\sum_{k=2}^t\beta^{t-k}\n_k}_1}}_{\mathtt{B_t}}+\underbrace{\E\sqbrac{\norm{\sum_{k=2}^t\beta^{t-k+1}\s_k}_1}}_{\mathtt{C_t}},
\end{align}
and bound these terms separately.

\paragraph{Initial noise $\mathtt{A_t}$}
We have
\begin{equation}\label{eq:signsgd-At}
    \begin{aligned}
        \mathtt{A_t}=&\beta^{t-1}\sum_{i=1}^d\E\sqbrac{\abs{\n_{1,i}}}\overset{\textnormal{\cref{lem:lp-mean}}}{\le}\beta^{t-1}\sum_{i=1}^d\brac{\E\sqbrac{\abs{\n_{1,i}}^p}}^{1/p}\\\overset{\textnormal{\cref{lem:batch-noise}}}{\le}&2\beta^{t-1}\sum_{i=1}^d\brac{B^{1-p}\brac{\bsigma_{0,i}^p+\bsigma_{1,i}^p\abs{\nabla_if(\x_1)}^p}}^{1/p}\\\overset{\textnormal{\cref{lem:minkowski}}}{\le}&2\beta^{t-1}\sum_{i=1}^dB^{\frac{1-p}{p}}\brac{\bsigma_{0,i}+\bsigma_{1,i}\abs{\nabla_if(\x_1)}}\\\le&2\beta^{t-1}B^{\frac{1-p}{p}}\brac{\norm{\bsigma_0}_1+\norm{\bsigma_1}_\infty\norm{\nabla f(\x_1)}_1}.
    \end{aligned}
\end{equation}

\paragraph{Cumulative noise $\mathtt{B_t}$}
Define $\v_k:=\beta^{t-k}\n_k,k\in[2,t]$. It holds that
\begin{align}\label{eq:Bt-initial-bound}
    \frac{\mathtt{B_t}}{1-\beta}\le\E\sqbrac{\norm{\sum_{k=2}^t\v_k}_1}\overset{\textnormal{\cref{lem:MDS-L1-concentration}}}{\le}2\sqrt{2}\sum_{i=1}^d\E\sqbrac{\sum_{k=2}^t\abs{\v_{k,i}}^p}^{1/p}.
\end{align}
Here, we cannot directly bound the above noise terms using similar procedures as in $\mathtt{A_t}$ due to technical reasons. As an alternative, we recursively expand the conditional expectation.
\begin{equation}\label{eq:recursion}
    \begin{aligned}
        &\E\sqbrac{\left.\brac{\sum_{k=2}^t\abs{\v_{k,i}}^p}^{1/p}\right|\F_{t-1}}\overset{\textnormal{\cref{lem:lp-mean}}}{\le}\brac{\sum_{k=2}^t\E\sqbrac{\left.\abs{\v_{t,i}}^p\right|\F_{t-1}}}^{1/p}\\=&\brac{\sum_{k=2}^{t-1}\abs{\v_{k,i}}^p+\E\sqbrac{\left.\abs{\v_{t,i}}^p\right|\F_{t-1}}}^{1/p}\\\overset{\textnormal{\cref{lem:batch-noise}}}{\le}&\brac{\sum_{k=2}^{t-1}\abs{\v_{k,i}}^p+2\beta^{p(t-t)}B^{1-p}\brac{\bsigma_{0,i}^p+\bsigma_{1,i}^p\abs{\nabla_if(\x_t)}^p}}^{1/p}\\\overset{\textnormal{\cref{lem:minkowski}}}{\le}&\brac{\sum_{k=2}^{t-1}\abs{\v_{k,i}}^p+2\beta^{p(t-t)}B^{1-p}\bsigma_{0,i}^p}^{1/p}+2^{\frac{1}{p}}\beta^{t-t}B^{\frac{1-p}{p}}\bsigma_{1,i}\abs{\nabla_if(\x_t)}.
    \end{aligned}
\end{equation}
Taking total expectations on both sides and applying the relation in~\eqref{eq:recursion} recursively:
\begin{align*}
    &\E\sqbrac{\sum_{k=2}^t\abs{\v_{k,i}}^p}^{1/p}\\\le&\E\sqbrac{\sum_{k=2}^{t-1}\abs{\v_{k,i}}^p+2\beta^{p(t-t)}B^{1-p}\bsigma_{0,i}^p}^{1/p}+2^{\frac{1}{p}}\beta^{t-t}B^{\frac{1-p}{p}}\bsigma_{1,i}\E\sqbrac{\abs{\nabla_if(\x_t)}}\\\overset{\textnormal{\cref{lem:lp-mean}}}{\le}&\brac{\E\sqbrac{\sum_{k=2}^{t-1}\abs{\v_{k,i}}^p+2\beta^{p(t-t)}B^{1-p}\bsigma_{0,i}^p}}^{1/p}+2^{\frac{1}{p}}\beta^{t-t}B^{\frac{1-p}{p}}\bsigma_{1,i}\E\sqbrac{\abs{\nabla_if(\x_t)}}\\=&\brac{\E\sqbrac{\E\sqbrac{\left.\sum_{k=2}^{t-1}\abs{\v_{k,i}}^p\right|\F_{t-2}}}+2\beta^{p(t-t)}B^{1-p}\bsigma_{0,i}^p}^{1/p}+2^{\frac{1}{p}}\beta^{t-t}B^{\frac{1-p}{p}}\bsigma_{1,i}\E\sqbrac{\abs{\nabla_if(\x_t)}}\\\overset{\eqref{eq:recursion}}{\le}&\brac{\E\sqbrac{\E\sqbrac{\left.\sum_{k=2}^{t-2}\abs{\v_{k,i}}^p\right|\F_{t-3}}}+2\beta^{p(t-(t-1))}B^{1-p}\bsigma_{0,i}^p+2\beta^{p(t-t)}B^{1-p}\bsigma_{0,i}^p}^{1/p}\\&+2^{\frac{1}{p}}\beta^{t-(t-1)}B^{\frac{1-p}{p}}\bsigma_{1,i}\E\sqbrac{\abs{\nabla_if(\x_{t-1})}}+2^{\frac{1}{p}}\beta^{t-t}B^{\frac{1-p}{p}}\bsigma_{1,i}\E\sqbrac{\abs{\nabla_if(\x_t)}}\\\le&\cdots\\\le&\brac{\sum_{k=2}^t2\beta^{p(t-k)}B^{1-p}\bsigma_{0,i}^p}^{1/p}+\sum_{k=2}^t2^{\frac{1}{p}}\beta^{t-k}B^{\frac{1-p}{p}}\bsigma_{1,i}\E\sqbrac{\abs{\nabla_if(\x_k)}}\\\le&\frac{2^{\frac{1}{p}}B^{\frac{1-p}{p}}\bsigma_{0,i}}{\brac{1-\beta^p}^{1/p}}+2^{\frac{1}{p}}B^{\frac{1-p}{p}}\sum_{k=2}^t\beta^{t-k}\bsigma_{1,i}\E\sqbrac{\abs{\nabla_if(\x_k)}}.
\end{align*}
Thus, combining with~\eqref{eq:Bt-initial-bound}, we conclude that
\begin{equation}\label{eq:signsgd-Bt}
    \begin{aligned}
    \mathtt{B_t}\le& 4\sqrt{2}(1-\beta)B^{\frac{1-p}{p}}\sum_{i=1}^d\brac{\frac{\bsigma_{0,i}}{\brac{1-\beta^p}^{1/p}}+\sum_{k=2}^t\beta^{t-k}\bsigma_{1,i}\E\sqbrac{\abs{\nabla_if(\x_k)}}}\\\le&4\sqrt{2}(1-\beta)B^{\frac{1-p}{p}}\brac{\frac{\norm{\bsigma_0}_1}{\brac{1-\beta}^{1/p}}+\sum_{k=2}^t\beta^{t-k}\norm{\bsigma_1}_\infty\E\sqbrac{\norm{\nabla f(\x_k)}_1}}\\\le&4\sqrt{2}B^{\frac{1-p}{p}}(1-\beta)^{\frac{p-1}{p}}\norm{\bsigma_0}_1+4\sqrt{2}B^{\frac{1-p}{p}}(1-\beta)\norm{\bsigma_1}_\infty\sum_{k=2}^t\beta^{t-k}\E\sqbrac{\norm{\nabla f(\x_k)}_1}.
    \end{aligned}
\end{equation}

\paragraph{Trajectory curvature $\mathtt{C_t}$}
Since $\cbrac{\x_t}_{t\in[T]}$ are generated by sign-based updates, we can apply~\cref{lem:smooth-grad-sign} to bound $\norm{\s_k}_1$:
\begin{equation}\label{eq:signsgd-Ct}
    \begin{aligned}
        \mathtt{C_t}\le&\E\sqbrac{\sum_{k=2}^t\beta^{t-k+1}\norm{\s_k}_1}\le\sum_{k=2}^t\beta^{t-k+1}\cdot \E\sqbrac{2\eta\brac{\norm{\lb_0}_1+\inner{\lb_1}{\abs{\nabla f(\x_k)}}}}\\\le&\frac{2\eta\beta\norm{\lb_0}_1}{1-\beta}+\sum_{k=2}^t2\eta\beta^{t-k+1}\norm{\lb_1}_\infty\E\sqbrac{\norm{\nabla f(\x_k)}_1}.
    \end{aligned}
\end{equation}
Combining the bounds for $\mathtt{A_t},\mathtt{B_t},\mathtt{C_t}$ (adding~\eqref{eq:signsgd-At},~\eqref{eq:signsgd-Bt} and~\eqref{eq:signsgd-Ct} together), we obtain
\begin{align*}
    \E\sqbrac{\norm{\eps_t}_1}\le&\beta^{t-1}B^{\frac{1-p}{p}}\brac{\norm{\bsigma_0}_1+\norm{\bsigma_1}_\infty\norm{\nabla f(\x_1)}_1}\\&+4\sqrt{2}B^{\frac{1-p}{p}}(1-\beta)^{\frac{p-1}{p}}\norm{\bsigma_0}_1+4\sqrt{2}B^{\frac{1-p}{p}}(1-\beta)\norm{\bsigma_1}_\infty\sum_{k=2}^t\beta^{t-k}\E\sqbrac{\norm{\nabla f(\x_k)}_1}\\&+\frac{2\eta\beta\norm{\lb_0}_1}{1-\beta}+\sum_{k=2}^t2\eta\beta^{t-k+1}\norm{\lb_1}_\infty\E\sqbrac{\norm{\nabla f(\x_k)}_1}.
\end{align*}
Summing from $1$ to $T$:
\begin{align*}
    \E\sqbrac{\frac{1}{T}\sum_{t=1}^T\norm{\eps_t}_1}\le&\frac{B^{\frac{1-p}{p}}}{T(1-\beta)}\brac{\norm{\bsigma_0}_1+\norm{\bsigma_1}_\infty\norm{\nabla f(\x_1)}_1}+4\sqrt{2}B^{\frac{1-p}{p}}(1-\beta)^{\frac{p-1}{p}}\norm{\bsigma_0}_1\\&+4\sqrt{2}B^{\frac{1-p}{p}}(1-\beta)\norm{\bsigma_1}_\infty\sum_{t=1}^T\sum_{k=2}^t\frac{\beta^{t-k}}{T}\E\sqbrac{\norm{\nabla f(\x_k)}_1}\\&+\frac{2\eta\beta\norm{\lb_0}_1}{1-\beta}+2\eta\norm{\lb_1}_\infty\sum_{t=1}^T\sum_{k=2}^t\frac{\beta^{t-k+1}}{T}\E\sqbrac{\norm{\nabla f(\x_k)}_1}.
\end{align*}
Note that
\begin{equation}\label{eq:switch-sum}
    \sum_{t=1}^T\sum_{k=2}^t\beta^{t-k}\E\sqbrac{\norm{\nabla f(\x_k)}_1}=\sum_{t=2}^T\brac{\sum_{k=0}^{T-t}\beta^{k}}\E\sqbrac{\norm{\nabla f(\x_t)}_1}\le\sum_{t=2}^T\frac{\E\sqbrac{\norm{\nabla f(\x_t)}_1}}{(1-\beta)}.
\end{equation}
Hence,
\begin{equation}
    \begin{aligned}
        &\E\sqbrac{\frac{1}{T}\sum_{t=1}^T\norm{\eps_t}_1}\\\le&\frac{B^{\frac{1-p}{p}}}{T(1-\beta)}\brac{\norm{\bsigma_0}_1+\norm{\bsigma_1}_\infty\norm{\nabla f(\x_1)}_1}+4\sqrt{2}B^{\frac{1-p}{p}}(1-\beta)^{\frac{p-1}{p}}\norm{\bsigma_0}_1+\frac{2\eta\beta\norm{\lb_0}_1}{1-\beta}\\&+4\sqrt{2}B^{\frac{1-p}{p}}\norm{\bsigma_1}_\infty\sum_{t=2}^T\frac{1}{T}\E\sqbrac{\norm{\nabla f(\x_k)}_1}+2\eta\norm{\lb_1}_\infty\sum_{t=2}^T\frac{\beta}{T(1-\beta)}\E\sqbrac{\norm{\nabla f(\x_t)}_1}\\=&4\sqrt{2}(1-\beta)^{\frac{p-1}{p}}B^{\frac{1-p}{p}}\norm{\bsigma_0}_1+\frac{2B^{\frac{1-p}{p}}\norm{\bsigma_0}_1+2B^{\frac{1-p}{p}}\norm{\bsigma_1}_\infty\norm{\nabla f(\x_1)}_1}{T(1-\beta)}+\frac{2\eta\beta\norm{\lb_0}_1}{1-\beta}\\&+\sum_{t=2}^T\brac{4\sqrt{2}B^{\frac{1-p}{p}}\norm{\bsigma_1}_\infty+\frac{2\eta\norm{\lb_1}_\infty\beta}{1-\beta}}\frac{1}{T}\E\sqbrac{\norm{\nabla f(\x_t)}_1}.
    \end{aligned}\label{eq:eps-mid-bound}
\end{equation}
By~\eqref{eq:signsgd-params}, we deduce that
\begin{align*}
    4\sqrt{2}B^{\frac{1-p}{p}}\norm{\bsigma_1}_\infty+\frac{2\eta\norm{\lb_1}_\infty\beta}{1-\beta}\le \frac{1}{16}+\frac{1}{16}=\frac{1}{8}.
\end{align*}
Plugging the above relation into the last line of~\eqref{eq:eps-mid-bound} and recall~\eqref{eq:Linf-mid}, we get
\begin{align*}
    &\E\sqbrac{\frac{1}{T}\sum_{t=1}^{T} \norm{\nabla f(\x_t)}_1} 
        \le \frac{2\Delta_f}{\eta T}+\eta\norm{\lb_0}_1 +\E\sqbrac{\frac{1}{2T}\sum_{t=1}^{T} \E\sqbrac{\norm{ \nabla f(\x_t)}_1}}\\&+16\sqrt{2}(1-\beta)^{\frac{p-1}{p}}B^{\frac{1-p}{p}}\norm{\bsigma_0}_1+\frac{8B^{\frac{1-p}{p}}\brac{\norm{\bsigma_0}_1+\norm{\bsigma_1}_\infty\norm{\nabla f(\x_1)}_1}}{T(1-\beta)}+\frac{8\eta\beta\norm{\lb_0}_1}{1-\beta},
\end{align*}
Rearranging the above relation yields
\begin{align*}
    \E\sqbrac{\frac{1}{T}\sum_{t=1}^{T} \norm{\nabla f(\x_t)}_1} 
        \le &\frac{4\Delta_f}{\eta T}+\frac{18\eta\beta\norm{\lb_0}_1}{1-\beta}+32\sqrt{2}(1-\beta)^{\frac{p-1}{p}}B^{\frac{1-p}{p}}\norm{\bsigma_0}_1\\&+\frac{16B^{\frac{1-p}{p}}\brac{\norm{\bsigma_0}_1+\norm{\bsigma_1}_\infty\norm{\nabla f(\x_1)}_1}}{T(1-\beta)}.
\end{align*}
In view of $\eta$ in~\eqref{eq:signsgd-params}, it holds that
\begin{equation}\label{eq:eta-bound}
    \begin{aligned}
        \frac{4\Delta_f}{\eta T}+\frac{18\eta\beta\norm{\lb_0}_1}{1-\beta}\le&12\sqrt{\frac{2\Delta_f\norm{\lb_0}_1}{T(1-\beta)}}+\frac{128\beta\Delta_f\norm{\lb_1}_\infty}{T(1-\beta)}+\frac{9\norm{\lb_0}_1}{16\norm{\lb_1}_\infty}\\\le&12\sqrt{\frac{2\Delta_f\norm{\lb_0}_1}{T(1-\beta)}}+\frac{256\beta\Delta_f\norm{\lb_1}_\infty}{T(1-\beta)}\\\le&\frac{12\sqrt{2}(\Delta_f\norm{\lb_0}_1)^{\frac{p-1}{3p-2}}\norm{\bsigma_0}_1^{\frac{p}{3p-2}}}{(BT)^{\frac{p-1}{3p-2}}}+\frac{256\Delta_f^{\frac{2p-2}{3p-2}}\norm{\bsigma_0}_1^{\frac{2p}{3p-2}}\norm{\lb_1}_\infty}{\norm{\lb_0}_1^{{\frac{p}{3p-2}}}(BT)^{\frac{2p-2}{3p-2}}},
    \end{aligned}
\end{equation}
where the second inequality holds when $\sqrt{\frac{2\Delta_f(1-\beta)}{9\beta\norm{\lb_0}_1T}}\ge\frac{1-\beta}{32\beta\norm{\lb_1}_\infty}$. Similarly, the rest terms are bounded by
\begin{equation}\label{eq:beta-bound}
    \begin{aligned}
        &32\sqrt{2}(1-\beta)^{\frac{p-1}{p}}B^{\frac{1-p}{p}}\norm{\bsigma_0}_1+\frac{16B^{\frac{1-p}{p}}\brac{\norm{\bsigma_0}_1+\norm{\bsigma_1}_\infty\norm{\nabla f(\x_1)}_1}}{T(1-\beta)}\\\le&\frac{32\sqrt{2}(\Delta_f\norm{\lb_0}_1)^{\frac{p-1}{3p-2}}\norm{\bsigma_0}_1^{\frac{p}{3p-2}}}{(BT)^{\frac{p-1}{3p-2}}}+\frac{16B^{\frac{1-p}{p}}\brac{\norm{\bsigma_0}_1+\norm{\bsigma_1}_\infty\norm{\nabla f(\x_1)}_1}\norm{\bsigma_0}_1^{\frac{2p}{3p-2}}}{\brac{\Delta_f\norm{\lb_0}_1}^{{\frac{p}{3p-2}}}(BT)^{\frac{2p-2}{3p-2}}}
    \end{aligned}
\end{equation}
Combining~\eqref{eq:beta-bound} with~\eqref{eq:eta-bound}, we finally obtain
\begin{align*}
    &\E\sqbrac{\frac{1}{T}\sum_{t=1}^{T} \norm{\nabla f(\x_t)}_1} 
        \le\frac{44\sqrt{2}(\Delta_f\norm{\lb_0}_1)^{\frac{p-1}{3p-2}}\norm{\bsigma_0}_1^{\frac{p}{3p-2}}}{(BT)^{\frac{p-1}{3p-2}}}\\&+\frac{16B^{\frac{1-p}{p}}\brac{\norm{\bsigma_0}_1+\norm{\bsigma_1}_\infty\norm{\nabla f(\x_1)}_1}\norm{\bsigma_0}_1^{\frac{2p}{3p-2}}}{\brac{\Delta_f\norm{\lb_0}_1}^{{\frac{p}{3p-2}}}(BT)^{\frac{2p-2}{3p-2}}}+\frac{256\Delta_f^{\frac{2p-2}{3p-2}}\norm{\bsigma_0}_1^{\frac{2p}{3p-2}}\norm{\lb_1}_\infty}{\norm{\lb_0}_1^{{\frac{p}{3p-2}}}(BT)^{\frac{2p-2}{3p-2}}},
\end{align*}
which implies a convergence rate of $O\brac{(\Delta_f\norm{\lb_0}_1)^{\frac{p-1}{3p-2}}\norm{\bsigma_0}_1^{\frac{p}{3p-2}}(BT)^{\frac{1-p}{3p-2}}}$.

\subsection{Proof of Theorem~\ref{thm:lion}}

By~\cref{lem:stability-wd}, $\norm{\x_{t+1}-\x_t}_{\infty}\le5\eta/3$ holds for any $t\in[T]$ provided that $\norm{\x_1}_\infty\le1/(3\lambda)$, which immediately implies $\norm{\x_{t+1}-\x_t}_{\infty}\le1/\norm{\lb_1}_\infty$ by the choice of $\eta$ in~\eqref{eq:lion-params}. Thus, under~\cref{ass:generalized-smooth}, it holds that
\begin{align*}
        f(\x_{t+1})\le& f(\x_t)+
        \inner{ \nabla f(\x_t)}{\x_{t+1} - \x_t} + \frac{1}{2}\norm{\x_{t+1}-\x_t}_{\lb_0+\lb_1\odot\abs{\nabla f(\x_t)}}^2  \\ \le&f(\x_t)+\inner{\nabla f(\x_t)}{-\eta\sign{\v_t}}+\inner{\nabla f(\x_t)}{-\eta\lambda\x_t}\\&+\frac{1}{2}\norm{\lb_0+\lb_1\odot\abs{\nabla f(\x_t)}}_1\norm{\x_{t+1}-\x_t}_\infty^2\\\overset{\textnormal{\cref{lem:stability-wd}}}{\le}&f(\x_t)+\eta\inner{\nabla f(\x_t)}{-\sign{\nabla f(\x_t)}}+\eta\inner{\nabla f(\x_t)}{\sign{\nabla f(\x_t)}-\sign{\v_t}}\\&+\eta\lambda\norm{\x_t}_\infty\norm{\nabla f(\x_t)}_1+\frac{25\eta^2}{9}\brac{\norm{\lb_0}_1+\inner{\lb_1}{\abs{\nabla f(\x_t)}}}\\\overset{\textnormal{\cref{lem:sign-difference,lem:stability-wd}}}{\le}&f(\x_t)-\eta\norm{\nabla f(\x_t)}_1+2\eta\norm{\v_t-\nabla f(\x_t)}_1+\frac{2\eta}{3}\norm{\nabla f(\x_t)}_1\\&+\frac{25\eta^2\norm{\lb_0}_1}{9}+\frac{25\eta^2\norm{\lb_1}_\infty\norm{\nabla f(\x_t)}_1}{9}\\\le&f(\x_t)-\frac{\eta}{6}\norm{\nabla f(\x_t)}_1+2\eta\norm{\v_t-\nabla f(\x_t)}_1+\frac{25\eta^2\norm{\lb_0}_1}{9}
        ,
\end{align*}
where the last step utilizes $\eta\le3/(50\norm{\lb_1}_\infty)$. Similar to~\eqref{eq:Linf-mid}, we arrive at
\begin{align}\label{eq:Linf-mid-lion}
        \frac{1}{T}\sum_{t=1}^{T}\E\sqbrac{ \norm{\nabla f(\x_t)}_1} 
        \le \frac{6\Delta_f}{\eta T} +  \frac{12}{T}\sum_{t=1}^{T} \E\sqbrac{ \norm{ \v_t-\nabla f(\x_t)}_1}+\frac{50\eta\norm{\lb_0}_1}{3}.
\end{align}
Recall the notations in~\eqref{eq:momentum-def} and apply the same decomposition as in~\eqref{eq:eps-expand} to bound $\E\sqbrac{\norm{ \v_t-\nabla f(\x_t)}_1}$:
\begin{equation}\label{eq:lion-decomposition}
    \begin{aligned}
        &\E\sqbrac{\norm{ \v_t-\nabla f(\x_t)}_1}=\E\sqbrac{\norm{\beta_1\m_{t-1}+(1-\beta_1)\g_t-\nabla f(\x_t)}_1}\\=&\E\sqbrac{\norm{\beta_1(\m_{t-1}-\nabla f(\x_{t-1}))+(1-\beta_1)(\g_t-\nabla f(\x_{t}))+\beta_1(\nabla f(\x_{t-1})-\nabla f(\x_t))}_1}\\\le&\beta_1\E\sqbrac{\norm{\eps_{t-1}}_1}+(1-\beta_1)\E\sqbrac{\norm{\n_t}_1}+\beta_1\E\sqbrac{\norm{\s_t}_1},
    \end{aligned}
\end{equation}
which holds for all $t\ge2$. The noise term $\n_t$ can be bounded by
\begin{align*}
    \E\sqbrac{\norm{\n_t}_1}=&\sum_{i=1}^d\E\sqbrac{\E\sqbrac{\abs{\n_{t,i}}|\F_{t-1}}}\overset{\textnormal{\cref{lem:lp-mean}}}{\le}\sum_{i=1}^d\E\sqbrac{\brac{\E\sqbrac{\left.\abs{\n_{t,i}}^p\right|\F_{t-1}}}^{1/p}}\\\overset{\textnormal{\cref{lem:batch-noise}}}{\le}&\sum_{i=1}^d\E\sqbrac{\brac{2B^{1-p}\brac{\bsigma_{0,i}^p+\bsigma_{1,i}^p\abs{\nabla_if(\x_t)}^p}}^{1/p}}\\\overset{\textnormal{\cref{lem:minkowski}}}{\le}&\sum_{i=1}^d\E\sqbrac{2^{\frac{1}{p}}B^{\frac{1-p}{p}}\brac{\bsigma_{0,i}+\bsigma_{1,i}\abs{\nabla_if(\x_t)}}}\\\le& 2^{\frac{1}{p}}B^{\frac{1-p}{p}}\brac{\norm{\bsigma_0}_1+\norm{\bsigma_1}_\infty\E\sqbrac{\norm{\nabla f(\x_t)}_1}}.
\end{align*}
To bound the trajectory curvature term $\s_t$, note that the trajectory of~\cref{alg:lion} satisfies the conditions in~\cref{lem:smooth-grad-sign-wd} with $c=2/3$ by~\cref{lem:stability-wd}. Therefore, we deduce that
\begin{align*}
    \E\sqbrac{\norm{\s_t}_1}\le \E\sqbrac{\frac{10\eta}{3}\brac{\norm{\lb_0}_1+\inner{\lb_1}{\abs{\nabla f(\x)}}}}\le\frac{10\eta}{3}\norm{\lb_0}_1+\frac{10\eta}{3}\norm{\lb_1}_\infty\E\sqbrac{\norm{\nabla f(\x)}_1}.
\end{align*}
Plugging the bounds for $\E\sqbrac{\norm{\n_t}_1}$ and $\E\sqbrac{\norm{\s_t}_1}$, then summing from $2$ to $T$, we get
\begin{equation}\label{eq:lion-vt-error-bound}
    \begin{aligned}
        &\frac{1}{T}\sum_{t=1}^{T} \E\sqbrac{ \norm{ \v_t-\nabla f(\x_t)}_1}=\frac{1}{T}\E\sqbrac{ \norm{ \v_1-\nabla f(\x_1)}_1}+\frac{1}{T}\sum_{t=2}^{T} \E\sqbrac{ \norm{ \v_t-\nabla f(\x_t)}_1}\\\le&\frac{1}{T}\E\sqbrac{ \norm{ \n_1}_1}+\frac{\beta_1}{T}\sum_{t=2}^{T} \E\sqbrac{\norm{\eps_{t-1}}_1}+\frac{(1-\beta_1)}{T}\sum_{t=2}^{T}\E\sqbrac{\norm{\n_t}_1}+\frac{\beta_1}{T}\sum_{t=2}^{T}\E\sqbrac{\norm{\s_t}_1}\\\le&\frac{\beta_1}{T}\sum_{t=1}^{T-1} \E\sqbrac{\norm{\eps_t}_1}+\frac{\beta_1}{T}\E\sqbrac{ \norm{ \n_1}_1}+\frac{2(1-\beta_1)}{T}\sum_{t=1}^{T} B^{\frac{1-p}{p}}\brac{\norm{\bsigma_0}_1+\norm{\bsigma_1}_\infty\E\sqbrac{\norm{\nabla f(\x_t)}_1}}\\&+\frac{\beta_1}{T}\sum_{t=2}^{T} \brac{\frac{10\eta}{3}\norm{\lb_0}_1+\frac{10\eta}{3}\norm{\lb_1}_\infty\E\sqbrac{\norm{\nabla f(\x_t)}_1}}\\\le&\frac{\beta_1}{T}\sum_{t=1}^{T-1} \E\sqbrac{\norm{\eps_t}_1}+\frac{2\beta_1\brac{\norm{\bsigma_0}_1+\norm{\bsigma_1}_\infty\norm{\nabla f(\x_1)}_1}}{B^{\frac{p-1}{p}}T}+\frac{2(1-\beta_1)\norm{\bsigma_0}_1}{B^{\frac{p-1}{p}}}+\frac{10\eta\beta_1}{3}\norm{\lb_0}_1\\&+\frac{2(1-\beta_1)\norm{\bsigma_1}_\infty}{B^{\frac{p-1}{p}}T}\sum_{t=1}^{T}\E\sqbrac{\norm{\nabla f(\x_t)}_1}+\frac{10\beta_1\eta\norm{\lb_1}_\infty}{3T}\sum_{t=2}^{T}\E\sqbrac{\norm{\nabla f(\x_t)}_1},
    \end{aligned}
\end{equation}
where the first inequality is due to $\v_1-\nabla f(\x_1)=\g_1-\nabla f(\x_1)=\n_1$. Next, we proceed by bounding the error term $\frac{\beta_1}{T}\sum_{t=1}^{T-1} \E\sqbrac{\norm{\eps_t}_1}$. Following the same recipe as in~\eqref{eq:eps-expand}, we immediately obtain $\eps_t=(1-\beta_2)\eps_{t-1}+(1-\beta_2)\s_t+\beta_2\n_t$. Expanding this relation recursively yields
    \begin{align*}
        \eps_t=\beta_2^{t-1}\n_1+(1-\beta_2)\sum_{k=2}^t\beta_2^{t-k}\n_k+\sum_{k=2}^t\beta_2^{t-k+1}\s_k,
    \end{align*}
    where we utilize $\eps_1=\m_1-\nabla f(\x_1)=\g_1-\nabla f(\x_1)=\n_1$. Next, we decompose $\eps_t$ into 
    \begin{align*}
    \E\sqbrac{\norm{\eps_t}_1}\le \underbrace{\E\sqbrac{\norm{\beta_2^{t-1}\n_1}_1}}_{\mathtt{A_t}}+\underbrace{\E\sqbrac{\norm{(1-\beta_2)\sum_{k=2}^t\beta_2^{t-k}\n_k}_1}}_{\mathtt{B_t}}+\underbrace{\E\sqbrac{\norm{\sum_{k=2}^t\beta_2^{t-k+1}\s_k}_1}}_{\mathtt{C_t}},
    \end{align*}
    which exhibits the almost identical form to that in~\eqref{eq:ABCt}, with the only difference in $\beta$. Thus, we can safely follow the derivations in~\cref{sec:proof-signsgd} to bound $\mathtt{A_t},\mathtt{B_t},\mathtt{C_t}$. For $\mathtt{A_t}$ and $\mathtt{B_t}$, we can simply replace the $\beta$ in~\eqref{eq:signsgd-At} and~\eqref{eq:signsgd-Bt} by $\beta_2$ to obtain
    \begin{align*}
        &\mathtt{A_t}\le\beta_2^{t-1}B^{\frac{1-p}{p}}\brac{\norm{\bsigma_0}_1+\norm{\bsigma_1}_\infty\norm{\nabla f(\x_1)}_1},\\
        &\mathtt{B_t}\le4\sqrt{2}B^{\frac{1-p}{p}}(1-\beta_2)^{\frac{p-1}{p}}\norm{\bsigma_0}_1+4\sqrt{2}B^{\frac{1-p}{p}}(1-\beta_2)\norm{\bsigma_1}_\infty\sum_{k=2}^t\beta_2^{t-k}\E\sqbrac{\norm{\nabla f(\x_k)}_1}.
    \end{align*}
    For $\mathtt{C_t}$, we can not directly substitute $\beta$ in~\eqref{eq:signsgd-Ct} with $\beta_2$ due to the presence of weight decay. Instead, we invoke~\cref{lem:smooth-grad-sign-wd} to deduce
    \begin{align*}
        \mathtt{C_t}\le&\E\sqbrac{\sum_{k=2}^t\beta_2^{t-k+1}\norm{\s_k}_1}\le\sum_{k=2}^t\beta_2^{t-k+1}\cdot \E\sqbrac{\frac{10}{3}\eta\brac{\norm{\lb_0}_1+\inner{\lb_1}{\abs{\nabla f(\x_k)}}}}\\\le&\frac{10\eta\beta_2\norm{\lb_0}_1}{3(1-\beta_2)}+\sum_{k=2}^t\frac{10}{3}\eta\beta_2^{t-k+1}\norm{\lb_1}_\infty\E\sqbrac{\norm{\nabla f(\x_k)}_1}.
    \end{align*}
    Combining the bounds for $\mathtt{A_t},\mathtt{B_t},\mathtt{C_t}$ as below:
    \begin{align*}
        &\frac{1}{T}\sum_{t=1}^T\E\sqbrac{\norm{\eps_t}_1}\\\le&4\sqrt{2}(1-\beta_2)^{\frac{p-1}{p}}B^{\frac{1-p}{p}}\norm{\bsigma_0}_1+\frac{2B^{\frac{1-p}{p}}\norm{\bsigma_0}_1+2B^{\frac{1-p}{p}}\norm{\bsigma_1}_\infty\norm{\nabla f(\x_1)}_1}{T(1-\beta_2)}+\frac{10\eta\beta_2\norm{\lb_0}_1}{3(1-\beta_2)}\\&+\sum_{t=2}^T\brac{4\sqrt{2}B^{\frac{1-p}{p}}\norm{\bsigma_1}_\infty+\frac{10\eta\norm{\lb_1}_\infty\beta_2}{3(1-\beta_2)}}\frac{1}{T}\E\sqbrac{\norm{\nabla f(\x_t)}_1}.
    \end{align*}
    Plugging the above relation into~\eqref{eq:lion-vt-error-bound} and simplify:
    \begin{align*}
         &\frac{1}{T}\sum_{t=1}^{T} \E\sqbrac{ \norm{ \v_t-\nabla f(\x_t)}_1}\\\le&4\sqrt{2}\beta_1(1-\beta_2)^{\frac{p-1}{p}}B^{\frac{1-p}{p}}\norm{\bsigma_0}_1+\frac{2\beta_1\brac{\norm{\bsigma_0}_1+\norm{\bsigma_1}_\infty\norm{\nabla f(\x_1)}_1}}{B^{\frac{p-1}{p}}T(1-\beta_2)}+\frac{10\eta\beta_1\beta_2\norm{\lb_0}_1}{3(1-\beta_2)}\\&+\sum_{t=2}^T\brac{4\sqrt{2}B^{\frac{1-p}{p}}\norm{\bsigma_1}_\infty+\frac{10\eta\norm{\lb_1}_\infty\beta_2}{3(1-\beta_2)}}\frac{\beta_1}{T}\E\sqbrac{\norm{\nabla f(\x_t)}_1} \\&+\frac{2\beta_1\brac{\norm{\bsigma_0}_1+\norm{\bsigma_1}_\infty\norm{\nabla f(\x_1)}_1}}{B^{\frac{p-1}{p}}T}+\frac{2(1-\beta_1)\norm{\bsigma_0}_1}{B^{\frac{p-1}{p}}}+\frac{10\eta\beta_1}{3}\norm{\lb_0}_1\\&+\frac{2(1-\beta_1)\norm{\bsigma_1}_\infty}{B^{\frac{p-1}{p}}T}\sum_{t=1}^{T}\E\sqbrac{\norm{\nabla f(\x_t)}_1}+\frac{10\beta_1\eta\norm{\lb_1}_\infty}{3T}\sum_{t=2}^{T}\E\sqbrac{\norm{\nabla f(\x_t)}_1}\\\le&\frac{4\sqrt{2}\beta_1\norm{\bsigma_0}_1}{(1-\beta_2)^{\frac{1-p}{p}}B^{\frac{p-1}{p}}}+\frac{2(1-\beta_1)\norm{\bsigma_0}_1}{B^{\frac{p-1}{p}}}+\frac{4\beta_1\brac{\norm{\bsigma_0}_1+\norm{\bsigma_1}_\infty\norm{\nabla f(\x_1)}_1}}{B^{\frac{p-1}{p}}T(1-\beta_2)}+\frac{10\eta\beta_1\norm{\lb_0}_1}{3(1-\beta_2)}\\&+\frac{1}{T}\sum_{t=1}^{T}\brac{4\sqrt{2}B^{\frac{1-p}{p}}\norm{\bsigma_1}_\infty+\frac{10\eta\beta_1\norm{\lb_1}_\infty}{3(1-\beta_2)}}\E\sqbrac{\norm{\nabla f(\x_t)}_1}\\\le&\frac{4\sqrt{2}\norm{\bsigma_0}_1}{(1-\beta_2)^{\frac{1-p}{p}}B^{\frac{p-1}{p}}}+\frac{2(1-\beta_1)\norm{\bsigma_0}_1}{B^{\frac{p-1}{p}}}+\frac{4\brac{\norm{\bsigma_0}_1+\norm{\bsigma_1}_\infty\norm{\nabla f(\x_1)}_1}}{B^{\frac{p-1}{p}}T(1-\beta_2)}+\frac{10\eta\norm{\lb_0}_1}{3(1-\beta_2)}\\&+\sum_{t=1}^T\frac{1}{24T}\E\sqbrac{\norm{\nabla f(\x_t)}_1},
    \end{align*}
    where the last inequality uses the hyperparameters in~\eqref{eq:lion-params}, as depicted below:
    \begin{align*}
    4\sqrt{2}B^{\frac{1-p}{p}}\norm{\bsigma_1}_\infty+\frac{10\eta\beta_1\norm{\lb_1}_\infty}{3(1-\beta_2)}\le \frac{1}{36}+\frac{1}{36}=\frac{1}{18}.
    \end{align*}
    Now, it suffices to combine the above error bound for $\frac{1}{T}\sum_{t=1}^{T} \E\sqbrac{ \norm{ \v_t-\nabla f(\x_t)}_1}$ with~\eqref{eq:Linf-mid-lion}, which gives
    \begin{align*}
        &\frac{1}{T}\sum_{t=1}^{T}\E\sqbrac{ \norm{\nabla f(\x_t)}_1} 
        \le \frac{12\Delta_f}{\eta T}+\frac{170\eta\norm{\lb_0}_1}{1-\beta_2}+\frac{144\sqrt{2}\norm{\bsigma_0}_1}{(1-\beta_2)^{\frac{1-p}{p}}B^{\frac{p-1}{p}}}\\&+\frac{72(1-\beta_1)\norm{\bsigma_0}_1}{B^{\frac{p-1}{p}}}+\frac{144\brac{\norm{\bsigma_0}_1+\norm{\bsigma_1}_\infty\norm{\nabla f(\x_1)}_1}}{B^{\frac{p-1}{p}}T(1-\beta_2)}.
    \end{align*}
    Choosing $B,\beta_1,\beta_2,\eta$ according to~\eqref{eq:lion-params} yields the desired bound.

\section{Analysis for Muon and Muonlight\label{sec:matrix-sign-analysis}}

As established in~\cref{sec:discussion-ht-matrix},~\cref{ass:heavy-tailed-noise-matrix} implies~\cref{ass:heavy-tailed-noise-matrix-variant}. Accordingly, throughout this appendix, we maintain~\cref{ass:heavy-tailed-noise-matrix} as our operative assumption while utilizing the relation~\eqref{eq:heavy-tailed-noise-matrix-variant} specified in~\cref{ass:heavy-tailed-noise-matrix-variant}.

\subsection{Matrix Calculus}

\cref{lem:trace-property,lem:msign-property} are standard textbook results~\citep{bhatia1996matrix,horn2012matrix,gupta18shampoo}, which we omit the proof here for simplicity.

\begin{lem}[Properties of trace and matrix norms]\label{lem:trace-property}
For any $\XB\in\R^{m\times n},\YB\in\R^{n\times r},\Lb,\Lb^\prime\in\SBB^n$
    \begin{enumerate}
        \item $\tr{\Lb}=\norm{\Lb}_*,\tr{\mabs{\XB}}=\tr{\brac{\XB\XB^\top}^{1/2}}=\norm{\XB}_*$.
        \item $\norm{\XB\YB}_*\le\min\cbrac{\norm{\XB}_\op\norm{\YB}_*,\norm{\XB}_*\norm{\YB}_\op}$.
    \end{enumerate}
\end{lem}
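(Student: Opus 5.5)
The lemma has two parts, and I will prove each from the singular value decomposition and the von Neumann trace inequality, $\abs{\tr{\AB^\top\BB}}\le\sum_i\sigma_i(\AB)\sigma_i(\BB)$.

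\textbf{Part 1.} For $\Lb\in\SBB^n$, the spectral decomposition $\Lb=\Q\bLambda\Q^\top$ has $\bLambda\succeq\0$. Hence the eigenvalues of $\Lb$ coincide with its singular values, and
$$\tr{\Lb}=\sum_i\lambda_i=\sum_i\sigma_i(\Lb)=\norm{\Lb}_*.$$
For the modulus, take the SVD $\XB=\U\bSigma\V^\top$. Then $\XB\XB^\top=\U\bSigma\bSigma^\top\U^\top$, so $\mabs{\XB}=\U(\bSigma\bSigma^\top)^{1/2}\U^\top$. This matrix is PSD and its eigenvalues are exactly the singular values $\sigma_i(\XB)$, padded with zeros. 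Applying the first identity gives $\tr{\mabs{\XB}}=\norm{\mabs{\XB}}_*=\sum_i\sigma_i(\XB)=\norm{\XB}_*$.

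\textbf{Part 2.} I use the variational characterization
$$\norm{\AB}_*=\max_{\norm{\W}_\op\le1}\tr{\W^\top\AB}.$$
The upper bound follows from von Neumann's inequality, since every $\sigma_i(\W)\le1$. The maximum is attained at $\W=\msign{\AB}$.

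Now fix $\W$ with $\norm{\W}_\op\le1$. Using the cyclic property of the trace,
$$\tr{\W^\top\XB\YB}=\tr{(\XB^\top\W)^\top\YB}\le\norm{\XB^\top\W}_\op\norm{\YB}_*\le\norm{\XB}_\op\norm{\YB}_*.$$
The first inequality is Hölder duality between the operator and nuclear norms, which is itself von Neumann's inequality. The second is submultiplicativity of the operator norm together with $\norm{\W}_\op\le1$.

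The symmetric bound comes from moving $\YB$ to the other side instead:
$$\tr{\W^\top\XB\YB}=\tr{(\W\YB^\top)^\top\XB}\le\norm{\W\YB^\top}_\op\norm{\XB}_*\le\norm{\YB}_\op\norm{\XB}_*.$$
Taking the maximum over $\W$ in each display yields $\norm{\XB\YB}_*\le\min\cbrac{\norm{\XB}_\op\norm{\YB}_*,\norm{\XB}_*\norm{\YB}_\op}$.

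The only nontrivial ingredient is von Neumann's trace inequality, which I take as standard (see \citet{bhatia1996matrix,horn2012matrix}). Everything else is bookkeeping with the SVD.
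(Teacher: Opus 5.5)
Your proof is correct: Part~1 follows from the spectral decomposition and SVD exactly as you wrote, and in Part~2 both trace identities and the duality steps are dimensionally consistent. The paper gives no proof of this lemma, calling it a standard textbook fact (citing Bhatia and Horn--Johnson). Your argument, the spectral decomposition/SVD for the trace identities plus operator--nuclear norm duality via von Neumann's trace inequality for the product bound, is the standard proof that citation points to.
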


\begin{lem}[Properties of the $\msign{\cdot}$ operator]\label{lem:msign-property}
For any $\XB\in\R^{m\times n},\Lb\in\SBB^n,a>0$
    \begin{enumerate}
        \item $\inner{\XB}{\msign{\XB}}=\norm{\XB}_*$.
        \item $\msign{a\XB}=\msign{\XB}$.
        \item $\norm{\msign{\XB}}_{\Lb}^2=\tr{\brac{\msign{\XB}}^\top\Lb\cdot\msign{\XB}}\le\tr{\Lb}$.
    \end{enumerate}
\end{lem}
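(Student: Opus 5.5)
We would prove all three items directly from a compact singular value decomposition. Fix $\XB\in\R^{m\times n}$ with $r=\rank{\XB}$ and write $\XB=\U\bSigma\V^\top$, where $\U\in\R^{m\times r}$ and $\V\in\R^{n\times r}$ have orthonormal columns ($\U^\top\U=\V^\top\V=\I_r$) and $\bSigma=\diag{\sigma_1,\dots,\sigma_r}$ with $\sigma_i>0$. We then use $\msign{\XB}=\U\V^\top$. We adopt the compact SVD convention deliberately. It makes $\msign{\cdot}$ well defined, since $\U\V^\top$ does not depend on the choice of singular bases within each singular subspace, and it treats rank-deficient $\XB$ uniformly. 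For $\XB=\0$ we set $\msign{\XB}=\0$, and every claim is then trivial.

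For item 1, we use cyclicity of the trace:
\begin{align*}
\inner{\XB}{\msign{\XB}}=\tr{\V\bSigma\U^\top\U\V^\top}=\tr{\bSigma\V^\top\V}=\tr{\bSigma}=\sum_{i=1}^r\sigma_i=\norm{\XB}_*.
\end{align*}
For item 2, note that for $a>0$ the factorization $a\XB=\U(a\bSigma)\V^\top$ is itself a valid compact SVD of $a\XB$, because $a\bSigma$ is still diagonal with positive entries. Hence $\msign{a\XB}=\U\V^\top=\msign{\XB}$. The positivity of $a$ is exactly what is needed: a negative scalar would flip the sign.

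For item 3, the product $\msign{\XB}^\top\Lb\,\msign{\XB}$ only makes sense if $\Lb$ acts on the row space $\R^m$, so we read $\Lb$ as an element of $\SBB^m$. This is how the lemma is used with $\mabs{\Lb_0}\in\SBB^m$; the printed $\SBB^n$ is a harmless typo. Cyclicity again gives
\begin{align*}
\tr{\V\U^\top\Lb\U\V^\top}=\tr{\U^\top\Lb\U\,\V^\top\V}=\tr{\U^\top\Lb\U}=\tr{\Lb^{1/2}\U\U^\top\Lb^{1/2}}.
\end{align*}
Since $\U\U^\top$ is an orthogonal projection, we have $\U\U^\top\preceq\I_m$. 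Conjugating by $\Lb^{1/2}$ preserves the Loewner order, so $\Lb^{1/2}\U\U^\top\Lb^{1/2}\preceq\Lb$. Taking traces yields $\norm{\msign{\XB}}_{\Lb}^2\le\tr{\Lb}$.

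The only step that needs care is the bookkeeping in item 3: identifying the right ambient dimension for $\Lb$ and justifying $\U\U^\top\preceq\I$. Everything else is routine trace algebra, so we expect no real obstacle.
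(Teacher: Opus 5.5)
The paper gives no proof of this lemma; it omits it as a standard textbook fact. Your compact-SVD argument is the standard one and is correct in all three items, including the use of the compact SVD to make $\msign{\cdot}$ well defined for rank-deficient $\XB$. You are also right that item 3 only typechecks for $\Lb\in\SBB^m$, which matches its use with $\Lb(\XB)=\mabs{\Lb_0}+\mabs{\nabla f(\XB)\Lb_1^\top}\in\SBB^m$. The one slip is wording: the columns of $\msign{\XB}$ live in $\R^m$, so that is the column space, not the row space.
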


\begin{lem}[Matrix Extension of~\cref{lem:sign-difference}]\label{lem:polar-difference}
    For any $\XB,\YB\in\R^{m\times n}$, it holds that
    \begin{align*}
        \inner{\XB}{\msign{\XB}-\msign{\YB}}\le2\norm{\XB-\YB}_*.
    \end{align*}
\end{lem}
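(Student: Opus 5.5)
The plan mirrors the vector proof of \cref{lem:sign-difference}: first lower-bound the second term $\inner{\XB}{\msign{\YB}}$ via duality, and then convert the resulting gap into a nuclear-norm difference. The tools needed are \cref{lem:msign-property} and the trace duality of \cref{lem:trace-property}.

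First, expand the left-hand side using item 1 of \cref{lem:msign-property}:
\begin{align*}
\inner{\XB}{\msign{\XB}-\msign{\YB}}=\norm{\XB}_*-\inner{\XB}{\msign{\YB}}.
\end{align*}
Next, write $\XB=\YB+(\XB-\YB)$, which splits the cross term as
\begin{align*}
\inner{\XB}{\msign{\YB}}=\inner{\YB}{\msign{\YB}}+\inner{\XB-\YB}{\msign{\YB}}=\norm{\YB}_*+\inner{\XB-\YB}{\msign{\YB}}.
\end{align*}

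The remaining step bounds $\abs{\inner{\ZB}{\msign{\YB}}}\le\norm{\ZB}_*\norm{\msign{\YB}}_\op\le\norm{\ZB}_*$ for $\ZB=\XB-\YB$. This holds because $\msign{\YB}=\U\V^\top$ has singular values in $\{0,1\}$, so $\norm{\msign{\YB}}_\op\le1$. The pairing bound itself is the nuclear/operator duality $\abs{\tr{\ZB^\top\MB}}\le\norm{\ZB^\top\MB}_*\le\norm{\ZB}_*\norm{\MB}_\op$, which follows from item 2 of \cref{lem:trace-property} together with $\abs{\tr{\AB}}\le\norm{\AB}_*$.

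Combining the pieces gives
\begin{align*}
\inner{\XB}{\msign{\XB}-\msign{\YB}}\le\norm{\XB}_*-\norm{\YB}_*+\norm{\XB-\YB}_*.
\end{align*}
The triangle inequality then gives $\norm{\XB}_*-\norm{\YB}_*\le\norm{\XB-\YB}_*$, and the total is at most $2\norm{\XB-\YB}_*$.

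No step is a real obstacle. The only point needing care is the case of rank-deficient $\YB$ (including $\YB=\0$), where $\msign{\YB}$ is a partial isometry rather than an orthogonal matrix. This is harmless: only $\norm{\msign{\YB}}_\op\le1$ and $\inner{\YB}{\msign{\YB}}=\norm{\YB}_*$ are used, and both hold with the compact SVD convention.
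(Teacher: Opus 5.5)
Your proposal is correct and follows essentially the same argument as the paper: expand via $\inner{\XB}{\msign{\XB}}=\norm{\XB}_*$, split the cross term as $\norm{\YB}_*+\inner{\XB-\YB}{\msign{\YB}}$, bound it by nuclear/operator duality using $\norm{\msign{\YB}}_\op\le1$, and finish with the triangle inequality. Your remark on rank-deficient $\YB$ is a harmless extra check that the paper leaves implicit.
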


\begin{proof}
    By~\cref{lem:msign-property}, we compute
    \begin{align*}
        \inner{\XB}{\msign{\XB} - \msign{\YB}} 
        &= \inner{\XB}{\msign{\XB}}  - \inner{\XB}{\msign{\YB}}  \\ &= \norm{\XB}_* - \norm{\YB}_* + \inner{\YB-\XB}{\msign{\YB}}   \\
        &\le \norm{\XB - \YB}_* + \norm{\YB-\XB}_*\norm{\msign{\YB}}_\op\le2\norm{\XB-\YB}_*,
    \end{align*}
    where the first inequality uses the triangle inequality as well as the duality between nuclear norm and operator norm.
\end{proof}

\begin{lem}
    [Matrix Cauchy-Schwarz Inequality, Lemma~8 in~\citet{an2025asgo}]\label{lem:matrix-cauchy-schwarz}
    For any $\XB\in\R^{m\times n}$ and $\Lb\in\SBB^m,\Lb\succ\0$, it holds that
    \begin{align*}
        \norm{\XB}_*\le\sqrt{\norm{\Lb}_*\tr{\XB^\top\Lb^{-1}\XB}}=\sqrt{\norm{\Lb}_*\norm{\XB}^2_{\Lb^{-1}}}.
    \end{align*}
\end{lem}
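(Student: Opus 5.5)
The plan is to reduce the claim to the ordinary Cauchy--Schwarz inequality for the Frobenius inner product, by splitting $\XB$ as $\Lb^{1/2}\cdot\Lb^{-1/2}\XB$ and using the matrix sign of $\XB$ as the dual certificate for the nuclear norm. Since $\Lb\succ\0$, both $\Lb^{1/2}$ and $\Lb^{-1/2}$ are well defined and symmetric, and $\XB=\Lb^{1/2}\Lb^{-1/2}\XB$.

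First, by~\cref{lem:msign-property} (item 1), set $\OB:=\msign{\XB}\in\R^{m\times n}$. Then
\begin{align*}
\norm{\XB}_*=\inner{\OB}{\XB}=\tr{\OB^\top\Lb^{1/2}\Lb^{-1/2}\XB}=\inner{\Lb^{1/2}\OB}{\Lb^{-1/2}\XB}.
\end{align*}
Applying Cauchy--Schwarz for the Frobenius inner product gives
\begin{align*}
\norm{\XB}_*\le\norm{\Lb^{1/2}\OB}_\Fn\norm{\Lb^{-1/2}\XB}_\Fn.
\end{align*}

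Next, I identify the two factors. The second one is
\begin{align*}
\norm{\Lb^{-1/2}\XB}_\Fn^2=\tr{\XB^\top\Lb^{-1}\XB}=\norm{\XB}_{\Lb^{-1}}^2.
\end{align*}
For the first one, write
\begin{align*}
\norm{\Lb^{1/2}\OB}_\Fn^2=\tr{\OB^\top\Lb\OB}=\tr{\Lb\,\OB\OB^\top}.
\end{align*}
If $\XB=\U\bSigma\V^\top$ is a compact SVD, then $\OB\OB^\top=\U\U^\top$ is an orthogonal projection, so $\OB\OB^\top\preceq\I$. Since $\Lb\succeq\0$, this yields
\begin{align*}
\tr{\Lb\,\OB\OB^\top}=\tr{\Lb^{1/2}\OB\OB^\top\Lb^{1/2}}\le\tr{\Lb}=\norm{\Lb}_*,
\end{align*}
which is item 3 of~\cref{lem:msign-property} with the weight on the row side, together with item 1 of~\cref{lem:trace-property}. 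Combining the two factors gives $\norm{\XB}_*\le\sqrt{\norm{\Lb}_*\norm{\XB}^2_{\Lb^{-1}}}$, which is the claim.

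The only delicate point is a bookkeeping one: $\Lb$ acts on the $m$-dimensional row space, so the bound on $\tr{\OB^\top\Lb\OB}$ must use $\OB\OB^\top\preceq\I_m$ rather than $\OB^\top\OB\preceq\I_n$. This is why I verify that step directly instead of quoting~\cref{lem:msign-property}, whose indices are stated loosely. No other obstacle is expected. If $\XB=\0$, the inequality is trivial, and the argument above also covers this case with $\OB=\0$.
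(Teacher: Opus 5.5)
Your proof is correct and complete. You write $\XB=\Lb^{1/2}\Lb^{-1/2}\XB$, use $\OB=\msign{\XB}$ as the dual certificate so that $\norm{\XB}_*=\inner{\Lb^{1/2}\OB}{\Lb^{-1/2}\XB}$, apply Frobenius Cauchy--Schwarz, and then bound $\tr{\Lb\,\OB\OB^\top}\le\tr{\Lb}$ via $\OB\OB^\top=\U\U^\top\preceq\IB$. This yields exactly the stated inequality.

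The paper gives no proof of this lemma and simply imports it from \citet{an2025asgo}, so there is no in-paper argument to compare against; your proposal supplies a self-contained one. Your bookkeeping remark is also right: item~3 of \cref{lem:msign-property} only makes sense with $\Lb\in\SBB^m$, not $\SBB^n$.

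An equally short variant uses $\mabs{\XB}=\XB\OB^\top$ in place of $\OB$. Writing $\norm{\XB}_*=\tr{\mabs{\XB}}=\inner{\Lb^{1/2}}{\Lb^{-1/2}\mabs{\XB}}$ and using $\mabs{\XB}^2=\XB\XB^\top$ gives the same bound. That route replaces your projection inequality with an exact identity.
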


\subsection{Technical Lemmas}

\begin{lem}[Descent Lemma]\label{lem:descent-lemma}
    Under~\cref{ass:generalized-smooth-matrix}, if $\norm{\XB^\prime-\XB}_\op\le 1/\norm{\Lb_1}_\op$, then:
    \begin{align*}
        f(\XB^\prime)\le f(\XB)+\inner{\nabla f(\XB)}{\XB^\prime-\XB}+\frac{1}{2}\norm{\XB^\prime-\XB}_{\Lb(\XB)}^2
    \end{align*}
\end{lem}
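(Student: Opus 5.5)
I will prove the descent lemma by integrating the gradient along the segment from $\XB$ to $\XB^\prime$ and controlling the integrand with \cref{ass:generalized-smooth-matrix}. Set $\DB:=\XB^\prime-\XB$ and $\XB_\tau:=\XB+\tau\DB$ for $\tau\in[0,1]$. Since $\norm{\XB_\tau-\XB}_\op=\tau\norm{\DB}_\op\le1/\norm{\Lb_1}_\op$, every point of the segment is admissible for \cref{ass:generalized-smooth-matrix} with base point $\XB$. The fundamental theorem of calculus gives
\begin{align*}
f(\XB^\prime)-f(\XB)-\inner{\nabla f(\XB)}{\DB}=\int_0^1\inner{\nabla f(\XB_\tau)-\nabla f(\XB)}{\DB}\,d\tau .
\end{align*}

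Next I bound the integrand by a weighted Cauchy--Schwarz inequality. Write $\Lb:=\Lb(\XB)=\mabs{\Lb_0}+\mabs{\nabla f(\XB)\Lb_1^\top}$. Because $\Lb_0$ has full row rank, $\mabs{\Lb_0}\succ\0$, so $\Lb\succ\0$ is invertible. Factor
\begin{align*}
\inner{\YB}{\DB}=\tr{\YB^\top\Lb^{-1/2}\Lb^{1/2}\DB}=\inner{\Lb^{-1/2}\YB}{\Lb^{1/2}\DB},
\end{align*}
and apply Cauchy--Schwarz in the Frobenius inner product. This yields
\begin{align*}
\inner{\YB}{\DB}\le\norm{\YB}_{\Lb^{-1}}\norm{\DB}_{\Lb}.
\end{align*}
Taking $\YB=\nabla f(\XB_\tau)-\nabla f(\XB)$, \cref{ass:generalized-smooth-matrix} at the pair $(\XB,\XB_\tau)$ gives
\begin{align*}
\norm{\YB}_{\Lb^{-1}}\le\norm{\XB_\tau-\XB}_{\Lb}=\tau\norm{\DB}_{\Lb},
\end{align*}
because the weighted norm is homogeneous. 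The integrand is therefore at most $\tau\norm{\DB}_{\Lb}^2$, and integrating over $[0,1]$ gives $\frac12\norm{\DB}_{\Lb(\XB)}^2$, which is exactly the claim.

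I expect only mild technical obstacles. The first is integrability of $\tau\mapsto\inner{\nabla f(\XB_\tau)}{\DB}$. The assumption makes $\nabla f$ Lipschitz along the segment in the $\Lb$-weighted norms, so $\phi(\tau)=f(\XB_\tau)$ is $C^1$ with a continuous derivative, and the fundamental theorem applies. The second is the case $\norm{\Lb_1}_\op=0$. There the constraint $1/\norm{\Lb_1}_\op=+\infty$ is vacuous and the same argument goes through unchanged.
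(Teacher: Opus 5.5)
Your proof is correct and follows essentially the same route as the paper: parametrize the segment $\XB+\tau(\XB^\prime-\XB)$, write the remainder as $\int_0^1\inner{\nabla f(\XB_\tau)-\nabla f(\XB)}{\XB^\prime-\XB}\,d\tau$, and bound the integrand via the primal--dual (weighted Cauchy--Schwarz) inequality $\inner{\YB}{\DB}\le\norm{\YB}_{\Lb^{-1}}\norm{\DB}_{\Lb}$. You then apply \cref{ass:generalized-smooth-matrix} at base point $\XB$ and integrate $\int_0^1\tau\,d\tau=\frac12$. Your factorization through $\Lb^{1/2}$ and your remarks on the invertibility of $\Lb(\XB)$ and the regularity of $\tau\mapsto f(\XB_\tau)$ fill in details the paper leaves implicit.
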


\begin{proof}
    Define $\XB(s)=s\XB^\prime+(1-s)\XB,\forall s\in[0,1]$, which implies $\norm{\XB(s)-\XB}_\op=s\norm{\XB^\prime-\XB}_\op\le1/\norm{\Lb_1}_\op$. By Taylor expansion, we have
    \begin{align*}
        f(\XB^\prime)=& f(\XB)+\inner{\nabla f(\XB)}{\XB^\prime-\XB}+\int_0^1\inner{\nabla f(\XB(s))-\nabla f(\XB)}{\XB^\prime-\XB}\diff{s}\\\le&f(\XB)+\inner{\nabla f(\XB)}{\XB^\prime-\XB}+\int_0^1\norm{\nabla f(\XB(s))-\nabla f(\XB)}_{\brac{\Lb(\XB)}^{-1}}\norm{\XB^\prime-\XB}_{\Lb(\XB)}\diff{s}\\\overset{\textnormal{\cref{ass:generalized-smooth-matrix}}}{\le}&f(\XB)+\inner{\nabla f(\XB)}{\XB^\prime-\XB}+\int_0^1\norm{\XB(s)-\XB}_{\Lb(\XB)}\norm{\XB^\prime-\XB}_{\Lb(\XB)}\diff{s}\\=&f(\XB)+\inner{\nabla f(\XB)}{\XB^\prime-\XB}+\int_0^1s\norm{\XB^\prime-\XB}_{\Lb(\XB)}^2\diff{s}\\=&f(\XB)+\inner{\nabla f(\XB)}{\XB^\prime-\XB}+\frac{1}{2}\norm{\XB^\prime-\XB}_{\Lb(\XB)}^2,
    \end{align*}
    where the first inequality is due to Cauchy-Schwarz inequality, and the fact that $\norm{\cdot}_{\Lb(\XB)}$ and $\norm{\cdot}_{\brac{\Lb(\XB)}^{-1}}$ are primal-dual norm pairs~\citep{nesterov2018lectures,an2025asgo}. 
\end{proof}

\begin{lem}[Matrix Version of~\cref{lem:batch-noise}]\label{lem:batch-noise-matrix}
    Under~\cref{ass:unbiased-matrix,ass:heavy-tailed-noise-matrix}, the following holds for any $t\in[T]$:
    \begin{align*}
        \E\sqbrac{\left.\norm{\NB_t}_{\mabs{\V_0}^{-1}}^p\right|\F_{t-1}}
        \le
        2B^{1-p}
        \brac{
            \norm{\V_0}_*^{p/2}
            +
            \frac{\norm{\V_1}_{\op}^p\norm{\bnabla_t}_*^p}{\norm{\V_0}_*^{p/2}}
        },
    \end{align*}
    where $\NB_t$ is defined in~\eqref{eq:notations-muon}.
\end{lem}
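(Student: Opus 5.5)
Write $\NB_t^b:=\Gb_t^b-\nabla f(\XB_t)$, so $\NB_t=\frac{1}{B}\sum_{b=1}^B\NB_t^b$, and $\bnabla_t:=\nabla f(\XB_t)$. I would follow the proof of~\cref{lem:batch-noise} line by line, replacing the scalar coordinate $\n_{t,i}$ with the matrix $\NB_t$ measured in the weighted norm $\norm{\cdot}_{\mabs{\V_0}^{-1}}$. The key observation is that $\norm{\XB}_{\mabs{\V_0}^{-1}}^2=\tr{\XB^\top\mabs{\V_0}^{-1}\XB}=\norm{\mabs{\V_0}^{-1/2}\XB}_\Fn^2$. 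Since $\V_0$ has full row rank, $\mabs{\V_0}\succ\0$, so this is well defined and the weighted norm is an inner-product (Hilbert) norm, i.e., a Euclidean norm after the linear change of variables $\XB\mapsto\mabs{\V_0}^{-1/2}\XB$.

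First, conditionally on $\F_{t-1}$, the matrices $\mabs{\V_0}^{-1/2}\NB_t^b$, $b\in[B]$, are mutually independent and mean zero by~\cref{ass:unbiased-matrix}. I would then apply the von Bahr--Esseen inequality (the vector form in Lemma~10 of~\citet{pmlr-v258-hubler25a}, valid for $p\in(1,2]$ in Euclidean spaces) to their vectorizations. This gives
\begin{align*}
\E\sqbrac{\left.\norm{\NB_t}_{\mabs{\V_0}^{-1}}^p\right|\F_{t-1}}=B^{-p}\E\sqbrac{\left.\norm{\textstyle\sum_{b}\mabs{\V_0}^{-1/2}\NB_t^b}_\Fn^p\right|\F_{t-1}}\le 2B^{-p}\sum_{b=1}^B\E\sqbrac{\left.\norm{\NB_t^b}_{\mabs{\V_0}^{-1}}^p\right|\F_{t-1}}.
\end{align*}

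Second, I would bound each summand using~\cref{ass:heavy-tailed-noise-matrix-variant}, which is implied by~\cref{ass:heavy-tailed-noise-matrix} through the duality bound $|\inner{\V_1}{\bnabla_t}|\le\norm{\V_1}_\op\norm{\bnabla_t}_*$ and division by $\norm{\V_0}_*^{p/2}$. Each term is then at most $\norm{\V_0}_*^{p/2}+\norm{\V_1}_\op^p\norm{\bnabla_t}_*^p/\norm{\V_0}_*^{p/2}$. Summing the $B$ identical bounds gives $2B^{1-p}(\cdots)$, which is the claim.

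The only delicate point is justifying von Bahr--Esseen with constant $2$ in this weighted norm. This is handled by the isometry to the Frobenius norm, which reduces the matrix case to the Euclidean vector case on $\R^{mn}$. One also needs the conditional independence of the $\NB_t^b$ given $\F_{t-1}$; this is supplied by the mutual independence of the mini-batch in~\cref{ass:unbiased-matrix}.
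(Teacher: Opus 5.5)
Your proposal is correct and follows essentially the same route as the paper. You rewrite the weighted norm as a Frobenius norm via $\mabs{\V_0}^{-1/2}$, vectorize, and apply the von Bahr--Esseen inequality conditionally on $\F_{t-1}$. You then bound each summand by dividing \cref{ass:heavy-tailed-noise-matrix} by $\norm{\V_0}_*^{p/2}$ and using nuclear/operator norm duality. The only cosmetic difference is that the paper phrases conditional independence as a martingale difference structure along a within-batch filtration, whereas you invoke independence directly; either suffices.
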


\begin{proof}
    Denote
    \[
        \NB_t^b:=\Gb_t^b-\nabla f(\XB_t),\qquad b\in[B].
    \]
    Then
    \[
        \NB_t=\frac{1}{B}\sum_{b=1}^B\NB_t^b .
    \]
    By~\cref{ass:unbiased-matrix}, conditionally on $\F_{t-1}$, the mini-batch samples
    $\Gb_t^1,\ldots,\Gb_t^B$ are independent and unbiased estimators of $\nabla f(\XB_t)$.
    Therefore, conditionally on $\F_{t-1}$, the centered noises
    $\NB_t^1,\ldots,\NB_t^B$ are independent and satisfy
    \[
        \E\sqbrac{\left.\NB_t^b\right|\F_{t-1}}=\0,\qquad b\in[B].
    \]
    Equivalently, if we define the within-batch filtration
    \[
        \F_{t,b}:=\sigma\brac{\F_{t-1},\NB_t^1,\ldots,\NB_t^b},
        \qquad b=0,1,\ldots,B,
    \]
    with $\F_{t,0}:=\F_{t-1}$, then
    \[
        \E\sqbrac{\left.\NB_t^b\right|\F_{t,b-1}}=\0,\qquad b\in[B],
    \]
    where we used the conditional independence and conditional unbiasedness from
    \cref{ass:unbiased-matrix}. Thus $\cbrac{\NB_t^b}_{b=1}^B$ forms a martingale difference
    sequence with respect to the within-batch filtration $\cbrac{\F_{t,b}}_{b=0}^B$.

    Since $\V_0$ has full row rank by~\cref{ass:heavy-tailed-noise-matrix},
    $\mabs{\V_0}$ is positive definite, and $\mabs{\V_0}^{-1/2}$ is well-defined. Define
    \[
        \ZB_t^b:=\mabs{\V_0}^{-1/2}\NB_t^b,\qquad b\in[B].
    \]
    Then
    \[
        \norm{\ZB_t^b}_\Fn
        =
        \norm{\NB_t^b}_{\mabs{\V_0}^{-1}},
        \qquad
        \norm{\frac{1}{B}\sum_{b=1}^B\ZB_t^b}_\Fn
        =
        \norm{\NB_t}_{\mabs{\V_0}^{-1}}.
    \]
    After vectorization, let
    \[
        \z_t^b:=\operatorname{vec}(\ZB_t^b)\in\R^{mn}.
    \]
    Then $\cbrac{\z_t^b}_{b=1}^B$ is also a martingale difference sequence with respect to
    $\cbrac{\F_{t,b}}_{b=0}^B$, and
    \[
        \norm{\z_t^b}_2=\norm{\NB_t^b}_{\mabs{\V_0}^{-1}},
        \qquad
        \norm{\frac{1}{B}\sum_{b=1}^B\z_t^b}_2
        =
        \norm{\NB_t}_{\mabs{\V_0}^{-1}}.
    \]
    Applying the vectorized von Bahr--Esseen inequality~\citep[Lemma~10]{pmlr-v258-hubler25a}
    conditionally on $\F_{t-1}$ gives
    \begin{align*}
        \E\sqbrac{
            \left.
            \norm{\NB_t}_{\mabs{\V_0}^{-1}}^p
            \right|\F_{t-1}
        }
        &=
        \E\sqbrac{
            \left.
            \norm{
                \frac{1}{B}\sum_{b=1}^B\z_t^b
            }_2^p
            \right|\F_{t-1}
        } \\
        &\le
        \frac{2}{B^p}
        \sum_{b=1}^B
        \E\sqbrac{
            \left.
            \norm{\z_t^b}_2^p
            \right|\F_{t-1}
        } \\
        &=
        \frac{2}{B^p}
        \sum_{b=1}^B
        \E\sqbrac{
            \left.
            \norm{\NB_t^b}_{\mabs{\V_0}^{-1}}^p
            \right|\F_{t-1}
        } .
    \end{align*}
    On the other hand, by~\cref{ass:heavy-tailed-noise-matrix},
    \begin{align*}
        \E\sqbrac{
            \left.
            \norm{\NB_t^b}_{\mabs{\V_0}^{-1}}^p
            \right|\F_{t-1}
        }
        &\le
        \norm{\V_0}_*^{p/2}
        +
        \frac{
            \abs{\inner{\V_1}{\nabla f(\XB_t)}}^p
        }{
            \norm{\V_0}_*^{p/2}
        } \\
        &\le
        \norm{\V_0}_*^{p/2}
        +
        \frac{
            \norm{\V_1}_{\op}^p\norm{\nabla f(\XB_t)}_*^p
        }{
            \norm{\V_0}_*^{p/2}
        } \\
        &=
        \norm{\V_0}_*^{p/2}
        +
        \frac{
            \norm{\V_1}_{\op}^p\norm{\bnabla_t}_*^p
        }{
            \norm{\V_0}_*^{p/2}
        },
    \end{align*}
    where the second inequality uses the duality between the operator norm and the nuclear norm.
    Combining the preceding two displays yields
    \begin{align*}
        \E\sqbrac{
            \left.
            \norm{\NB_t}_{\mabs{\V_0}^{-1}}^p
            \right|\F_{t-1}
        }
        &\le
        \frac{2}{B^p}
        \sum_{b=1}^B
        \brac{
            \norm{\V_0}_*^{p/2}
            +
            \frac{
                \norm{\V_1}_{\op}^p\norm{\bnabla_t}_*^p
            }{
                \norm{\V_0}_*^{p/2}
            }
        } \\
        &=
        2B^{1-p}
        \brac{
            \norm{\V_0}_*^{p/2}
            +
            \frac{
                \norm{\V_1}_{\op}^p\norm{\bnabla_t}_*^p
            }{
                \norm{\V_0}_*^{p/2}
            }
        },
    \end{align*}
    which completes the proof.
\end{proof}

\begin{lem}[Restatement of \cref{lem:matrix-concentration}]\label{lem:asgo-regret}
    Let $\cbrac{\Gb_t}_{t=1}^T\subset\R^{m\times n}$ be a matrix martingale difference sequence, i.e.,
    \[
        \E\sqbrac{\Gb_t\mid\F_{t-1}}=\0,
        \qquad
        \F_t=\sigma\brac{\Gb_1,\ldots,\Gb_t}.
    \]
    Then
    \begin{align*}
        \E\sqbrac{\norm{\sum_{t=1}^T\Gb_t}_*}
        \le
        2\sqrt{2}\,
        \E\sqbrac{\norm{\brac{\sum_{t=1}^T\Gb_t\Gb_t^\top}^{1/2}}_*}.
    \end{align*}
\end{lem}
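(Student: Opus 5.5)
I will mirror the proof of \cref{lem:MDS-L1-concentration}. The first step is a deterministic regret bound for one-sided Shampoo. For an arbitrary sequence $\cbrac{\Gb_t}_{t=1}^T\subset\R^{m\times n}$, I will construct predictable iterates $\WB_t$, each depending only on $\Gb_1,\ldots,\Gb_{t-1}$ and satisfying $\norm{\WB_t}_\op\le1$, such that
\begin{align*}
    \sum_{t=1}^T\inner{\Gb_t}{\WB_t}\le2\sqrt{2}\,\norm{\brac{\sum_{t=1}^T\Gb_t\Gb_t^\top}^{1/2}}_*-\norm{\sum_{t=1}^T\Gb_t}_*.
\end{align*}
Given this bound, the rest is immediate. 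We rearrange, take expectations, and use the tower rule: $\E\sqbrac{\inner{\Gb_t}{\WB_t}}=\E\sqbrac{\inner{\E\sqbrac{\Gb_t|\F_{t-1}}}{\WB_t}}=0$.

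For the regret bound, I will run one-sided Shampoo on the operator-norm ball $\mathcal{K}=\cbrac{\WB:\norm{\WB}_\op\le1}$. Set $\HB_t=\brac{\epsilon\I+\sum_{s\le t}\Gb_s\Gb_s^\top}^{1/2}$, and let $\epsilon\downarrow0$ at the end. The update is
\begin{align*}
    \WB_{t+1}=\argmin_{\WB\in\mathcal{K}}\norm{\WB-\brac{\WB_t-\gamma\HB_t^{-1}\Gb_t}}_{\HB_t},
\end{align*}
with $\WB_1=\0$. Here the projection is taken in the norm $\norm{\cdot}_{\HB_t}$, the left-preconditioned weighted norm defined in the paper.

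The standard preconditioned online gradient descent argument then gives, for any $\UB\in\mathcal{K}$,
\begin{align*}
    \sum_t\inner{\Gb_t}{\WB_t-\UB}\le\sum_t\frac{\norm{\UB-\WB_t}^2_{\HB_t}-\norm{\UB-\WB_{t+1}}^2_{\HB_t}}{2\gamma}+\frac{\gamma}{2}\sum_t\norm{\Gb_t}^2_{\HB_t^{-1}}.
\end{align*}
I will bound the two sums as follows.
\begin{itemize}
    \item \emph{Telescoping term.} Since $\HB_t\succeq\HB_{t-1}$, telescoping leaves increments $\tr{(\UB-\WB_t)^\top(\HB_t-\HB_{t-1})(\UB-\WB_t)}$. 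Each is at most $\norm{\UB-\WB_t}_\op^2\tr{\HB_t-\HB_{t-1}}\le4\tr{\HB_t-\HB_{t-1}}$. Summing gives at most $4\tr{\HB_T}/(2\gamma)$.
    \item \emph{Gradient term.} This is the main obstacle, since it needs a matrix analogue of $\sum_t a_t/\sqrt{\sum_{s\le t}a_s}\le2\sqrt{\sum_t a_t}$. I will use the inequality $\tr{\Gb_t^\top\HB_t^{-1}\Gb_t}=\tr{\HB_t^{-1}(\HB_t^2-\HB_{t-1}^2)}\le2\tr{\HB_t-\HB_{t-1}}$. It follows from operator concavity (or the trace inequality) $\tr{A^{-1}(A^2-B^2)}\le2\tr{A-B}$ for $A\succeq B\succeq\0$, which in turn follows from $\tr{A}-\tr{B^2A^{-1}}\le\ldots$ via $\tr{(A-B)A^{-1}(A-B)}\ge0$. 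I will verify this carefully; it is the one-sided Shampoo lemma of \citet{an2025asgo}, and crucially it uses only a single-sided preconditioner, so no dimension factor appears.
\end{itemize}

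Summing, the total is at most $\brac{\tfrac{2}{\gamma}+\gamma}\tr{\HB_T}$. Choosing $\gamma=\sqrt{2}$ gives regret $2\sqrt{2}\,\tr{\HB_T}$, and $\tr{\HB_T}\to\norm{\brac{\sum_t\Gb_t\Gb_t^\top}^{1/2}}_*$ as $\epsilon\to0$ by \cref{lem:trace-property}. Finally, taking the minimum over $\UB\in\mathcal{K}$ gives $\min_{\UB}\inner{\sum_t\Gb_t}{\UB}=-\norm{\sum_t\Gb_t}_*$ by nuclear/operator duality, attained at $\UB=-\msign{\sum_t\Gb_t}$. This yields the displayed regret inequality.

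Measurability of $\WB_t$ holds because the projection is a deterministic function of past data. Integrability of the right-hand side may be assumed, since otherwise the bound is trivial.
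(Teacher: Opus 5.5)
Your proposal is correct and follows the paper's proof essentially step for step. Both use a one-sided Shampoo (ASGO) regret bound on the operator-norm ball with an $\epsilon$-regularized preconditioner, control the telescoping term via $\norm{\mathbf{U}-\WB_t}_{\op}^2\le4$ and the gradient term via the matrix AdaGrad trace inequality, take $\gamma=\sqrt{2}$, use nuclear/operator duality for the comparator, and finish with the tower rule. The differences are cosmetic: you put $\epsilon\IB$ inside the square root (so $\mathbf{H}_t^2-\mathbf{H}_{t-1}^2=\Gb_t\Gb_t^\top$ holds exactly) and verify the trace inequality directly from $\tr{(A-B)A^{-1}(A-B)}\ge0$ instead of citing \citet{an2025asgo}. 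Since your iterates depend on $\epsilon$, the limit $\epsilon\downarrow0$ should be taken after expectations, as the paper does.
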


\begin{proof}
    This lemma is motivated by the regret analysis of the ASGO optimizer~\citep{an2025asgo,xie2025structured}. Consider the closed convex set
    \[
        \W:=\cbrac{\WB\in\R^{m\times n}:\norm{\WB}_\op\le1}.
    \]
    To avoid the possible singularity of the adaptive matrix, for any $\epsilon>0$, define
    \[
        \bLambda_t^{(\epsilon)}
        :=
        \brac{\sum_{s=1}^t\Gb_s\Gb_s^\top}^{1/2}
        +\epsilon\IB,
        \qquad
        \bLambda_0^{(\epsilon)}:=\epsilon\IB .
    \]
    Then $\bLambda_t^{(\epsilon)}\succ\0$ for every $t\in[T]$. Imagine there is an online algorithm given by
    \[
        \WB_1^{(\epsilon)}=\0,\qquad
        \WB_{t+1}^{(\epsilon)}
        =
        \Pi_{\W}^{\bLambda_t^{(\epsilon)}}
        \brac{
            \WB_t^{(\epsilon)}
            -\eta\brac{\bLambda_t^{(\epsilon)}}^{-1}\Gb_t
        },
        \qquad t\in[T],
    \]
    where the generalized projection operator~\citep{hazan2007logarithmic,duchi2011adaptive} is defined as
    \[
        \Pi_{\W}^{\bLambda}(\WB)
        :=
        \argmin_{\WB^\prime\in\W}
        \norm{\WB-\WB^\prime}_{\bLambda}^2 .
    \]
    Clearly, $\norm{\WB_t^{(\epsilon)}}_\op\le1$, and $\WB_t^{(\epsilon)}$ only depends on
    $\Gb_1,\ldots,\Gb_{t-1}$. For any $\WB_*\in\W$, the non-expansiveness of the generalized projection operator~\citep[Lemma~8]{hazan2007logarithmic} implies
    \begin{align*}
        \norm{\WB_{t+1}^{(\epsilon)}-\WB_*}_{\bLambda_t^{(\epsilon)}}^2
        \le&
        \norm{
            \WB_t^{(\epsilon)}
            -\eta\brac{\bLambda_t^{(\epsilon)}}^{-1}\Gb_t
            -\WB_*
        }_{\bLambda_t^{(\epsilon)}}^2 \\
        =&
        \norm{\WB_t^{(\epsilon)}-\WB_*}_{\bLambda_t^{(\epsilon)}}^2
        +\eta^2\norm{\Gb_t}_{\brac{\bLambda_t^{(\epsilon)}}^{-1}}^2
        -2\eta\inner{\Gb_t}{\WB_t^{(\epsilon)}-\WB_*}.
    \end{align*}
    Rearranging and summing from $1$ to $T$ gives
    \begin{align*}
        &\sum_{t=1}^T\inner{\Gb_t}{\WB_t^{(\epsilon)}}
        \\\le&
        \sum_{t=1}^T
        \brac{
            \frac{
                \norm{\WB_t^{(\epsilon)}-\WB_*}_{\bLambda_t^{(\epsilon)}}^2
                -
                \norm{\WB_{t+1}^{(\epsilon)}-\WB_*}_{\bLambda_t^{(\epsilon)}}^2
            }{2\eta}
            +
            \frac{\eta}{2}
            \norm{\Gb_t}_{\brac{\bLambda_t^{(\epsilon)}}^{-1}}^2
            +
            \inner{\Gb_t}{\WB_*}
        } \\
        \le&
        \sum_{t=1}^T
        \frac{
            \norm{\WB_t^{(\epsilon)}-\WB_*}_{\bLambda_t^{(\epsilon)}-\bLambda_{t-1}^{(\epsilon)}}^2
        }{2\eta}
        +
        \frac{
            \norm{\WB_1^{(\epsilon)}-\WB_*}_{\bLambda_0^{(\epsilon)}}^2
        }{2\eta} \\
        &+
        \frac{\eta}{2}
        \sum_{t=1}^T
        \norm{\Gb_t}_{\brac{\bLambda_t^{(\epsilon)}}^{-1}}^2
        +
        \sum_{t=1}^T\inner{\Gb_t}{\WB_*}.
    \end{align*}
    Compared with the unregularized proof, the only additional term is
    \[
        \frac{
            \norm{\WB_1^{(\epsilon)}-\WB_*}_{\bLambda_0^{(\epsilon)}}^2
        }{2\eta}
        =
        \frac{\epsilon\norm{\WB_*}_\Fn^2}{2\eta},
    \]
    where we used $\WB_1^{(\epsilon)}=\0$ and $\bLambda_0^{(\epsilon)}=\epsilon\IB$.

    We proceed by bounding the remaining two terms as in the original regret argument. First, since the matrix square-root map is operator monotone,
    \[
        \bLambda_t^{(\epsilon)}-\bLambda_{t-1}^{(\epsilon)}
        =
        \brac{\sum_{s=1}^t\Gb_s\Gb_s^\top}^{1/2}
        -
        \brac{\sum_{s=1}^{t-1}\Gb_s\Gb_s^\top}^{1/2}
        \succeq\0.
    \]
    Therefore,
    \begin{align*}
        \sum_{t=1}^T
        \frac{
            \norm{\WB_t^{(\epsilon)}-\WB_*}_{\bLambda_t^{(\epsilon)}-\bLambda_{t-1}^{(\epsilon)}}^2
        }{2\eta}
        =&
        \frac{1}{2\eta}
        \sum_{t=1}^T
        \inner{
            \brac{\WB_t^{(\epsilon)}-\WB_*}\brac{\WB_t^{(\epsilon)}-\WB_*}^\top
        }{
            \bLambda_t^{(\epsilon)}-\bLambda_{t-1}^{(\epsilon)}
        } \\
        \le&
        \frac{1}{2\eta}
        \sum_{t=1}^T
        \norm{\brac{\WB_t^{(\epsilon)}-\WB_*}\brac{\WB_t^{(\epsilon)}-\WB_*}^\top}_\op
        \norm{\bLambda_t^{(\epsilon)}-\bLambda_{t-1}^{(\epsilon)}}_* \\
        \le&
        \frac{1}{2\eta}
        \sum_{t=1}^T
        \norm{\WB_t^{(\epsilon)}-\WB_*}_\op^2
        \tr{\bLambda_t^{(\epsilon)}-\bLambda_{t-1}^{(\epsilon)}} \\
        \le&
        \frac{2}{\eta}
        \sum_{t=1}^T
        \tr{\bLambda_t^{(\epsilon)}-\bLambda_{t-1}^{(\epsilon)}} \\
        =&
        \frac{2}{\eta}
        \norm{\brac{\sum_{t=1}^T\Gb_t\Gb_t^\top}^{1/2}}_*,
    \end{align*}
    where we used $\norm{\WB_t^{(\epsilon)}-\WB_*}_\op\le\norm{\WB_t^{(\epsilon)}}_\op+\norm{\WB_*}_\op\le2$ and the fact that
    $\bLambda_t^{(\epsilon)}-\bLambda_{t-1}^{(\epsilon)}\succeq\0$.

    For the second term, the matrix AdaGrad trace inequality~\citep[Eq.~(9)]{an2025asgo} gives
    \begin{align*}
        \frac{\eta}{2}
        \sum_{t=1}^T
        \norm{\Gb_t}_{\brac{\bLambda_t^{(\epsilon)}}^{-1}}^2
        \le
        \eta
        \sum_{t=1}^T
        \tr{\bLambda_t^{(\epsilon)}-\bLambda_{t-1}^{(\epsilon)}}
        =
        \eta
        \norm{\brac{\sum_{t=1}^T\Gb_t\Gb_t^\top}^{1/2}}_*.
    \end{align*}
    Combining the preceding bounds, we obtain
    \begin{align*}
        \sum_{t=1}^T\inner{\Gb_t}{\WB_t^{(\epsilon)}}
        \le
        \brac{\frac{2}{\eta}+\eta}
        \norm{\brac{\sum_{t=1}^T\Gb_t\Gb_t^\top}^{1/2}}_*
        +
        \sum_{t=1}^T\inner{\Gb_t}{\WB_*}
        +
        \frac{\epsilon\norm{\WB_*}_\Fn^2}{2\eta}.
    \end{align*}
    Now choose
    \[
        \WB_*
        \in
        \argmin_{\WB\in\W}
        \inner{\sum_{t=1}^T\Gb_t}{\WB}.
    \]
    By nuclear/operator norm duality,
    \[
        \sum_{t=1}^T\inner{\Gb_t}{\WB_*}
        =
        -\norm{\sum_{t=1}^T\Gb_t}_*.
    \]
    Taking $\eta=\sqrt{2}$ yields
    \begin{align}\label{eq:asgo-regret-eps}
        \sum_{t=1}^T\inner{\Gb_t}{\WB_t^{(\epsilon)}}
        \le
        2\sqrt{2}
        \norm{\brac{\sum_{t=1}^T\Gb_t\Gb_t^\top}^{1/2}}_*
        -
        \norm{\sum_{t=1}^T\Gb_t}_*
        +
        \frac{\epsilon\norm{\WB_*}_\Fn^2}{2\sqrt{2}}.
    \end{align}

    If $\cbrac{\Gb_t}_{t=1}^T$ is a martingale difference sequence, then $\WB_t^{(\epsilon)}$ is $\F_{t-1}$-measurable, and thus
    \begin{align*}
        \E\sqbrac{\inner{\Gb_t}{\WB_t^{(\epsilon)}}}
        =
        \E\sqbrac{
            \inner{
                \E\sqbrac{\Gb_t|\F_{t-1}}
            }{
                \WB_t^{(\epsilon)}
            }
        }
        =
        0.
    \end{align*}
    Taking expectations in~\eqref{eq:asgo-regret-eps} and rearranging gives
    \begin{align*}
        \E\sqbrac{\norm{\sum_{t=1}^T\Gb_t}_*}
        \le
        2\sqrt{2}
        \E\sqbrac{\norm{\brac{\sum_{t=1}^T\Gb_t\Gb_t^\top}^{1/2}}_*}
        +
        \frac{\epsilon}{2\sqrt{2}}
        \E\sqbrac{\norm{\WB_*}_\Fn^2}.
    \end{align*}
    Since $\norm{\WB_*}_\op\le1$, we have
    \[
        \norm{\WB_*}_\Fn^2\le\min\cbrac{m,n}.
    \]
    Thus the last term vanishes by taking $\epsilon\downarrow0$, and we conclude that
    \begin{align*}
        \E\sqbrac{\norm{\sum_{t=1}^T\Gb_t}_*}
        \le
        2\sqrt{2}
        \E\sqbrac{\norm{\brac{\sum_{t=1}^T\Gb_t\Gb_t^\top}^{1/2}}_*}.
    \end{align*}
\end{proof}

\begin{lem}[Stability with weight decay, matrix version of~\cref{lem:stability-wd}]\label{lem:stability-wd-matrix}
    Running~\cref{alg:muonlight}
    \begin{align*}
        \lambda\le\frac{1}{\eta}\brac{1-\frac{1}{2^{1/T}}},\text{ and }\norm{\XB_1}_\op\le\frac{a}{\lambda},\quad\forall a\in(0,1),
    \end{align*}
    ensures that
    \begin{align*}
        \norm{\XB_t}_\op\le\frac{a+1}{2\lambda},\quad\norm{\XB_{t+1}-\XB_t}_\op\le\frac{a+3}{2}\eta,\quad\forall t\in[T].
    \end{align*}
\end{lem}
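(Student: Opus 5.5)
The plan is to mirror the proof of \cref{lem:stability-wd} verbatim, replacing $\norm{\cdot}_\infty$ by $\norm{\cdot}_\op$ and $\sign{\cdot}$ by $\msign{\cdot}$. The only structural fact needed is that the orthogonalized update has unit operator norm: with exact Newton--Schulz, $\OB_t=\msign{\BBT_t}=\U\V^\top$ has all nonzero singular values equal to $1$, so $\norm{\OB_t}_\op\le1$ (with equality unless $\BBT_t=\0$, in which case the bound is trivial).

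First, set $q:=1-\eta\lambda$. The condition $\lambda\le(1-2^{-1/T})/\eta$ gives $q\in[2^{-1/T},1]$; if $\lambda=0$ the claims are vacuous since $a/\lambda=\infty$, so assume $\lambda>0$. From the update $\XB_{t}=q\XB_{t-1}-\eta\OB_{t-1}$, the triangle inequality and $\norm{\OB_{t-1}}_\op\le1$ give
\begin{align*}
\norm{\XB_t}_\op\le q\norm{\XB_{t-1}}_\op+\eta.
\end{align*}
Unrolling this down to $\XB_1$ yields
\begin{align*}
\norm{\XB_t}_\op\le q^{t-1}\norm{\XB_1}_\op+\eta\sum_{i=0}^{t-2}q^i\le\frac{aq^{t-1}}{\lambda}+\frac{1-q^{t-1}}{\lambda}=\frac{1-(1-a)q^{t-1}}{\lambda},
\end{align*}
using $\eta/(1-q)=1/\lambda$. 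Since $q^{t-1}\ge2^{-(t-1)/T}\ge1/2$ for $t\in[T]$ and $1-a>0$, this is at most $(1+a)/(2\lambda)$.

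Second, for the increment, write $\XB_{t+1}-\XB_t=-\eta\lambda\XB_t-\eta\OB_t$, so that
\begin{align*}
\norm{\XB_{t+1}-\XB_t}_\op\le\eta\lambda\norm{\XB_t}_\op+\eta\le\frac{a+1}{2}\eta+\eta=\frac{a+3}{2}\eta.
\end{align*}

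No step is a genuine obstacle. The one point to check is that the operator-norm bound on $\msign{\cdot}$ replaces $\norm{\sign{\cdot}}_\infty\le1$. Everything else is the same geometric-series argument, with the monotonicity $q^{t-1}\ge1/2$ coming from the choice of $\lambda$.
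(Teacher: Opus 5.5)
Your proof is correct and follows the paper's argument essentially line by line: the recursion $\norm{\XB_t}_\op\le q\norm{\XB_{t-1}}_\op+\eta$ from $\norm{\OB_t}_\op\le 1$, the geometric unrolling to $(1-(1-a)q^{t-1})/\lambda$ with $q^{t-1}\ge 1/2$, and the triangle-inequality bound on the increment. One tiny nit: when $\lambda=0$ the second claim is not vacuous, but it still holds trivially because $\norm{\XB_{t+1}-\XB_t}_\op=\eta\norm{\OB_t}_\op\le\eta$.
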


\begin{proof}
    The proof of this lemma is analogous to that of~\cref{lem:stability-wd}. Denote $1-\eta\lambda$ by $q$, which implies $q\in(1/2^{1/T},1)$ as $\lambda\le\frac{1}{\eta}\brac{1-\frac{1}{2^{1/T}}}$. By the update rule of~\cref{alg:muonlight}, we have
    \begin{align*}
        &\norm{\XB_t}_\op=\norm{(1-\eta\lambda)\XB_{t-1} - \eta\msign{\BBT_{t-1}}}_\op\le q\norm{\XB_{t-1}}_\op+\eta\\\le&q^2\norm{\XB_{t-2}}_\op+q\eta+\eta\le\cdots\le q^{t-1}\norm{\XB_1}_\op+\eta\sum_{i=0}^{t-2}q^i\le\frac{aq^{t-1}}{\lambda}+\frac{\eta(1-q^{t-1})}{1-q}\\=& \frac{1-(1-a)q^{t-1}}{\lambda}\le\frac{1-(1-a)\brac{\frac{1}{2}}^{\frac{t-1}{T}}}{\lambda}\le\frac{a+1}{2\lambda},
    \end{align*}
    which holds for any $t\in[T]$. Therefore, we can deduce
    \begin{align*}
        \norm{\XB_{t+1}-\XB_t}_\op=\norm{\eta\lambda\XB_t+\eta\msign{\BBT_t}}_\op\le \eta\lambda\norm{\XB_t}_\op+\eta\le\frac{a+3}{2}\eta,\quad\forall t\in[T].
    \end{align*}
\end{proof}

\subsection{Proof of Theorem~\ref{thm:muon}\label{sec:proof-muon}}

\paragraph{Preliminaries} Throughout this section, we define
\begin{equation}\label{eq:notations-muon}
    \begin{aligned}
        &\bnabla_t:=\nabla f(\XB_t),\quad\MB_t:=(1-\beta)\BB_t,\\&\EB_t:=\MB_t-\bnabla_t,\quad\NB_t:=\Gb_t-\bnabla_t,\quad\SB_t:=\bnabla_{t-1}-\bnabla_t.
    \end{aligned}
\end{equation}
Under the above notation, the momentum term $\BB_t=\beta\BB_{t-1}+\Gb_t$ in~\cref{alg:muon} can be rewritten as the one with a damping factor, i.e., $\MB_t=\beta\MB_{t-1}+(1-\beta)\Gb_t$, allowing us to borrow the similar derivations for~\cref{thm:signsgd,thm:lion}. Also, under the exact Newton--Schulz oracle assumption, we have $\OB_t=\texttt{NewtonSchulz}(\BB_t)=\msign{\BB_t}$. Since $\msign{\cdot}$ is invariant to input scale according to \cref{lem:msign-property}, so it further implies $\OB_t=\msign{\MB_t}$. With the above preparations, we formally begin the proof.

By choosing $\eta\le1/\norm{\Lb_1}_\op$, we can ensure $\norm{\XB_{t+1}-\XB_t}_\op=\eta\norm{\OB_t}_\op\le 1/\norm{\Lb_1}_\op$. Recall the definitions and notations in~\eqref{eq:notations-muon}, according to~\cref{lem:descent-lemma}, we have
\begin{equation}
    \begin{aligned}
        f(\XB_{t+1})\le& f(\XB_t)-\eta\inner{\bnabla_t}{\msign{\BB_t}}+\frac{\eta^2}{2}\norm{\OB_t}_{\Lb(\XB_t)}^2\\\overset{\textnormal{\cref{lem:msign-property}}}{\le}&f(\XB_t)-\eta\inner{\bnabla_t}{\msign{\bnabla_t}}+\eta\inner{\bnabla_t}{\msign{\bnabla_t}-\msign{\MB_t}}\\&+\frac{\eta^2}{2}\tr{\mabs{\Lb_0}+\mabs{\bnabla_t\Lb_1^\top}}\\\overset{\textnormal{\cref{lem:trace-property,lem:msign-property,lem:polar-difference}}}{\le}&f(\XB_t)-\eta\norm{\bnabla_t}_*+2\eta\norm{\bnabla_t-\MB_t}_*+\frac{\eta^2\norm{\Lb_0}_*}{2}+\frac{\eta^2}{2}\norm{\Lb_1}_\op\norm{\bnabla_t}_*\\\le&f(\XB_t)-\frac{\eta}{2}\norm{\bnabla_t}_*+2\eta\norm{\bnabla_t-\MB_t}_*+\frac{\eta^2\norm{\Lb_0}_*}{2}.
    \end{aligned}\label{eq:muon-onestep}
\end{equation}
    Rearranging the above relation and summing from $1$ to $T$ yields
\begin{align}\label{eq:Linf-mid-muon}
        \E\sqbrac{\frac{1}{T}\sum_{t=1}^{T} \norm{\bnabla_t}_*} 
        \le \frac{2\Delta_f}{\eta T}+\eta\norm{\Lb_0}_* +  4 \E\sqbrac{\frac{1}{T}\sum_{t=1}^{T} \norm{ \MB_t-\bnabla_t}_*},
\end{align}
where we use  $\Delta_{f}=f(\XB_{1})-f_{*}\ge f(\XB_1)-f(\XB_{T+1})$. By~\eqref{eq:notations-muon}, we have that $\MB_t=\beta\MB_{t-1}+(1-\beta)\Gb_t$. Hence, we follow a similar expansion as in~\eqref{eq:eps-expand} to get
\begin{align*}
    &\EB_t=\MB_t-\bnabla_t=\beta\MB_{t-1}+(1-\beta)\Gb_t-\bnabla_t\\=&\beta\brac{\MB_{t-1}-\bnabla_{t-1}}+\beta\brac{\bnabla_{t-1}-\bnabla_{t}}+(1-\beta)\brac{\Gb_t-\bnabla_t}\\=&\beta\EB_{t-1}+\beta\SB_t+(1-\beta)\NB_t.
\end{align*}
which implies
\begin{align*}
        \EB_t=-\beta^t\bnabla_1+(1-\beta)\sum_{k=1}^t\beta^{t-k}\NB_k+\sum_{k=2}^t\beta^{t-k+1}\SB_k,
    \end{align*}
    where we utilize $\EB_1=\MB_1-\bnabla_1=(1-\beta)\Gb_1-\bnabla_1=(1-\beta)\NB_1-\beta\bnabla_1$. We then decompose $\EB_t$ into
    \begin{align*}
    \E\sqbrac{\norm{\EB_t}_*}\le \underbrace{\norm{\beta^t\bnabla_1}_*}_{\mathtt{A_t}}+\underbrace{\E\sqbrac{\norm{(1-\beta)\sum_{k=1}^t\beta^{t-k}\NB_k}_*}}_{\mathtt{B_t}}+\underbrace{\E\sqbrac{\norm{\sum_{k=2}^t\beta^{t-k+1}\SB_k}_*}}_{\mathtt{C_t}},
\end{align*}
and bound these terms separately.

\paragraph{Cumulative noise $\mathtt{B_t}$}
Define $\NBT_k:=\beta^{t-k}\NB_k,k\in[1,t]$. Under~\cref{ass:unbiased-matrix}, we can show that
\begin{equation}\label{eq:Bt-cancellation}
    \begin{aligned}
    &\frac{\mathtt{B_t}}{1-\beta}\overset{\textnormal{\cref{lem:asgo-regret}}}{\le}2\sqrt{2}\E\sqbrac{\norm{\brac{\sum_{k=1}^t\NBT_k\NBT_k^\top}^{1/2}}_*}\\\overset{\textnormal{\cref{lem:matrix-cauchy-schwarz}}}{\le}&2\sqrt{2}\E\sqbrac{\sqrt{\norm{\brac{\V_0\V_0^\top}^{1/2}}_*\tr{\brac{\sum_{k=1}^t\NBT_k\NBT_k^\top}^{1/2}\mabs{\V_0}^{-1}\brac{\sum_{k=1}^t\NBT_k\NBT_k^\top}^{1/2}}}}\\=&2\sqrt{2\norm{\V_0}_*}\cdot\E\sqbrac{\sqrt{\tr{\brac{\sum_{k=1}^t\NBT_k\NBT_k^\top}\mabs{\V_0}^{-1}}}}.
\end{aligned}
\end{equation}
We proceed by following the conditional expectation recursion in~\eqref{eq:recursion}:
\begin{align*}
    &\E\sqbrac{\left.\sqrt{\tr{\brac{\sum_{k=1}^t\NBT_k\NBT_k^\top}\mabs{\V_0}^{-1}}}\right|\F_{t-1}}\\=&\E\sqbrac{\left.\brac{\brac{\sum_{k=1}^t\tr{\NBT_k\NBT_k^\top\mabs{\V_0}^{-1}}}^{p/2}}^{1/p}\right|\F_{t-1}}\\\overset{\textnormal{\cref{lem:lp-mean}}}{\le}&\brac{\E\sqbrac{\left.\brac{\sum_{k=1}^t\norm{\NBT_k}_{\mabs{\V_0}^{-1}}^2}^{p/2}\right|\F_{t-1}}}^{1/p}\\\overset{\textnormal{\cref{lem:minkowski}}}{\le}&\brac{\brac{\sum_{k=1}^{t-1}\norm{\NBT_k}_{\mabs{\V_0}^{-1}}^2}^{p/2}+\E\sqbrac{\left.\brac{\tr{\NBT_t\NBT_t^\top\mabs{\V_0}^{-1}}}^{p/2}\right|\F_{t-1}}}^{1/p}\\\overset{\textnormal{\cref{lem:batch-noise-matrix}}}{\le}&\brac{\brac{\sum_{k=1}^{t-1}\norm{\NBT_k}_{\mabs{\V_0}^{-1}}^2}^{p/2}+2B^{1-p}\beta^{p(t-t)}\brac{\norm{\V_0}_*^{p/2}+\frac{\norm{\V_1}_{\op}^p\norm{\bnabla_t}_*^p}{\norm{\V_0}_*^{p/2}}}}^{1/p}\\\overset{\textnormal{\cref{lem:minkowski}}}{\le}&\brac{\brac{\sum_{k=1}^{t-1}\norm{\NBT_k}_{\mabs{\V_0}^{-1}}^2}^{p/2}+2B^{1-p}\beta^{p(t-t)}\norm{\V_0}_*^{p/2}}^{1/p}+\frac{2^{\frac{1}{p}}\beta^{(t-t)}\norm{\V_1}_{\op}\norm{\bnabla_t}_*}{B^{\frac{p-1}{p}}\norm{\V_0}_*^{1/2}}.
\end{align*}
Taking total expectations on both sides and applying the above relation recursively:
\begin{align*}
    &\E\sqbrac{\sqrt{\tr{\brac{\sum_{k=1}^t\NBT_k\NBT_k^\top}\mabs{\V_0}^{-1}}}}\le\underbrace{\left.\E\sqbrac{\frac{2\beta^{(t-k)}\norm{\V_1}_{\op}\norm{\bnabla_k}_*}{B^{\frac{p-1}{p}}\norm{\V_0}_*^{1/2}}}\right|_{k=t}}_{\Phi_k|_{k=t}}\\&+\E\sqbrac{\brac{\brac{\sum_{k=1}^{t-1}\norm{\NBT_k}_{\mabs{\V_0}^{-1}}^2}^{p/2}+2B^{1-p}\beta^{p(t-t)}\norm{\V_0}_*^{p/2}}^{1/p}}\\\overset{\textnormal{\cref{lem:minkowski}}}{\le}&\brac{\E\sqbrac{\brac{\sum_{k=1}^{t-1}\norm{\NBT_k}_{\mabs{\V_0}^{-1}}^2}^{p/2}+2B^{1-p}\beta^{p(t-t)}\norm{\V_0}_*^{p/2}}}^{1/p}+\Phi_t\\=&\brac{\E\sqbrac{\E\sqbrac{\left.\brac{\sum_{k=1}^{t-1}\norm{\NBT_k}_{\mabs{\V_0}^{-1}}^2}^{p/2}\right|\F_{t-2}}}+2B^{1-p}\beta^{p(t-t)}\norm{\V_0}_*^{p/2}}^{1/p}+\Phi_t\\\le&\brac{\E\sqbrac{\E\sqbrac{\left.\brac{\sum_{k=1}^{t-2}\norm{\NBT_k}_{\mabs{\V_0}^{-1}}^2}^{p/2}\right|\F_{t-3}}}+2B^{1-p}\brac{\beta^{p(t-(t-1))}+\beta^{p(t-t)}}\norm{\V_0}_*^{p/2}}^{1/p}\\&+\Phi_{t-1}+\Phi_t\le\cdots\\\le&\brac{2B^{1-p}\norm{\V_0}_*^{p/2}\sum_{k=1}^t\beta^{p(t-k)}}^{1/p}+\sum_{k=1}^t\Phi_k\\\le&\frac{2\norm{\V_0}_*^{p/2}}{B^{\frac{p-1}{p}}(1-\beta^p)^{1/p}}+\sum_{k=1}^t\E\sqbrac{\frac{2\beta^{t-k}\norm{\V_1}_{\op}\norm{\bnabla_k}_*}{B^{\frac{p-1}{p}}\norm{\V_0}_*^{1/2}}}.
\end{align*}
Hence, we can bound $\mathtt{B_t}$ as 
\begin{align}\label{eq:muon-Bt}
    \mathtt{B_t}\le\frac{4\sqrt{2}(1-\beta)^{\frac{p-1}{p}}\norm{\V_0}_*}{B^{\frac{p-1}{p}}}+\frac{4\sqrt{2}(1-\beta)\norm{\V_1}_{\op}}{B^{\frac{p-1}{p}}}\sum_{k=1}^t\E\sqbrac{\beta^{t-k}\norm{\bnabla_k}_*},
\end{align}
where we make use of $(1-\beta^p)^{-1/p}\le(1-\beta)^{-1/p}$.

\paragraph{Trajectory curvature $\mathtt{C_t}$}

Under~\cref{ass:generalized-smooth-matrix} with $\Lb(\XB)=\mabs{\Lb_0}+\mabs{\nabla f(\XB)\Lb_1^\top}$, together with the fact that $\norm{\XB_k-\XB_{k-1}}_\op=\eta\norm{\OB_{k-1}}_\op\le1/\norm{\Lb_1}_\op$, we have
\begin{align*}
    \mathtt{C_t}\le&\sum_{k=2}^t\beta^{t-k+1}\E\sqbrac{\norm{\SB_k}_*}\overset{\textnormal{\cref{lem:matrix-cauchy-schwarz}}}{\le}\sum_{k=2}^t\beta^{t-k+1}\E\sqbrac{\sqrt{\norm{\Lb(\XB_k)}_*}\norm{\SB_k}_{\brac{\Lb(\XB_k)}^{-1}}}\\\le&\sum_{k=2}^t\beta^{t-k+1}\E\sqbrac{\sqrt{\tr{\Lb(\XB_k)}}\norm{\eta\OB_{k-1}}_{\Lb(\XB_k)}}\\\overset{\textnormal{\cref{lem:msign-property}}}{\le}&\eta\sum_{k=2}^t\beta^{t-k+1}\E\sqbrac{\tr{\Lb(\XB_k)}}\\\overset{\textnormal{\cref{lem:trace-property}}}{\le}&\eta\sum_{k=2}^t\beta^{t-k+1}\tr{\mabs{\Lb_0}}+\eta\sum_{k=2}^t\beta^{t-k+1}\E\sqbrac{\norm{\Lb_1}_\op\norm{\bnabla_k}_*}\\\le&\frac{\eta\norm{\Lb_0}_*}{1-\beta}+\eta\norm{\Lb_1}_\op\sum_{k=2}^t\beta^{t-k+1}\E\sqbrac{\norm{\bnabla_k}_*}.
\end{align*}
Combining the bounds for $\mathtt{A_t},\mathtt{B_t},\mathtt{C_t}$, and summing from $1$ to $T$:
    \begin{align*}
    &\frac{1}{T}\sum_{t=1}^T\E\sqbrac{\norm{\EB_t}_*}\le\frac{1}{T}\sum_{t=1}^T\beta^t\norm{\bnabla_1}_*\\&+\frac{4\sqrt{2}(1-\beta)^{\frac{p-1}{p}}\norm{\V_0}_*}{B^{\frac{p-1}{p}}}+\frac{4\sqrt{2}(1-\beta)\norm{\V_1}_{\op}}{TB^{\frac{p-1}{p}}}\sum_{t=1}^T\sum_{k=1}^t\E\sqbrac{\beta^{(t-k)}\norm{\bnabla_k}_*}\\&+\frac{\eta\norm{\Lb_0}_*}{1-\beta}+\frac{\eta\norm{\Lb_1}_\op}{T}\sum_{t=1}^T\sum_{k=2}^t\beta^{t-k+1}\E\sqbrac{\norm{\bnabla_k}_*}\\\overset{\textnormal{\eqref{eq:switch-sum}}}{\le}&\frac{\norm{\bnabla_1}_*}{T(1-\beta)}+\frac{4\sqrt{2}(1-\beta)^{\frac{p-1}{p}}\norm{\V_0}_*}{B^{\frac{p-1}{p}}}+\frac{\eta\norm{\Lb_0}_*}{1-\beta}\\&+\frac{1}{T}\sum_{t=1}^T\brac{\frac{4\sqrt{2}\norm{\V_1}_{\op}}{B^{\frac{p-1}{p}}}+\frac{\eta\beta\norm{\Lb_1}_\op}{1-\beta}}\E\sqbrac{\norm{\bnabla_t}_*}\\\overset{\textnormal{\eqref{eq:muon-params}}}{\le}&\frac{\norm{\bnabla_1}_*}{T(1-\beta)}+\frac{4\sqrt{2}(1-\beta)^{\frac{p-1}{p}}\norm{\V_0}_*}{B^{\frac{p-1}{p}}}+\frac{\eta\norm{\Lb_0}_*}{1-\beta}+\sum_{t=1}^T\frac{1}{8T}\E\sqbrac{\norm{\bnabla_t}_*}.
\end{align*}
Plugging the above relation into~\eqref{eq:Linf-mid-muon}, we obtain
\begin{align*}
    \frac{1}{T}\sum_{t=1}^{T}\E\sqbrac{\norm{\bnabla_t}_*} 
        \le \frac{4\Delta_f}{\eta T}+ \frac{8\norm{\bnabla_1}_*}{T(1-\beta)}+\frac{32\sqrt{2}(1-\beta)^{\frac{p-1}{p}}\norm{\V_0}_*}{B^{\frac{p-1}{p}}}+\frac{10\eta\norm{\Lb_0}_*}{1-\beta}.
\end{align*}
Finally, choosing $B,\beta,\eta$ according to~\eqref{eq:muon-params} and following the similar steps as in~\cref{sec:proof-signsgd}, we can obtain the convergence rate $O\brac{(\Delta_f\norm{\Lb_0}_*)^{\frac{p-1}{3p-2}}\norm{\V_0}_*^{\frac{p}{3p-2}}(BT)^{\frac{1-p}{3p-2}}}$.

\subsection{Proof of Theorem~\ref{thm:muonlight}}

\paragraph{Preliminaries} For~\cref{alg:muonlight}, we define
\begin{equation}\label{eq:notations-muonlight}
    \begin{aligned}
        &\bnabla_t:=\nabla f(\XB_t),\quad\MB_t:=(1-\beta_2)\BB_t,\quad\MBT_t:=(1-\beta_2)\BBT_t,\\&\EB_t:=\MB_t-\bnabla_t,\quad\NB_t:=\Gb_t-\bnabla_t,\quad\SB_t:=\bnabla_{t-1}-\bnabla_t.
    \end{aligned}
\end{equation}
Based on the above notation, we have $\MB_t=\beta_2\MB_{t-1}+(1-\beta_2)\Gb_t$ and $\MBT_t=\beta_1\MB_t+(1-\beta_2)\Gb_t$. We also have $\OB_t=\texttt{NewtonSchulz}(\BBT_t)=\msign{\MBT_t}$. Now, we begin the proof as follows.

By~\cref{lem:stability-wd-matrix} and $\norm{\XB_1}_\op\le1/(3\lambda)$, $\norm{\XB_{t+1}-\XB_t}_{\op}\le5\eta/3$ holds for any $t\in[T]$, which immediately implies $\norm{\XB_{t+1}-\XB_t}_{\op}\le1/\norm{\Lb_1}_\op$ by the choice of $\eta$ in~\eqref{eq:muonlight-params}. Thus, we invoke~\cref{lem:descent-lemma} to deduce that
\begin{align*}
    f(\XB_{t+1})\le& f(\XB_t)-\eta\inner{\bnabla_t}{\msign{\BBT_t}+\lambda\XB_t}\\&+\frac{1}{2}\tr{\brac{\XB_{t+1}-\XB_t}^\top\Lb(\XB_t)\brac{\XB_{t+1}-\XB_t}}\\\overset{\textnormal{\cref{lem:msign-property}}}{\le}&f(\XB_t)-\eta\inner{\bnabla_t}{\msign{\bnabla_t}}+\eta\inner{\bnabla_t}{\msign{\bnabla_t}-\msign{\MBT_t}}\\&+\eta\lambda\norm{\XB_t}_\op\norm{\bnabla_t}_*+\frac{1}{2}\norm{\XB_{t+1}-\XB_t}_\op^2\norm{\Lb(\XB_t)}_*\\\overset{\textnormal{\cref{lem:trace-property,lem:msign-property,lem:polar-difference,lem:stability-wd-matrix}}}{\le}&f(\XB_t)-\eta\norm{\bnabla_t}_*+2\eta\norm{\bnabla_t-\MBT_t}_*+\frac{2\eta}{3}\norm{\bnabla_t}_*
    +\frac{25\eta^2}{9}\tr{\mabs{\Lb_0}+\mabs{\bnabla_t\Lb_1^\top}}
    \\\overset{\textnormal{\cref{lem:trace-property}}}{\le}&f(\XB_t)-\frac{\eta}{3}\norm{\bnabla_t}_*+2\eta\norm{\bnabla_t-\MBT_t}_*+\frac{25\eta^2\norm{\Lb_0}_*}{9}+\frac{25\eta^2\norm{\Lb_1}_\op\norm{\bnabla_t}_*}{9}\\\overset{\eqref{eq:muonlight-params}}{\le}&f(\XB_t)-\frac{8\eta}{25}\norm{\bnabla_t}_*+2\eta\norm{\bnabla_t-\MBT_t}_*+\frac{25\eta^2\norm{\Lb_0}_*}{9}, 
\end{align*}
where the second inequality uses Cauchy-Schwarz inequality and the fact that $\norm{\cdot}_\op$ is submultiplicative; the last step is due to $\eta\le3/(625\norm{\Lb_1}_\op)$. Rearranging the above relation and summing from $1$ to $T$ yields
\begin{align}\label{eq:Linf-mid-muonlight}
    \frac{1}{T}\sum_{t=1}^{T}\E\sqbrac{ \norm{\bnabla_t}_*} 
    \le \frac{25\Delta_f}{8\eta T}+\frac{625\eta}{72}\norm{\Lb_0}_*+\frac{25}{4T}\sum_{t=1}^{T}\E\sqbrac{ \norm{ \MBT_t-\bnabla_t}_*},
\end{align}
Recall the notations in~\eqref{eq:notations-muonlight} and decompose $\E\sqbrac{\norm{ \MBT_t-\bnabla_t}_*}$ as follows:
\begin{equation}\label{eq:muonlight-decomposition}
    \begin{aligned}
        &\E\sqbrac{\norm{\MBT_t-\bnabla_t}_*}=\E\sqbrac{\norm{\beta_1\MB_{t}+(1-\beta_2)\Gb_t-\bnabla _t}_*}\\=&\E\sqbrac{\norm{\beta_1(\MB_t-\bnabla_t)+(1-\beta_2)(\Gb_t-\bnabla_t)+(\beta_1-\beta_2)(\bnabla_t)}_*}\\\le&\beta_1\E\sqbrac{\norm{\EB_t}_*}+(1-\beta_2)\E\sqbrac{\norm{\NB_t}_*}+\abs{\beta_1-\beta_2}\E\sqbrac{\norm{\bnabla_t}_*},
    \end{aligned}
\end{equation}
which holds for all $t\in[T]$. The noise term $\NB_t$ can be bounded by
\begin{equation}\label{eq:muonlight-NBt}
    \begin{aligned}
        \E\sqbrac{\norm{\NB_t}_*}\overset{\textnormal{\cref{lem:matrix-cauchy-schwarz}}}{\le}&\E\sqbrac{\sqrt{\norm{\mabs{\V_0}}_*\tr{\NB_t^\top\mabs{\V_0}^{-1}\NB_t}}}\\=&\sqrt{\norm{\V_0}_*}\E\sqbrac{\E\sqbrac{\left.\norm{\NB_t}_{\mabs{\V_0}^{-1}}\right|\F_{t-1}}}\\\overset{\textnormal{\cref{lem:lp-mean}}}{\le}&\sqrt{\norm{\V_0}_*}\E\sqbrac{\brac{\E\sqbrac{\left.\norm{\NB_t}^p_{\mabs{\V_0}^{-1}}\right|\F_{t-1}}}^{1/p}}\\\overset{\textnormal{\cref{lem:batch-noise-matrix}}}{\le}&\sqrt{\norm{\V_0}_*}\E\sqbrac{\brac{2B^{1-p}\brac{\norm{\V_0}_*^{p/2}+\frac{\norm{\V_1}_{\op}^p\norm{\bnabla_t}_*^p}{\norm{\V_0}_*^{p/2}}}}^{1/p}}\\\overset{\textnormal{\cref{lem:minkowski}}}{\le}&2B^{\frac{1-p}{p}}\sqrt{\norm{\V_0}_*}\E\sqbrac{\norm{\V_0}_*^{1/2}+\frac{\norm{\V_1}_{\op}\norm{\bnabla_t}_*}{\norm{\V_0}_*^{1/2}}}\\=&2B^{\frac{1-p}{p}}\norm{\V_0}_*+2B^{\frac{1-p}{p}}\norm{\V_1}_{\op}\E\sqbrac{\norm{\bnabla_t}_*}.
    \end{aligned}
\end{equation}
As for the error term $\EB_t$, we mainly adopt the same procedures as in~\cref{sec:proof-muon}. Following~\eqref{eq:eps-expand}, $\EB_t$ could be expressed as $\EB_t=\beta_2\EB_{t-1}+\beta_2\SB_t+(1-\beta_2)\NB_t$, implying
    \begin{align*}
        \EB_t=-\beta_2^t\bnabla_1+(1-\beta_2)\sum_{k=1}^t\beta_2^{t-k}\NB_k+\sum_{k=2}^t\beta_2^{t-k+1}\SB_k,
    \end{align*}
    where we utilize $\EB_1=\MB_1-\bnabla_1=(1-\beta)\Gb_1-\bnabla_1=(1-\beta)\NB_1-\beta\bnabla_1$. Then, it holds that
    \begin{align*}
    \E\sqbrac{\norm{\EB_t}_*}\le \underbrace{\norm{\beta_2^t\bnabla_1}_*}_{\mathtt{A_t}}+\underbrace{\E\sqbrac{\norm{(1-\beta_2)\sum_{k=1}^t\beta_2^{t-k}\NB_k}_*}}_{\mathtt{B_t}}+\underbrace{\E\sqbrac{\norm{\sum_{k=2}^t\beta_2^{t-k+1}\SB_k}_*}}_{\mathtt{C_t}}.
\end{align*}
For $\mathtt{B_t}$, we can safely replace the $\beta$ in~\eqref{eq:muon-Bt} by $\beta_2$ to derive 
\begin{align*}
    \mathtt{B_t}\le\frac{4\sqrt{2}(1-\beta_2)^{\frac{p-1}{p}}\norm{\V_0}_*}{B^{\frac{p-1}{p}}}+\frac{4\sqrt{2}(1-\beta_2)\norm{\V_1}_{\op}}{B^{\frac{p-1}{p}}}\sum_{k=1}^t\E\sqbrac{\beta_2^{t-k}\norm{\bnabla_k}_*},
\end{align*}
    For $\mathtt{C_t}$, we need to cope with weight decay carefully. By~\cref{lem:stability-wd-matrix}, we have $\norm{\XB_t-\XB_{t-1}}_\op\le5\eta/3\le1/\norm{\Lb}_\op$. So we can utilize~\cref{ass:generalized-smooth-matrix} in the following way:
    \begin{align*}
    \mathtt{C_t}\le&\sum_{k=2}^t\beta_2^{t-k+1}\E\sqbrac{\norm{\SB_k}_*}\overset{\textnormal{\cref{lem:matrix-cauchy-schwarz}}}{\le}\sum_{k=2}^t\beta_2^{t-k+1}\E\sqbrac{\sqrt{\norm{\Lb(\XB_k)}_*}\norm{\SB_k}_{\brac{\Lb(\XB_k)}^{-1}}}\\\le&\sum_{k=2}^t\beta_2^{t-k+1}\E\sqbrac{\sqrt{\norm{\Lb(\XB_k)}_*}\norm{\XB_k-\XB_{k-1}}_{\Lb(\XB_k)}}\\\le&\sum_{k=2}^t\beta_2^{t-k+1}\E\sqbrac{\sqrt{\norm{\Lb(\XB_k)}_*\cdot\norm{\XB_k-\XB_{k-1}}_\op^2\cdot\norm{\Lb(\XB_k)}_*}}
    \\\overset{\textnormal{\cref{lem:stability-wd-matrix}}}{\le}&\sum_{k=2}^t\beta_2^{t-k+1}\E\sqbrac{\tr{\Lb(\XB_k)}\cdot\frac{5\eta}{3}}\\\overset{\textnormal{\cref{lem:trace-property}}}{\le}&\frac{5\eta}{3}\sum_{k=2}^t\beta_2^{t-k+1}\tr{\mabs{\Lb_0}}+\eta\sum_{k=2}^t\beta_2^{t-k+1}\E\sqbrac{\norm{\Lb_1}_\op\norm{\bnabla_k}_*}\\\le&\frac{5\eta\norm{\Lb_0}_*}{3(1-\beta_2)}+\frac{5\eta}{3}\norm{\Lb_1}_\op\sum_{k=2}^t\beta_2^{t-k+1}\E\sqbrac{\norm{\bnabla_k}_*},
\end{align*}
where the fourth inequality leverages Cauchy-Schwarz inequality and $\norm{AB}_\op\le\norm{A}_\op\norm{B}_\op,\forall A\in\R^{m\times n},B\in\R^{n\times r}$. Combining the bounds for $\mathtt{A_t},\mathtt{B_t},\mathtt{C_t}$, and summing from $1$ to $T$:
    \begin{align*}
    &\frac{1}{T}\sum_{t=1}^T\E\sqbrac{\norm{\EB_t}_*}\le\frac{1}{T}\sum_{t=1}^T\beta_2^t\norm{\bnabla_1}_*+\frac{5\eta\norm{\Lb_0}_*}{3(1-\beta_2)}+\frac{5\eta\norm{\Lb_1}_\op}{3T}\sum_{t=1}^T\sum_{k=2}^t\beta_2^{t-k+1}\E\sqbrac{\norm{\bnabla_k}_*}\\&+\frac{4\sqrt{2}(1-\beta_2)^{\frac{p-1}{p}}\norm{\V_0}_*}{B^{\frac{p-1}{p}}}+\frac{4\sqrt{2}(1-\beta_2)\norm{\V_1}_{\op}}{TB^{\frac{p-1}{p}}}\sum_{t=1}^T\sum_{k=1}^t\E\sqbrac{\beta_2^{(t-k)}\norm{\bnabla_k}_*}\\\overset{\textnormal{\eqref{eq:switch-sum}}}{\le}&\frac{\norm{\bnabla_1}_*}{T(1-\beta_2)}+\frac{4\sqrt{2}(1-\beta_2)^{\frac{p-1}{p}}\norm{\V_0}_*}{B^{\frac{p-1}{p}}}+\frac{5\eta\norm{\Lb_0}_*}{3(1-\beta_2)}\\&+\frac{1}{T}\sum_{t=1}^T\brac{\frac{4\sqrt{2}\norm{\V_1}_{\op}}{B^{\frac{p-1}{p}}}+\frac{5\eta\beta_2\norm{\Lb_1}_\op}{3(1-\beta_2)}}\E\sqbrac{\norm{\bnabla_t}_*}.
\end{align*}
Plugging the above bound for $\E\sqbrac{\norm{\EB_t}_*}$, the bound for $\E\sqbrac{\norm{\NB_t}_*}$ in~\eqref{eq:muonlight-NBt} into~\eqref{eq:muonlight-decomposition}:
\begin{align*}
    &\frac{1}{T}\sum_{t=1}^T\E\sqbrac{\norm{\MBT_t-\bnabla_t}_*}\le\frac{\beta_1}{T}\sum_{t=1}^T\E\sqbrac{\norm{\EB_t}_*}+\frac{1-\beta_2}{T}\sum_{t=1}^T\E\sqbrac{\norm{\NB_t}_*}+\frac{\abs{\beta_1-\beta_2}}{T}\sum_{t=1}^T\E\sqbrac{\norm{\bnabla_t}_*}\\\le&\frac{\beta_1\norm{\bnabla_1}_*}{T(1-\beta_2)}+\frac{4\sqrt{2}\beta_1(1-\beta_2)^{\frac{p-1}{p}}\norm{\V_0}_*}{B^{\frac{p-1}{p}}}+\frac{5\beta_1\eta\norm{\Lb_0}_*}{3(1-\beta_2)}+\frac{(1-\beta_2)\norm{\V_0}_*}{B^{\frac{p-1}{p}}}\\&+\frac{1}{T}\sum_{t=1}^T\brac{\frac{\brac{4\sqrt{2}\beta_1+1-\beta_2}\norm{\V_1}_{\op}}{B^{\frac{p-1}{p}}}+\frac{5\eta\beta_1\beta_2\norm{\Lb_1}_\op}{3(1-\beta_2)}}\E\sqbrac{\norm{\bnabla_t}_*}+\frac{3}{20T}\sum_{t=1}^T\E\sqbrac{\norm{\bnabla_t}_*}\\\overset{\eqref{eq:muonlight-params}}{\le}&\frac{\beta_1\norm{\bnabla_1}_*}{T(1-\beta_2)}+\frac{(4\sqrt{2}+1)\beta_1\norm{\V_0}_*}{(1-\beta_2)^{\frac{1-p}{p}}B^{\frac{p-1}{p}}}+\frac{5\beta_1\eta\norm{\Lb_0}_*}{3(1-\beta_2)}+\frac{159}{1000T}\sum_{t=1}^T\E\sqbrac{\norm{\bnabla_t}_*},
\end{align*}
where the first inequality is due to $\abs{\beta_1-\beta_2}\le3/20$ as indicated by~\eqref{eq:muonlight-params}; the last step is due to
\begin{align*}
    \frac{\brac{4\sqrt{2}\beta_1+1-\beta_2}\norm{\V_1}_{\op}}{B^{\frac{p-1}{p}}}+\frac{5\eta\beta_1\beta_2\norm{\Lb_1}_\op}{3(1-\beta_2)}\le\frac{1}{1000}+\frac{5\times3}{3\times 625}=\frac{9}{1000}.
\end{align*}
Therefore, we plug in this relation into the initial bound in~\eqref{eq:Linf-mid-muonlight} to obtain
\begin{align*}
    &\frac{1}{T}\sum_{t=1}^{T}\E\sqbrac{ \norm{\bnabla_t}_*} 
    \le \frac{25\Delta_f}{8\eta T}+\frac{625\eta}{72}\norm{\Lb_0}_*\\&+\frac{25\beta_1\norm{\bnabla_1}_*}{4T(1-\beta_2)}+\frac{25(4\sqrt{2}+1)\beta_1\norm{\V_0}_*}{4(1-\beta_2)^{\frac{1-p}{p}}B^{\frac{p-1}{p}}}+\frac{125\beta_1\eta\norm{\Lb_0}_*}{12(1-\beta_2)}+\frac{159}{160T}\sum_{t=1}^T\E\sqbrac{\norm{\bnabla_t}_*},\\\Longrightarrow&\frac{1}{T}\sum_{t=1}^{T}\E\sqbrac{ \norm{\bnabla_t}_*} 
    \le\frac{500\Delta_f}{\eta T}+\frac{1000\norm{\bnabla_1}_*}{T(1-\beta_2)}+\frac{6658\norm{\V_0}_*}{(1-\beta_2)^{\frac{1-p}{p}}B^{\frac{p-1}{p}}}+\frac{1875\eta\norm{\Lb_0}_*}{1-\beta_2}.
\end{align*}
Setting $B,\beta_1,\beta_2,\eta$ according to~\eqref{eq:muonlight-params} completes the proof.

\section{Experimental Details\label{sec:experiments-add}}

We present the omitted details in~\cref{sec:experiments}.

\subsection{Validation of Heavy-Tailed Noise}
We first verify the validity of our proposed noise assumptions in the LLM pretraining regimes. To this end, we train nanoGPT~\citep{nanogpt} on the C4 dataset~\citep{JMLR:v21:20-074} using various optimizers and sample the stochastic gradient noise across different mini-batches. We first verify that the noise distributions are heavy-tailed in LLMs, with visualizations in~\cref{figs:error_distribution}. Then, we estimate the tail index $p$ following the methods in~\citet{simsekli19tailindex}. After that, we can approximate~\cref{ass:heavy-tailed-noise,ass:heavy-tailed-noise-matrix}, where the detailed approach is postponed to~\cref{sec:noise-validation}.~\cref{figs:assumption_4a_verification_start=0_j=1555144} depicts the relationship between the noise norm and the gradient magnitude for a randomly selected dimension $j\in[d]$ over consecutive iterations, while \cref{figs:assumption_4c_verification_start=0} illustrates the corresponding relationship for the matrix setting. We observe that in both cases, the expected noise norm scales linearly with the gradient norm. These empirical findings strongly corroborate our noise models in~\cref{ass:heavy-tailed-noise,ass:heavy-tailed-noise-matrix}, confirming that the noise magnitude is directly proportional to the gradient magnitude during training.

\subsection{Practical Efficiency of Sign-Based Optimizers}

Following the setup detailed in~\cref{sec:experimental-setup}, we train the nanoGPT model on the C4 dataset using Normalized SGD (NSGD)~\citep{nesterov1984minimization,cutkosky2020momentum}, AdamW~\citep{kingma15adam,loshchilov2019adamw}, SignSGD~\citep{bernstein2018signsgd}, Lion~\citep{chen2023symbolic}, Muon~\citep{jordan2024muon}, and Muonlight~\citep{liu2025muon}, where the results are shown in~\cref{fig:train_loss,fig:val_loss,fig:val_acc}. We observe a substantial performance gap between NSGD and sign-based methods (Lion and Muon), providing strong empirical evidence that sign-based optimizers are significantly more efficient than NSGD in the presence of heavy-tailed noise. This confirms the theoretical advantages established in \cref{sec:comparisons}. Furthermore, the matrix sign methods (Muon and Muonlight) consistently outperform AdamW, aligning with recent empirical findings in~\citet{liu2025muon,shah2025practical,wen2025fantastic,semenov2025benchmarking}.

\subsection{Experimental Setup and Methodology\label{sec:experimental-setup}}

We conduct all experiments for the NanoGPT model using PyTorch and Distributed Data Parallel (DDP) across four NVIDIA Pro6000 GPUs (96GB VRAM each). Our implementation extends the codebase provided by~\citet{semenov2025benchmarking}, incorporating additional modules for noise visualization. All models are trained for 10k steps with a global batch size of 512 sequences. This is achieved via a per-GPU batch size of 128 and 4 gradient accumulation steps per round. We use a sequence length of 512, totaling approximately $1\times$ the Chinchilla-optimal token count as proposed by~\citet{hoffmann2022chinchilla}. For the AdamW, Lion, and SignSGD optimizers, we adopt the standard hyperparameters for 124M-parameter models suggested by~\citet{semenov2025benchmarking}, specifically a learning rate $\eta = 10^{-3}$ and weight decay $\lambda = 0.1$. Meanwhile, we change the learning rates for Muon and Muonlight to $2\times lr$ of AdamW, as suggested in~\citet{jordan2024muon}. Finally, we employ a linear warm-up period of $10\%$ iterations at the start of pretraining. 


\begin{figure}[t]
\centering
\vspace{20pt}
\includegraphics[width=\columnwidth]{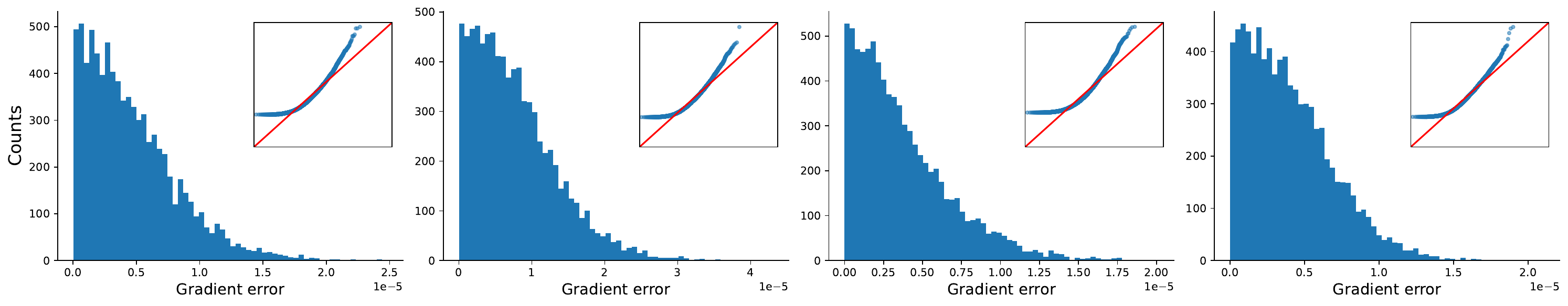}
\vspace{-10pt}
\caption{Noise histograms of nanoGPT on C4 at initialization sampled from different coordinates. Q-Q plots are shown at the top-right of each histogram, visualizing the distribution relative to a Gaussian (the red diagonal reference line).}
\label{figs:error_distribution}
\end{figure}

\subsection{Estimating the Heavy-Tail Index \texorpdfstring{$p$}{p}\label{sec:tail-index}}
Following the methodology of~\citet{zhang2020whyheavytail}, we estimate the heavy-tail index $p$ using the estimator introduced by~\citet{simsekli19tailindex}. To visualize this, we randomly sample coordinates from the model and plot the resulting noise histograms. These results, including quantile-quantile (Q-Q) plots, are presented in~\cref{figs:error_distribution}. In these Q-Q plots, a Gaussian distribution would align perfectly with the red diagonal reference line; the fact that our observed curves deviate above the diagonal indicates that the model experiences heavy-tailed noise at these coordinates.

\subsection{Validating the Noise Model\label{sec:noise-validation}}
To empirically characterize the evolution of noise throughout the pretraining phase, we save model checkpoints at fixed intervals. At each checkpoint, we compute the heavy-tail coefficient $p$ by applying the tail index estimator to gradients calculated over several mini-batches. Simultaneously, we estimate the gradient noise by aggregating stochastic gradients over subsequent iterations. We then plot the distribution of the empirical gradient norms against their expected values.~\cref{figs:assumption_4a_verification_start=0_j=1555144,figs:assumption_4c_verification_start=0} show the noise distribution in nanoGPT's attention layer on the C4 dataset with checkpoint at iteration $t=0$, while~\cref{figs:assumption_4a_verification_start=4000_j=884834,figs:assumption_4c_verification_start=4000} show the case when iteration $t=4000$.\footnote{Note that~\cref{ass:heavy-tailed-noise-matrix,ass:heavy-tailed-noise-matrix-variant} are equivalent in some sense, experimental on one implies the other.} These trends are consistent across all layers of the network.

\begin{figure}[t]
\centering
\includegraphics[width=\columnwidth]{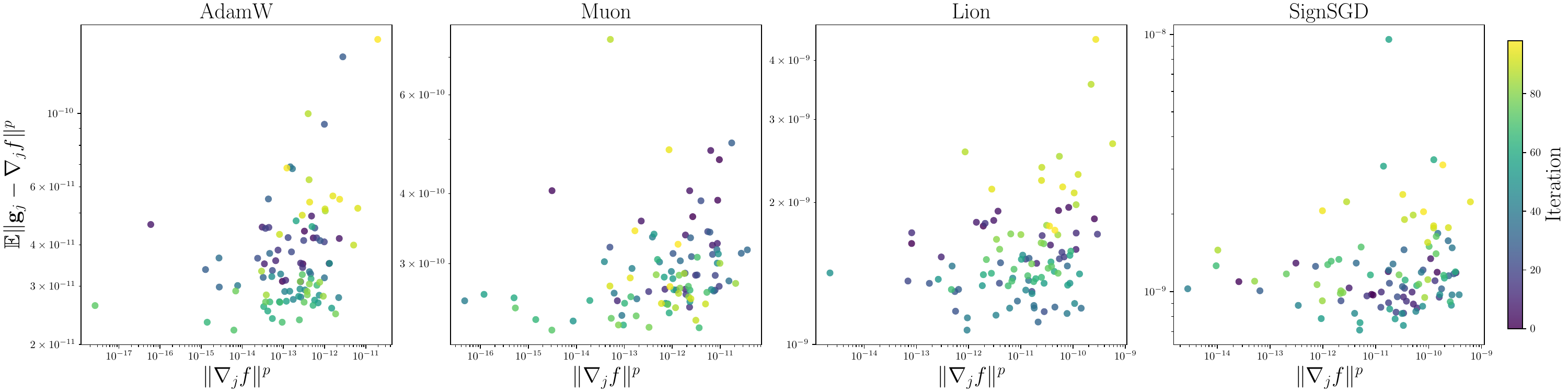}
\vspace{-10pt}
\caption{Verification of ~\cref{ass:heavy-tailed-noise}. \textbf{x-axis}: $|\nabla_j f|^p$, \textbf{y-axis}: $\E \sqbrac{\abs{\mathbf{g}_j - \nabla_j f}^p}$.}
\label{figs:assumption_4a_verification_start=4000_j=884834}
\end{figure}

\begin{figure}[t]
\centering
\includegraphics[width=\columnwidth]{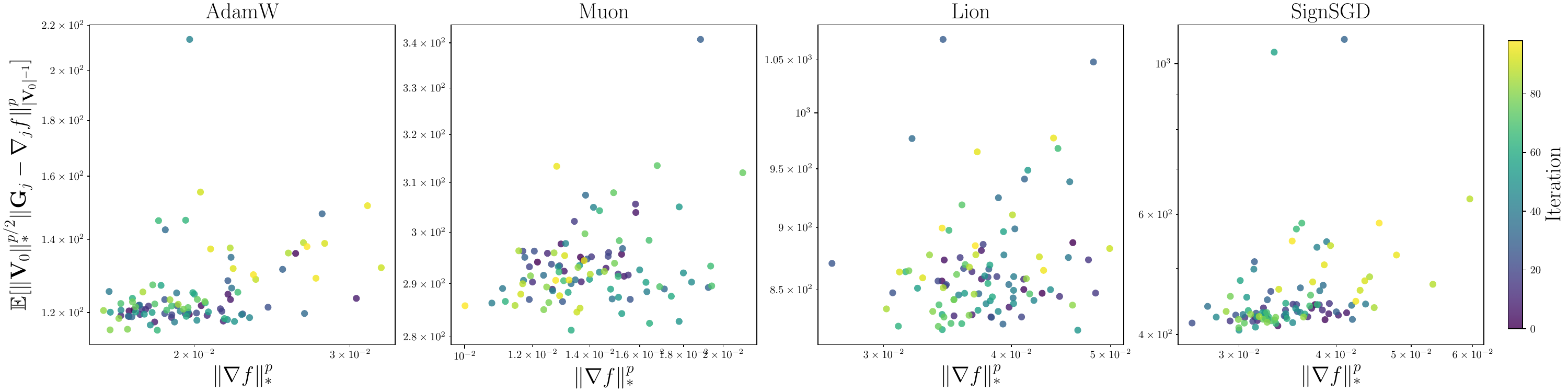}
\vspace{-10pt}
\caption{Verification of~\cref{ass:heavy-tailed-noise-matrix}. \textbf{x-axis}: $\norm{\nabla f}_*^p$, \textbf{y-axis}: $\E \sqbrac{\norm{\mathbf{V}_0}_*^{p/2} \norm{\mathbf{G} - \nabla f}^p_{\mabs{\mathbf{V}_0}^{-1}}}$.}
\label{figs:assumption_4c_verification_start=4000}
\end{figure}

\end{document}

%% file: macro.tex
\usepackage{booktabs}
\usepackage{multirow}
\usepackage{algorithm,algorithmic}
\usepackage{times}

\usepackage{amsmath,amsfonts,bm}
\usepackage{hyperref}
\usepackage{url}
\usepackage[capitalize,noabbrev]{cleveref}
\usepackage{apptools}
\usepackage{enumitem}

\usepackage{graphicx}
\usepackage{subfig}
\usepackage{fontawesome5}
\usepackage[T1]{fontenc}
\usepackage{fancybox}

\AtAppendix{\counterwithin{lem}{section}}

\newtheorem{thm}{Theorem}

\newtheorem{lem}{Lemma}

\newtheorem{ass}{Assumption}

\def \y {\mathbf{y}}
\def \YB {\mathbf{Y}}
\def \E {\mathbb{E}}
\def \x {\mathbf{x}}
\def \XB {\mathbf{X}}

\def \v {\mathbf{v}}
\def \V {\mathbf{V}}
\def \g {\mathbf{g}}

\def \DB {\mathbf{D}}

\def \OB {\mathbf{O}}

\def \z {\mathbf{z}}
\def \ZB {\mathbf{Z}}

\def \u {\mathbf{u}}
\def \U {\mathbf{U}}

\def \w {\mathbf{w}}
\def \WB {\mathbf{W}}
\def \R {\mathbb{R}}

\def \F {\mathcal{F}}

\def \W {\mathcal{W}}

\def \diff {\mathrm{d}}

\def \q {\mathbf{q}}

\def \IB {\mathbf{I}}

\def \a {\mathbf{a}}
\def \b {\mathbf{b}}

\def \BB {\mathbf{B}}
\def \BBT {\widetilde{\BB}}

\def \SBB {\mathbb{S}}

\def \Gb {\mathbf{G}}

\def \Fn {\textnormal{F}}

\def \s {\mathbf{s}}
\def \SB {\mathbf{S}}

\def \lb {\mathbf{l}}
\def \Lb {\mathbf{L}}

\def \m {\mathbf{m}}
\def \MB {\mathbf{M}}
\def \MBT {\widetilde{\MB}}

\def \n {\mathbf{n}}
\def \NB {\mathbf{N}}
\def \NBT {\widetilde{\NB}}
\def \K {\mathcal{K}}

\def \EB {\mathbf{E}}

\def \N {\mathbb{N}}

\def \ind {\mathbb{I}}

\newcommand{\inner}[2]{\left\langle #1, #2 \right\rangle}

\DeclareMathOperator*{\argmin}{argmin}

\DeclareMathOperator*{\sgn}{sign}
\DeclareMathOperator*{\msgn}{msign}
\DeclareMathOperator*{\diagop}{diag}

\DeclareMathOperator*{\rk}{rank}

\DeclareMathOperator*{\trace}{tr}

\newcommand{\norm}[1]{\left\Vert#1\right\Vert}

\newcommand{\abs}[1]{\left|#1\right|}
\newcommand{\mabs}[1]{\left|#1\right|_{\textnormal{m}}}
\newcommand{\tr}[1]{\trace\left(#1\right)}
\newcommand{\diag}[1]{\diagop\left(#1\right)}
\newcommand{\sign}[1]{\sgn\left(#1\right)}
\newcommand{\msign}[1]{\msgn\left(#1\right)}
\newcommand{\rank}[1]{\rk\left(#1\right)}

\def \eps {\bm{\epsilon}}

\def \bsigma {\bm{\sigma}}
\def \bSigma {\bm{\Sigma}}
\def \bnabla {\bm{\nabla}}
\def \bLambda {\bm{\Lambda}}
\def \op {\textnormal{op}}
\newcommand{\sqbrac}[1]{\left[#1\right]}
\newcommand{\brac}[1]{\left(#1\right)}
\newcommand{\cbrac}[1]{\left\{#1\right\}}

\crefname{ass}{Assumption}{Assumptions}
\crefname{defn}{Definition}{Definitions}
\crefname{lem}{Lemma}{Lemmas}
\crefname{thm}{Theorem}{Theorems}
\crefname{appendix}{Appendix}{Appendices}

\def\ceil#1{\left\lceil #1 \right\rceil}

\def\1{\bm{1}}
\def\0{\bm{0}}

\usepackage{etoolbox}

\newcommand{\squeezeTheorem}{\vspace{-2.5pt}} 

\BeforeBeginEnvironment{thm}{\squeezeTheorem}
\AfterEndEnvironment{thm}{\squeezeTheorem}
\BeforeBeginEnvironment{lem}{\squeezeTheorem}
\AfterEndEnvironment{lem}{\squeezeTheorem}
\BeforeBeginEnvironment{ass}{\squeezeTheorem}
\AfterEndEnvironment{ass}{\squeezeTheorem}
\BeforeBeginEnvironment{defn}{\squeezeTheorem}
\AfterEndEnvironment{defn}{\squeezeTheorem}
\BeforeBeginEnvironment{rem}{\squeezeTheorem}
\AfterEndEnvironment{rem}{\squeezeTheorem}
\BeforeBeginEnvironment{cor}{\squeezeTheorem}
\AfterEndEnvironment{cor}{\squeezeTheorem}

\allowdisplaybreaks[4]

\usepackage{hyperref}

\makeatletter
\let\oldass\ass
\let\endoldass\endass
\renewenvironment{ass}[1][]{%
  \phantomsection 
  \oldass[#1]%
}{%
  \endoldass
}
\makeatother